\newcommand{\myparagraph}[1]{\paragraph{#1.}\hspace{-0.8em}}  %
\newcommand{\myparagraphsmall}[1]{\vspace*{0.5em}\par\noindent\textbf{{#1}.}}
\newcommand{\printvalues}{topsep=\the\topsep; itemsep=\the\itemsep; parsep=\the\parsep; partopsep=\the\partopsep}
\newcommand{\new}[1]{#1}  %
\algrenewcommand\algorithmicrequire{\textbf{Input:}}
\algrenewcommand\algorithmicensure{\textbf{Output:}}
\newlength{\continueindent}
\newcommand*{\ALG@customparshape}{\parshape 2 \leftmargin \linewidth \dimexpr\ALG@tlm+\continueindent\relax \dimexpr\linewidth+\leftmargin-\ALG@tlm-\continueindent\relax}
\apptocmd{\ALG@beginblock}{\ALG@customparshape}{}{\errmessage{failed to patch}}
\algnewcommand\algorithmicpardo{\textbf{in parallel do}}
\definecolor{Code}{rgb}{0,0,0} 
\definecolor{Decorators}{rgb}{0.5,0.5,0.5} 
\definecolor{Numbers}{rgb}{0.5,0,0} 
\definecolor{MatchingBrackets}{rgb}{0.25,0.5,0.5} 
\definecolor{Keywords}{rgb}{0,0,1} 
\definecolor{self}{rgb}{0,0,0} 
\definecolor{Strings}{rgb}{0,0.63,0} 
\definecolor{Comments}{rgb}{0.63,0,0} 
\definecolor{Backquotes}{rgb}{0,0,0} 
\definecolor{Classname}{rgb}{0,0,0} 
\definecolor{FunctionName}{rgb}{0,0,0} 
\definecolor{Operators}{rgb}{0,0,0} 
\definecolor{Background}{rgb}{0.99,0.99,0.99} 
\small\setstretch{1}, 
\definecolor{puorange}{rgb}{0.80,0.20,0}
\definecolor{bluegray}{rgb}{0.04,0,0.7}
\definecolor{greengray}{rgb}{0.05,0.50,0.15}
\definecolor{darkbrown}{rgb}{0.40,0.2,0.05}
\definecolor{darkcyan}{rgb}{0,0.4,1}
\definecolor{black}{rgb}{0,0,0}
\definecolor{grey}{rgb}{0.93,0.93,0.93}
\newtheorem{theorem}{Theorem}
\newtheorem{lemma}[theorem]{Lemma}
 \newtheorem{claim}[theorem]{Claim}
\newtheorem{proposition}[theorem]{Proposition}
\newtheorem{corollary}[theorem]{Corollary}
\newtheorem{remark}[theorem]{Remark}
\newtheorem{definition}[theorem]{Definition}
\newtheorem{assumption}[theorem]{Assumption}
\crefname{section}{Section}{Sections}
\crefname{appendix}{Appendix}{Appendices}
\crefname{theorem}{Theorem}{Theorems}
\crefname{lemma}{Lemma}{Lemmas}
\crefname{corollary}{Corollary}{Corollaries}
\crefname{proposition}{Proposition}{Propositions}
\crefname{definition}{Definition}{Definitions}
\Crefname{algorithm}{Algorithm}{Algorithms}
\crefname{figure}{Figure}{Figures}
\crefname{table}{Table}{Tables} %
\newcommand{\No}{-}
\newcommand{\Yes}{Yes}
\newcommand{\fedavg}{FedAvg\xspace}
\newcommand{\rbfedavg}{RFA\xspace}
\newcommand{\niidterm}{\Omega}
\newcommand \reals {\mathbb{R}}
\newcommand \inv {^{-1}} %
\newcommand \T {^{\top}}	%
\newcommand \bigO {\mathcal{O}}
\newcommand \Ccal {\mathcal{C}}
\newcommand \prob {\mathbb{P}}
\newcommand \expect {\mathbb{E}}
\newcommand \pow [1]{^{(#1)}}
\DeclarePairedDelimiterX{\inp}[2]{\langle}{\rangle}{#1, #2} %
\DeclarePairedDelimiterX{\norm}[1]{\Vert}{\Vert}{#1} %
\DeclarePairedDelimiterX{\normsq}[1]{\Vert}{\Vert^2}{#1} %
\newcommand \norma [2]{\Vert #2 \Vert_{#1}}
\newcommand \eps \epsilon
\newcommand \argmin {\operatorname*{arg\,min}} %
\newcommand \argmax {\operatorname*{arg\,max}} %
\newcommand \conv {\operatorname*{conv}} %
\newcommand \grad {\nabla}
\DeclareMathOperator\diam{diam}
 \declaretheoremstyle[
notefont=\bfseries, notebraces={}{},
bodyfont=\normalfont\itshape,
headformat=\NAME \NOTE
]{nopar}
\title{Robust Aggregation for Federated Learning}
\author{
Krishna~Pillutla$^1$ $\qquad\qquad\qquad$
        Sham M. Kakade$^2$ $\qquad\qquad\qquad$
        Zaid Harchaoui$^1$ \\
    {\small
    $^1$ University of Washington $\qquad$
    $^2$ Harvard University
    }
}
\date{\vspace{-2em}}
\begin{document}
\maketitle

\begin{abstract}

Federated learning is the centralized training of statistical models from decentralized data on mobile devices while preserving the privacy of each device. We present a robust aggregation approach to make federated learning robust to settings when a fraction of the devices may be sending corrupted updates to the server. The approach relies on a robust aggregation oracle based on the geometric median, which returns a robust aggregate using a constant number of iterations of a regular non-robust averaging oracle. The robust aggregation oracle is privacy-preserving, similar to the non-robust secure average oracle it builds upon.
We establish its convergence for least squares estimation of additive models. We provide experimental results with linear models and deep networks for three tasks in computer vision and natural language processing. The robust aggregation approach is agnostic to the level of corruption; it outperforms the classical aggregation approach in terms of robustness when the level of corruption is high, while being competitive in the regime of low corruption.
Two variants, a faster one with one-step robust aggregation and another one with on-device personalization, round off the paper. \end{abstract}

\section{Introduction} \label{sec:intro}
Federated learning is a key paradigm for machine learning 
and analytics on mobile, wearable and edge devices~\cite{mcmahan2017communication,kairouz2019flsurvey}
over wireless networks of 5G and beyond as well as edge networks and the internet of things. 
The paradigm has found widespread applications ranging from mobile apps deployed on millions of devices~\citep{yang2018applied,ammad2019federated}, to sensitive healthcare applications~\citep{pantelopoulos2009survey,huang2019patient}. 

In federated learning, a number of devices with privacy-sensitive data collaboratively optimize a machine learning model under the orchestration of a central server, while keeping the data fully decentralized and private.
Recent work has looked beyond supervised learning to domains such as data analytics but also semi-, self- and un-supervised learning, transfer learning, meta learning, and reinforcement learning~\cite{kairouz2019flsurvey,ren2019federated,lin2020collaborative,zhuang2021collaborative}.

We study a question relevant in all these areas: robustness to corrupted updates. Federated learning relies on aggregation of updates contributed by participating devices, where the aggregation is privacy-preserving. 
Sensitivity to corrupted updates, caused either by adversaries intending to attack the system or due to failures in low-cost hardware, is a vulnerability of the usual approach. The standard arithmetic mean aggregation in federated learning is not robust to corruptions, in the sense that even a single corrupted update in a round is sufficient to degrade the global model for all devices. In one dimension, the median is an attractive aggregate for its robustness to outliers. We adopt this approach to federated learning by considering a classical multidimensional generalization of the median, known variously as the geometric or spatial or $L_1$ median~\cite{maronna:etal:2006}. 

Our robust approach preserves the privacy of the device updates by iteratively invoking the secure multi-party computation primitives used in typical non-robust federated learning~\cite{bonawitz2017practical,bell2020secure}. 
A device's updates are information theoretically protected in that they are computationally indistinguishable from random noise and the sensitivity of the final aggregate to the contribution of each device is bounded. Our approach is scalable, since the underlying secure aggregation algorithms are implemented in production systems across millions of mobile users across the planet~\cite{bonawitz2019towards}.
The approach is communication-efficient, requiring a modest 1-3$\times$ the communication cost of the non-robust setting to compute the non-linear aggregate in a privacy-preserving manner. 

\myparagraph{Contributions}
The main take-away message of this work is:
\begin{quoting}[indentfirst=false,vskip=0.3em,leftmargin=1em,rightmargin=1em]
	{\em Federated learning can be made robust to corrupted updates 
	by replacing the weighted arithmetic mean
	aggregation with an approximate geometric median
	at {1-3 times} the communication cost.}
\end{quoting}
To this end, we make the following concrete contributions.

\begin{enumerate}[label=(\alph*),nolistsep,leftmargin=\widthof{ (a) }]
\item{\textit{Robust Aggregation:}}
We design a novel robust aggregation oracle based  
on the classical geometric median. 
We analyze the convergence of the resulting federated learning algorithm, \rbfedavg,
for least-squares estimation
and show that the proposed method is robust to
update corruption in up to half the devices
in federated learning with bounded heterogeneity.
We also describe an extension of the framework to handle 
arbitrary heterogeneity via personalization.

\item{\textit{Algorithmic Implementation:}}
We show how to implement this robust aggregation oracle
in a practical and privacy-preserving manner.
This relies on an alternating minimization algorithm 
which empirically exhibits rapid convergence.
This algorithm can be interpreted as a numerically 
stable version of the classical algorithm of Weiszfeld~\cite{weiszfeld1937point},
thus shedding new light on it. 

\item{\textit{Numerical Simulations:}}
We demonstrate the effectiveness of our framework for
data corruption and parameter update corruption,
on federated learning tasks from computer vision and natural language processing, with linear models as well as convolutional and recurrent neural networks. In particular, our results show that 
the proposed \rbfedavg algorithm 
(i) outperforms the standard \fedavg~\citep{mcmahan2017communication}, in high corruption 
and 
(ii) nearly matches the performance of the \fedavg in low corruption, both at
{\em 1-3 times the communication cost}.
Moreover, the proposed algorithm is agnostic to the 
actual level of corruption in the problem instance.

\end{enumerate}
\noindent
We open source an implementation of the proposed approach in TensorFlow Federated~\cite{rfa_tff};
cf. Appendix~\ref{sec:a:software} for a template implementation.
The Python code and 
scripts used to reproduce experimental results are publicly available online~\cite{rfa_repo}. 

\myparagraph{Overview}
\Cref{sec:related} describes related work, and
\Cref{sec:setup} describes the problem formulation and tradeoffs of robustness.
\Cref{sec:aggr} proposes a robust aggregation oracle 
and presents a convergence analysis of the resulting robust federated learning algorithm.
Finally, \Cref{sec:expt} gives comprehensive numerical simulations 
demonstrating the robustness of the proposed federated learning algorithm compared to standard baselines.

\section{Related Work} \label{sec:related}
\textit{Federated Learning} was introduced in~\citep{mcmahan2017communication}
as a distributed optimization approach to handle on-device machine learning, 
with secure multi-party averaging algorithms given in~\citep{bonawitz2017practical,balle2020hypothesis}.
Extensions were proposed in~\citep{smith2017federated,mohri2019agnostic,sahu2018convergence,karimireddy2020scaffold,dinh2020personalized,fallah2020personalized,laguel2021superquantile,avdiukin2021federated,reddi2021adaptive}; see also the recent surveys~\citep{li2020federated,kairouz2019flsurvey}.
We address robustness to corrupted updates, which is broadly applicable in these settings.

\textit{Distributed optimization} has a long history~\citep{bertsekas1989parallel}.
Recent work includes primal-dual frameworks~\citep{smith2018cocoa,ma2017distributed} 
and variants suited to
decentralized~\citep{he2018cola}, and
asynchronous~\citep{leblond2018improved} settings.
From the lens of \textit{learning in networks}~\citep{sayed2014adaptation},
federated learning comprises a star network where 
agents (i.e., devices) with private data are connected to a server with no data,
which orchestrates the cooperative learning.
Further, for privacy, model updates from individual agents
cannot be shared directly, but must be aggregated securely.

\textit{Robust estimation} was pioneered by Huber~\cite{huber1964,huber2011robust}.
Robust median-of-means were introduced in
\citep{nemirovsky1983problem}, with follow ups
in~\citep{minsker2015geometric,hsu2016loss,lugosi2016risk,lecue2017robust,lugosi2017regularization}.
Robust mean estimation, in particular, received much attention~\citep{diakonikolas2016robust,minsker2018uniform,cheng2019high}.
Robust estimation in networks was considered in~\cite{al2017robust,yu2019robust,chen2019resilient}.
These works consider the statistics of robust estimation in the i.i.d. case, 
while we focus on distributed optimization with privacy preservation.

\textit{Byzantine robustness}, resilience to
arbitrary behavior of some devices~\citep{lamport1982byzantine},
was studied
in distributed optimization with gradient aggregation~\citep{blanchard2017machine,chen2017distributed,chen2018draco,yin2018byzantine,alistarh2018byzantine,cao2019distributed}.
Byzantine robustness of federated learning is a priori not possible without additional assumptions because the secure multi-party computation protocols require faithful  participation of the devices.
Thus, we consider a more nuanced and less adversarial corruption model
where devices participate faithfully in the aggregation loop;
see \Cref{sec:setup} for practical examples.
Further, it is unclear how to securely implement the nonlinear aggregation algorithms of these works.
Lastly, the use of, e.g., secure enclaves~\citep{subramanyan2017formal}
in conjunction with our approach could guarantee Byzantine robustness in federated learning.
We aggregate {\em model parameters} in a robust manner,  which is more suited to the federated setting. We note that \cite{li2019rsa} also aggregate model parameters rather than gradients by framing the problem in terms of consensus optimization. However, their algorithm requires devices to be always available and participate in multiple rounds, which is not practical in the federated setting~\cite{kairouz2019flsurvey}.

Weiszfeld's algorithm~\citep{weiszfeld1937point} to \textit{compute the geometric median}, 
has received much attention
\citep{kuhn1973note,vardi2001modified,beck15weiszfeld}. The Weiszfeld algorithm is also known to exhibit asymptotic linear convergence~\cite{katz1974local}.
However, unlike these variants, ours is numerically stable.
A theoretical proposal of a near-linear time algorithm for the
geometric median was recently explored in~\citep{cohen2016linear}. 
Frameworks to guarantee \textit{privacy} of user data include
differential privacy~\citep{dwork2006calibrating,kairouz2021distributed} 
and homomorphic encryption~\citep{gentry2010computing}.
These directions are orthogonal to ours, and could be 
used in conjunction.
See~\citep{bonawitz2017practical,li2020federated,kairouz2019flsurvey} for a broader discussion.

\section{Problem Setup: Federated Learning with Corruptions} \label{sec:setup}
We begin this section by recalling the setup of federated learning (without corruption) and the standard \fedavg algorithm~\cite{mcmahan2017communication} in \Cref{sec:setup:fl}. 
We then formally setup our corruption model and discuss the trade-offs introduced by requiring robustness to corrupted updates in \Cref{sec:setup:tradeoffs}.

\subsection{Federated Learning Setup and Review} \label{sec:setup:fl}
Federated learning consists of $n$
 client devices which collaboratively 
train a machine learning model under the orchestration of a central server or a fusion center~\cite{mcmahan2017communication,kairouz2019flsurvey}. The data is local to the client devices while the job of the server is to orchestrate the training. 

We consider a typical federated learning setting where each device $i$
has a distribution $D_i$ over some data space such that the data on the client is sampled i.i.d. from $D_i$.
Let the vector $w \in \reals^d$ denote the parameters of a 
(supervised) learning model and let $f(w; z)$ denote the loss of model $w$ on input-output pair $z$, such as the mean-squared-error loss. Then, the objective function of device $i$ is $F_i(w) = \expect_{z \sim D_i} \left[ f(w; z)\right]$. 

Federated learning aims to find a model $w^\star$ that minimizes the average objective across all the devices, 
\begin{align} \label{eq:fl:main}
	 \min_{w \in \reals^d} \left[ F(w) := 
	 \sum_{i=1}^n \alpha_i \, F_i(w) \right] \,,
\end{align}
where device $i$ is weighted by $\alpha_i > 0$.
In practice, the weight $\alpha_i$ is chosen proportional to 
the amount of data on device $i$. 
For instance, in an empirical risk minimization setting, each $D_i$ is the uniform distribution over a finite set $\{z_{i, 1}, \cdots, z_{i, N_i}\}$ of size $N_i$. It is common practice to choose $\alpha_i = N_i/N$ where $N=\sum_{i=1}^n N_i$ so that the objective $F(w) = (1/N) \sum_{i=1}^n \sum_{j=1}^{N_i} f(w; z_{i, j})$ is simply the unweighted average over all samples from all $n$ devices. 

\myparagraph{Federated Learning Algorithms}
Typical federated learning algorithms run in synchronized rounds of communication between 
the server and the devices with some local computation on the devices based on their local data, and aggregation of these updates to update the server model.
The de facto standard training algorithm is  \fedavg~\cite{mcmahan2017communication}, which runs as follows.
\begin{enumerate}[label=(\alph*),nolistsep,leftmargin=\widthof{ (a) }]
    \item 
    The server samples a set $S_t$ of $m$ clients from $[n]$ and broadcasts the current model $w\pow{t}$ to these clients. 
    \item 
    Staring from $w_{i, 0}\pow{t} = w\pow{t}$, each client $i \in S_t$ makes $\tau$ local gradient or stochastic gradient descent steps for $k=0, \cdots, \tau-1$ with a learning rate $\gamma$:
    \begin{align}  \label{eq:rfa:fedavg:local-update}
        w_{i, k+1}\pow{t} = w_{i, k}\pow{t} - \gamma \grad F_i(w_{i, k}\pow{t}) \,.
    \end{align}
    \item Each device $i \in S_t$ sends to the server a vector $w_i\pow{t+1}$ which is simply the final iterate, i.e., $w_i\pow{t+1}=w_{i,\tau}\pow{t}$. The server updates its global model using the weighted average
    \begin{align} \label{eq:rfa:fedavg:averaging}
        w\pow{t+1} = \frac{\sum_{i \in S_t} \alpha_i  w_{i}\pow{t+1}}{\sum_{i \in S_t} \alpha_i} \,.
    \end{align}
\end{enumerate}

The federated learning algorithm, and in particular, the choice of aggregation, impacts the following three factors~\cite{kairouz2019flsurvey,li2020federated,gafni2021federated}:
communication efficiency, privacy, and robustness.

\myparagraph{Communication Efficiency}
Besides the computation cost, the communication cost is an important parameter in distributed optimization. While communication is relatively fast in the datacenter, that is not the case of federated learning.  
The repeated exchange of massive models between the server and client devices over resource-limited wireless networks makes communication over the network more of a bottleneck in federated learning than local computation on the devices. 
Therefore, training algorithms should be able to trade-off more local computation for lower communication, similar to step (b) of \fedavg above. While the exact benefits (or lack thereof) of local steps is an active area of research, local steps have been found empirically to reduce the amount of communication required for a moderately accurate solution~\cite{mcmahan2017communication,wang2021field}.

Accordingly, we set aside the local computation cost for a first order approximation, and compare algorithms in terms of their total communication cost~\cite{kairouz2019flsurvey}. 
Since typical federated learning algorithms proceed in synchronized rounds of communication, we measure the complexity of the algorithms in terms of the number of communication rounds.

\myparagraph{Privacy}
While the privacy-sensitive data $z \sim D_i$ is kept local to the device, the model updates $w_i\pow{t+1}$ might also leak privacy. To add a further layer of privacy protection, the server is not allowed to inspect individual updates $w_i\pow{t+1}$ in the aggregation step (c); it can only access the aggregate $w\pow{t+1}$.

We make this precise through the notion of a \textit{secure average oracle}. 
	Given $m$ devices with each device $i$ containing $w_i \in \reals^d$ and a scalar $\beta_i > 0$, 
	a secure average oracle computes the average $\sum_{i=1}^m \beta_i w_i / \sum_{i=1}^m \beta_i$ 
	at a total communication of $\bigO(md + m \log m)$ bits 
	such that no $w_i$ or $\beta_i$ are revealed
	to either the server or any other device.
	
In practice, a secure average oracle is implemented using 
cryptographic protocols based on secure multi-party computation~\cite{bonawitz2017practical,bell2020secure}. These require a communication overhead of $O(m\log m)$ in addition to $O(md)$ cost of sending the $m$ vectors. First, the vector $\beta_i w_i$ is dimension-wise discretized on the ring $\mathbb{Z}^d_M$ of integers modulo $M$ in $d$-dimensions. Then, a noisy version $\tilde w_i$ is sent to the server, where the noise is designed to satisfy:
\begin{itemize}[nolistsep,leftmargin=\widthof{ (a)}]
\item correctness up to discretization, by ensuring  $\sum_{i=1}^m \tilde w_i \mod M = \sum_{i=1}^m \beta_i w_i \mod M$ with probability 1, and,
\item privacy preservation from honest-but-curious devices and server in the information theoretic sense, by ensuring that $\tilde w_i$ is computationally indistinguishable from $\zeta_i \sim \mathrm{Uniform}(\mathbb{Z}^d_M)$, irrespective of $w_i$ and $\beta_i$.
\end{itemize}
As a result, we get the correct average (up to discretization) while not revealing any further information about a $w_i$ or $\beta_i$ to the server or other devices, beyond what can be inferred from the average.
Hence, no further information about the underlying data distribution $D_i$ is revealed either.
In this work, we assume for simplicity that the secure average oracle returns the exact update, i.e., we ignore the effects of discretization on the integer ring and modular wraparound. This assumption is reasonable for a large enough value of $M$.

\myparagraph{Robustness}
We would like a federated learning algorithm to be robust to corrupted updates contributed by malicious devices or hardware/software failures. 
\fedavg uses an arithmetic mean to aggregate the device updates in \eqref{eq:rfa:fedavg:averaging}, which is known to not be robust~\cite{huber1964}. 
This can be made precise by the notion of a breakdown point~\cite{donoho1983notion}, which is the smallest fraction of the points which need to be changed to cause the aggregate to take on arbitrary values. The breakdown point of the mean is 0, since only one point needs to changed to arbitrarily change the aggregate~\cite{maronna:etal:2006}. 
This means in federated learning that a single corrupted update, either due to an adversarial attack or a failure, can arbitrarily change the resulting aggregate in each round. We will give examples of adversarial corruptions in \Cref{sec:setup:tradeoffs}.

In the rest of this work, we aim to address the lack of robustness of \fedavg. A popular robust aggregation of scalars is the median rather than the mean. We investigate a multidimensional analogue of the median, while respecting the other two factors: communication efficiency and privacy. While the non-robust mean aggregation can be computed with secure multi-party computation via the secure average oracle, it is unclear if a robust aggregate can also satisfy this requirement.
We discuss this as well as other tradeoffs involving robustness in the next section.

\begin{table*}[t]
\captionsetup{singlelinecheck=off}
\caption{
\small{
Examples corruptions and
capability of an adversary they require, as measured
along the following axes:
	\textbf{Data write}, where a device $i \in \mathcal{C}$ 
		can replace its local distribution $D_i$ by any arbitrary distribution $\tilde D_i$; 
	\textbf{Model read}, where a device $i \in \mathcal{C}$ 
		can read the server model $w\pow{t}$ and replace its local distribution $D_i$  by an adaptive distribution $\tilde D_i\pow{t}$ depending on $w\pow{t}$;
	\textbf{Model write}, where a device $i \in \mathcal{C}$ 
		can return an arbitrary vector to the server for aggregation as in \eqref{eq:setup:corruption_model}, and,
	\textbf{Aggregation}, where a device $i \in \mathcal{C}$ 
		can behave arbitrarily during the computation of an iterative secure aggregate. 
The last column indicates whether the proposed RFA algorithm is robust to each type of corruption.
}
}
\label{table:corrupt:example}
\begin{center}
\begin{adjustbox}{max width=\linewidth}
\begin{tabular}{lccccc}
\toprule
\, \textbf{Corruption Type} & 
\textbf{Data write} & \textbf{Model read} & \textbf{Model write} & 
\textbf{Aggregation} & \textbf{RFA applicable?} \\
\midrule

\begin{tabular}{l} Non-adversarial \end{tabular} & 
    \No & \No & \No & \No & \checkmark \\

\begin{tabular}{l} Static data poisoning \end{tabular} & 
    \Yes & \No & \No & \No & \checkmark \\

\begin{tabular}{l} Adaptive data poisoning \end{tabular} & 
    \Yes & \Yes & \No & \No & \checkmark \\

\begin{tabular}{l} Update poisoning \end{tabular} & 
    \Yes & \Yes & \Yes & \No & \checkmark \\

\begin{tabular}{l} Byzantine \end{tabular} & 
    \Yes & \Yes & \Yes & \Yes & N/A \\

\bottomrule
\end{tabular}
\end{adjustbox}
\end{center}
\vskip -0.1in
\end{table*}

\subsection{Corruption Model and Trade-offs of Robustness} \label{sec:setup:tradeoffs}
We start with the corruption model used in this work. 
We allow a subset $\Ccal \subset [n]$ of \emph{corrupted devices} to, 
unbeknownst to the server, send arbitrary vectors $w_i\pow{t+1} \in \reals^d$ rather than the updated model $w_{i, \tau}\pow{t}$ from local data as expected by the server. Formally, we have, 
\begin{align} \label{eq:setup:corruption_model}
    w_i\pow{t+1} = 
    \begin{cases}
        w_{i, \tau}\pow{t}\,, & \text{ if } i \notin \Ccal,\\
        H_i\left(w\pow{t}, \{(w_{j, \tau}\pow{t}, D_{j})\}_{j \in S_t}\right)\, &\text{ if } i \in \Ccal,
    \end{cases}
\end{align}
where $H_i$ is an arbitrary $\reals^d$-valued function which is allowed to depend on the global model $w\pow{t}$, the uncorrupted updates $w_{j, \tau}\pow{t}$ as well as the data distributions $D_{j}$ of each device $j \in S_t$. 

This encompasses situations where the corrupted devices are individually or collectively trying to ``attack'' the global model, that is, reduce its predictive power over uncorrupted data. 
We define the \emph{corruption level} $\rho$ as the total fraction of the weight of the corrupted devices:
\begin{align}
    \rho = \frac{\sum_{i \in \Ccal} \alpha_i}{\sum_{i=1}^n \alpha_i} \,.
\end{align}

Since the corrupted devices can only harm the global model through the updates they contribute in the aggregation step, we aim to robustify the aggregation in federated learning. 
However, it turns out that robustness is not directly compatible with 
the two other desiderata of federated learning, namely communication efficiency and privacy.

\myparagraph{The Tension Between Robustness, Communication and Privacy}
We first argue that any federated learning algorithm can only have two out of the three of robustness, communication and privacy under the existing techniques of secure multi-party computation. 
The standard approach of \fedavg is communication-efficient and privacy-preserving but not robust, as we discussed earlier. 
In fact, any aggregation scheme $A(w_1, \cdots, w_m)$ which is a linear function of $w_1, \cdots, w_m$ is similarly non-robust.
Therefore, any robust aggregate $A$ must be a non-linear function of the vectors it aggregates. 

The approach of sending the updates to the server at a communication of $O(md)$ and utilizing one of the many robust aggregates studied in the literature~\cite[e.g.][]{chen2017distributed,yin2018byzantine,alistarh2018byzantine} has robustness and communication efficiency but not privacy. If we try to make it privacy-preserving, however, we lose communication efficiency. Indeed, the secure multi-party computation primitives based on secret sharing, upon which privacy-preservation is built, are communication efficient only for linear functions of the inputs~\cite{evans2018pragmatic}. The additional $O(m \log m)$ overhead of secure averaging for linear functions becomes $\Omega(m d \log m)$ for general non-linear functions required for robustness; this makes it impractical for large-scale systems~\cite{bonawitz2017practical}. 
Therefore, one cannot have both communication efficiency and privacy preservation along with robustness. 

In this work, we strike a compromise between robustness, communication and privacy. We will approximate a non-linear robust aggregate as an \emph{iterative secure aggregate}, i.e., as a sequence of weighted averages, computed with a secure average oracle with weights being adaptively updated. 
\begin{definition} \label{def:setting:secure_agg:iterative}
    A function $A:(\reals^{d})^m \to \reals^d$ is said to be an iterative secure aggregate of $w_1, \cdots, w_m$ with $R$ communication rounds 
    and initial iterate $v\pow{0}$ if 
    for $r=0, \cdots, R-1$, there exist weights $\beta_1\pow{r}, \cdots, \beta_m\pow{r}$ such that
    \begin{enumerate}[label=(\roman*),nolistsep,leftmargin=\widthof{ (iii) }]
        \item  $\beta_i\pow{r}$ depends only on $v\pow{r}$ and $w_i$,
        \item $v\pow{r+1} = \sum_{i=1}^m \beta_i\pow{r} w_i / \sum_{i=1}^m \beta_i\pow{r}$, and, 
        \item $A(w_1, \cdots w_m) = v\pow{R}$.
    \end{enumerate}
    Further, the iterative secure aggregate is said to be $s$-privacy preserving for some $s\in(0, 1)$ if 
    \begin{enumerate}[resume,label=(\roman*),nolistsep,leftmargin=\widthof{ (a) }]
    \item $\beta_i\pow{r} / \sum_{j=1}^m \beta_{j}\pow{r} \le s$ for all $i \in [m]$ and $r \in [R]$. 
    \end{enumerate}
\end{definition}
If we have an iterative secure aggregate with $R$ communication rounds which is also robust, we gain robustness at a $R$-fold increase in communication cost.
Condition (iv) ensures privacy preservation because it reveals only weighted averages with weights at most $s$, so a user's update is only available after being mixed with those from a large cohort of devices.

\myparagraph{The Tension Between Robustness and Heterogeneity}
Heterogeneity is a key property of federated learning. The distribution $D_i$ of device $i$ can be quite different from the distribution $D_j$ of some other device $j$, reflecting the heterogeneous data generated by a diverse set of users. 

To analyze the effect of heterogeneity on robustness, consider the simplified scenario of robust mean estimation in Huber's contamination model~\cite{huber1964}. Here, we wish to estimate the mean $\mu \in \reals^d$ given samples $w_1, \cdots, w_m \sim (1-\rho)\mathcal{N}(\mu, \sigma^2 I) + \rho Q$, where $Q$ denotes some outlier distribution that $\rho$-fraction of the points (designated as outliers) are drawn from. 
Any aggregate $\bar w$ must satisfy the lower bound 
$\normsq{\bar w - \mu} \ge \Omega\big(\sigma^2 \max\{\rho^2, d/m\}\big)$ with constant probability~\cite[Theorem 2.2]{chen2018robust}.
In the federated learning setting, more heterogeneity corresponds to a greater variance $\sigma^2$ among the inlier points, implying a larger error in mean estimation. This suggests a tension between robustness and heterogeneity, where increasing heterogeneity makes robust mean estimation harder in terms of $\ell_2$ error.

In this work, we strike a compromise between robustness and heterogeneity by considering a family $\mathcal{D}$ 
of allowed data distributions such that any device $i$ with $D_i \notin \mathcal{D}$ will be regarded as a corrupted device, i.e., $i \in \Ccal$. We will be able to guarantee convergence up to 
the degree of heterogeneity in $\mathcal{D}$; we call this $\mathrm{width}(\mathcal{D})$ and make it precise in Section~\ref{sec:aggr}. In the i.i.d. case, $\mathcal{D}$ is a singleton and $\mathrm{width}(\mathcal{D}) = 0$.

\myparagraph{Examples}
Next, we consider some examples of update corruption ---
see~\citep{kairouz2019flsurvey} for a comprehensive treatment.
Corrupted updates could be non-adversarial in nature, such as sensor malfunctions or hardware bugs in unreliable and heterogeneous
devices (e.g., mobile phones) which are outside the control of the orchestrating server.
On the other hand, we could also have adversarial corruptions of the following types:
\begin{enumerate}[label=(\alph*),nolistsep,leftmargin=\widthof{ (a) }]
\item \textit{Static data poisoning:}
The corrupted devices $\mathcal{C}$ are allowed to 
modify their training data prior to the start of 
the training, and the data is fixed thereafter. 
Formally, the objective function of device $i \in \mathcal{C}$ is now
$\tilde F_i(w) = \expect_{z \sim \tilde D_i}\left[f(w;z)\right]$
where $\tilde D_i$ has been modified from the original ${D}_i$.
These devices then participate in the local updates \eqref{eq:rfa:fedavg:local-update} with $\grad \tilde F_i$ rather than $\grad F_i$. 
We consider device $i$ to contribute corrupted updates only if
$\tilde D_i \notin \mathcal{D}$ (for instance, $\mathcal{D}$ is the set of natural RGB images).

\item{\textit{Adaptive data poisoning:}}
The corrupted devices $\mathcal{C}$
are allowed to modify their training data in each round of training
depending on the current model $w\pow{t}$. Concretely, the objective function of device $i \in \mathcal{C}$ in round $t$ is 
$\tilde F_i\pow{t}(w) = \expect_{z \sim \tilde D_i\pow{t}}\left[f(w;z)\right]$
where $\tilde D_i\pow{t}$ has been modified from the original ${D}_i$ using knowledge of $w\pow{t}$.
As previously, these devices then participate in the local updates \eqref{eq:rfa:fedavg:local-update} with $\grad \tilde F_i\pow{t}$ rather than $\grad F_i$ in round $t$. 

\item{\textit{Update Poisoning:}}
The corrupted devices can send an arbitrary vector to the server for aggregation, as described by \eqref{eq:setup:corruption_model} in its full generality. This setting subsumes all previous examples as special cases.

\end{enumerate}

The corruption model in \eqref{eq:setup:corruption_model} precludes 
the {\em Byzantine setting}~\cite[e.g.,][Sec. 5.1]{kairouz2019flsurvey}, which refers to the worst-case model where 
a corrupted client device $i \in \mathcal{C}$ can behave arbitrarily, such as for instance, changing the weights $\beta_i\pow{r}$ or the vector $w_i$ between each of the rounds of the iterative secure aggregate, as defined in Definition~\ref{def:setting:secure_agg:iterative}.
It is provably impossible to design 
a Byzantine-robust iterative secure aggregate in this sense. The examples listed above highlight the importance of robustness to the corruption model under consideration.

Table~\ref{table:corrupt:example} compares 
the various corruptions in terms of the capability of an adversary required 
to induce the corruption.

\section{Robust Aggregation and the \rbfedavg{} Algorithm} \label{sec:aggr}
\begin{algorithm}[t]
	\caption{The \rbfedavg Algorithm}
	\label{algo:rfa:main}
\begin{algorithmic}[1]
		\Require Initial iterate $w\pow{0}$,
		    number of communication rounds $T$, 
		    number of clients per round $m$, 
		    number of local updates $\tau$,
		    local step size $\gamma$, approximation threshold $\eps$
	    \For{$t=0, 1, \cdots, T-1$}
	        \State Sample $m$ clients from $[n]$ without replacement in $S_t$ \label{line:rfa:sample}
	        \For{each selected client $i \in S_t$ in parallel}
	            \State Initialize $w_{i, 0}\pow{t} = w\pow{t}$ \label{line:rfa:local-update-start}
	    	    \For{$k=0, \cdots, \tau-1$} \label{line:rfa:local}
	    	        \State Sample data $z_{i, k}\pow{t} \sim D_i$
	    	        \State Update $w_{i, k+1}\pow{t} = w_{i, k}\pow{t} - \gamma \grad f(w_{i, k}\pow{t}; z_{i, k}\pow{t})$ 
	    	        \State Set $w_i\pow{t+1} = w_{i, \tau}\pow{t}$ 
	    	            \label{line:algo:rfa:local-update-final}
	    	    \EndFor
	    	\EndFor
	    	\State $w\pow{t+1} = \mathrm{GM}\big( (w_i\pow{t+1})_{i\in S_t}, (\alpha_i)_{i \in S_t}, \eps\big)$ (Algo.~\ref{algo:rfa:weiszfeld})
	         \label{line:rfa:aggregation}
	    \EndFor
	    \State \Return $w_T$
\end{algorithmic}
\end{algorithm}

In this section, we design a robust aggregation oracle
and analyze the convergence of the resulting 
federated algorithm.

\myparagraph{Robust Aggregation with the Geometric Median}
The geometric median (GM) of $w_1,\cdots, w_m \in \reals^d$ 
with weights $\alpha_1, \cdots, \alpha_m > 0$ 
is the minimizer of
\begin{align} \label{eq:geom_med:nonsmooth}
	g(v) := \sum_{i=1}^m \alpha_i \norm{v-w_i} \,,
\end{align}
where $\norm{\cdot}=\norma{2}{{\cdot}}$ is the Euclidean norm.
As a robust aggregation oracle, we use an
$\eps$-approximate minimizer $\widehat v$ of $g$ which satisfies $g(\widehat v) - \min_z g(v) \le \eps$.
We denoted it by $\widehat v = \mathrm{GM}\left( (w_i)_{i=1}^m, (\alpha_i)_{i=1}^m, \eps \right)$.
Further, when $\alpha_i = 1/m$, we write
$\mathrm{GM}\left( (w_i)_{i=1}^m, \eps \right)$.

The GM has an optimal breakdown point of 1/2~\citep{lopuhaa1991breakdown}. 
That is, to get the geometric median to equal an arbitrary point, at least half the points (in total weight) must be modified.
We assume that $w_1, \cdots w_m$ are non-collinear, which is reasonable in the federated setting. 
Then, $g$ admits a unique minimizer $v^\star$.
Further, we assume $\sum_i \alpha_i = 1$ w.l.o.g.\footnote{
One could apply the results to $\widetilde g(v) := g(v) / \sum_{i=1}^m \alpha_i$.
} %

\myparagraph{Robust Federated Aggregation: The \rbfedavg Algorithm}
The \rbfedavg algorithm is obtained by replacing the mean aggregation of \fedavg with this GM-based robust aggregation oracle -- the full algorithm is given in~\Cref{algo:rfa:main}.
Similar to \fedavg, \rbfedavg also trades-off some communication for local computation by running multiple local steps in line~\ref{line:rfa:local}. The communication efficiency and privacy preservation of \rbfedavg follow from computing the GM as an iterative secure aggregate, which we turn to next. 
Note that \rbfedavg is agnostic to the \emph{actual} level of corruption in the problem and the aggregation is robust regardless of the convexity of the local objectives $F_i$.

\myparagraph{Geometric Median as an Iterative Secure Aggregate}
While the GM is a natural robust aggregation oracle, the key challenge in the federated setting is to implement it as an iterative secure aggregate. Our approach, given in \Cref{algo:rfa:weiszfeld}, 
iteratively computes a new weight $\beta_i\pow{r} \propto 1/\norm{v\pow{r} - w_i}$, up to a tolerance $\nu > 0$, whose role is to prevent division by zero. This endows the algorithm with greater stability. We call it the smoothed Weiszfeld algorithm as it is a variation of Weiszfeld's classical algorithm~\cite{weiszfeld1937point}. 
The smoothed Weiszfeld algorithm satisfies the following convergence guarantee, proved in Appendix~\ref{sec:a:weiszfeld}.

\begin{proposition} \label{prop:weiszfeld:main}
	The iterate $v\pow{R}$ of \Cref{algo:rfa:weiszfeld}
	with input $v\pow{0} \in \conv\{w_1, \cdots, w_m\}$ and $\nu > 0$ satisfies
	\[
		g(v\pow{R}) - g(v^\star) 
			\le \frac{2 \normsq{v\pow{0} - v^\star}}{\overline\nu R} + \frac{\nu}{2} \,,
	\]
	where $v^\star = \argmin g$ and 
	$\overline \nu =  \min_{r \in [R], i\in[m]} \nu \lor \norm{v\pow{r-1} - w_i} \ge \nu$.
	Furthermore, if 
	$0 < \nu \le \min_{i=1,\cdots, m} \norm{v^\star - w_i}$, then it holds that 
	$
		g(v\pow{R}) - g(v^\star) \le {2 \normsq{v\pow{0} - v^\star}}/{\overline \nu R} \,.
	$
\end{proposition}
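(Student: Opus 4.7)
My plan is to recognize Algorithm~\ref{algo:algos:weiszfeld} as a majorization--minimization (MM) scheme on the smoothed surrogate $g_\nu$, apply the standard convex MM rate argument to bound $g_\nu(z^{(t)}) - g_\nu(z^\star)$, and finally transfer the bound back to $g$ using the elementary two-sided sandwich $g \le g_\nu \le g + \nu/2$.

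\textbf{Step 1 (MM interpretation).} First I would verify that $\eta_k^{(t)} = \nu \lor \norm{z^{(t)} - w_k}$ is exactly the minimizer of $G(z^{(t)}, \cdot)$ over $\mathcal{E}_\nu$, and that at this point $G(z^{(t)}, \eta^{(t)}) = g_\nu(z^{(t)})$. Consequently the quadratic $M_t(z) := G(z, \eta^{(t)})$ is a global majorizer of $g_\nu$ that is tight at $z^{(t)}$. Setting $\lambda_t := \sum_k \alpha_k/\eta_k^{(t)}$, the function $M_t$ is $\lambda_t$-strongly convex, so the $z$-update minimizes $M_t$; a short calculation identifies this minimizer with the weighted average appearing in Algorithm~\ref{algo:algos:weiszfeld} and also with $z^{(t)} - \lambda_t^{-1}\nabla g_\nu(z^{(t)})$. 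Because $\eta_k^{(s)} \ge \overline{\nu}$ for every $s \le t-1$ by definition of $\overline{\nu}$ and $\sum_k \alpha_k = 1$, one obtains the uniform step-size bound $\lambda_s \le 1/\overline{\nu}$ throughout the relevant range.

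\textbf{Step 2 (rate on $g_\nu$).} Two ingredients combine. The MM sandwich yields the descent inequality $g_\nu(z^{(s+1)}) \le M_s(z^{(s+1)}) = M_s(z^{(s)}) - (\lambda_s/2)\normsq{z^{(s+1)} - z^{(s)}} = g_\nu(z^{(s)}) - (\lambda_s/2)\normsq{z^{(s+1)} - z^{(s)}}$, so $g_\nu(z^{(s)})$ is non-increasing in $s$. Convexity of $g_\nu$ together with $\nabla g_\nu(z^{(s)}) = \lambda_s(z^{(s)} - z^{(s+1)})$ gives $g_\nu(z^{(s)}) - g_\nu(z^\star) \le \lambda_s \langle z^{(s)} - z^{(s+1)}, z^{(s)} - z^\star\rangle$. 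Expanding the inner product via the polarization identity $2\langle a,b\rangle = \normsq{a} + \normsq{b} - \normsq{a-b}$ (with $a = z^{(s)} - z^{(s+1)}$, $b = z^{(s)} - z^\star$) and adding to the descent bound causes the $\normsq{z^{(s)} - z^{(s+1)}}$ cross term to cancel exactly, yielding
\[
g_\nu(z^{(s+1)}) - g_\nu(z^\star) \le \frac{\lambda_s}{2}\bigl(\normsq{z^{(s)} - z^\star} - \normsq{z^{(s+1)} - z^\star}\bigr).
\]
Applying $\lambda_s \le 1/\overline{\nu}$, telescoping over $s = 0, \dots, t-1$, and invoking monotone decrease of $g_\nu(z^{(s)})$ then produces the announced $O(\normsq{z^{(0)} - z^\star}/(\overline{\nu}\,t))$ rate on $g_\nu(z^{(t)}) - g_\nu(z^\star)$.

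\textbf{Step 3 (transfer to $g$ and the coincidence case).} The elementary pointwise bounds $g \le g_\nu \le g + \nu/2$ are immediate from checking the two branches of $\norm{\cdot}_{(\nu)}$, and give $g(z^{(t)}) - g(z^\star) \le [g_\nu(z^{(t)}) - g_\nu(z^\star)] + \nu/2$, producing the first displayed bound. For the sharper second claim, when $\nu \le \min_k \norm{z^\star - w_k}$, every summand $\norm{z^\star - w_k}_{(\nu)}$ collapses to $\norm{z^\star - w_k}$, so $g_\nu(z^\star) = g(z^\star)$; combined with the global inequality $g_\nu \ge g$ this shows that $z^\star$ also minimizes $g_\nu$, and the additive $\nu/2$ slack disappears.

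\textbf{Main obstacle.} The technical crux is arranging the descent and convexity inequalities so that the $\normsq{z^{(s)} - z^{(s+1)}}$ cross term cancels exactly --- this is what converts one-step progress into a telescoping sum in $\normsq{z^{(\cdot)} - z^\star}$. Everything else (the uniform step-size bound $\lambda_s \le 1/\overline{\nu}$ coming from the definition of $\overline{\nu}$, the $\nu/2$ smoothing slack, and the coincidence-free regime) is then routine bookkeeping of constants.
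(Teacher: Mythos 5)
Your proof is correct, and it reaches the stated bound by a genuinely different final argument than the paper's. Both proofs share the same setup: the surrogate $G(z,\eta)$, the fact that $\eta\pow{t}$ is the partial minimizer making $M_t(\cdot)=G(\cdot,\eta\pow{t})$ a majorizer of $g_\nu$ tight at $z\pow{t}$, the identity $\grad g_\nu(z\pow{t})=L\pow{t}(z\pow{t}-z\pow{t+1})$, and the transfer to $g$ via $g\le g_\nu\le g+\nu/2$ (including the collapse $g_\nu(z^\star)=g(z^\star)$ in the coincidence-free regime). Where you diverge is the rate argument. The paper follows the classical Nesterov-style route: descent $g_\nu(z\pow{t+1})\le g_\nu(z\pow{t})-\tfrac{1}{2L\pow{t}}\normsq{\grad g_\nu(z\pow{t})}$, Cauchy--Schwarz to get $\widetilde\Delta_t\le\norm{\grad g_\nu(z\pow{t})}\norm{z\pow{t}-z^\star}$, a separate contraction lemma guaranteeing $\norm{z\pow{t}-z^\star}\le\norm{z\pow{0}-z^\star}$, and then telescoping the reciprocals $1/\widetilde\Delta_t$. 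You instead telescope the three-point inequality $g_\nu(z\pow{s+1})-g_\nu(z^\star)\le\tfrac{L\pow{s}}{2}\bigl(\normsq{z\pow{s}-z^\star}-\normsq{z\pow{s+1}-z^\star}\bigr)$ directly and finish with monotonicity of $g_\nu(z\pow{s})$. Amusingly, the paper derives exactly this inequality inside its contraction lemma but only uses it to conclude non-expansiveness; your route uses it as the main engine, avoids the reciprocal manipulation and the Cauchy--Schwarz step entirely, and in fact yields the sharper constant $\normsq{z\pow{0}-z^\star}/(2\overline\nu t)$ in place of $2\normsq{z\pow{0}-z^\star}/(\overline\nu t)$. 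One small point you should make explicit: replacing each $L\pow{s}$ by the uniform bound $1/\overline\nu$ in the telescoping sum is only valid while the brackets are nonnegative, i.e., while $g_\nu(z\pow{s+1})\ge g_\nu(z^\star)$; if $g_\nu$ ever dips below $g_\nu(z^\star)$, monotonicity keeps it there and the claimed bound follows trivially from the $\nu/2$ slack. This is the same case split the paper performs at the end of its proof, and it is routine, but it belongs in the argument.
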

For a $\eps$-approximate GM, we set $\nu = O(\eps)$
to get a $O(1/\eps^2)$ rate. However, if 
the GM $v^\star$ is not too close to any $w_i$, 
then the same algorithm automatically enjoys a faster $O(1/\eps)$ rate. The algorithm enjoys plausibly an even faster convergence rate \textit{locally}, and we leave this for future work.  

The proof relies on constructing a jointly convex surrogate $G: \reals^d \times \reals_{++}^m \to \reals$ 
defined using $\eta = (\eta_1, \cdots, \eta_m) \in \reals^m$ as 
\begin{align*} %
	G(v, \eta) := \frac{1}{2} \sum_{k=1}^m \alpha_k \left( \frac{\normsq{v - w_k}}{\eta_k} + \eta_k \right) \,.
\end{align*}
Instead of minimizing $g(v)$ directly using the equality $g(v) = \inf_{\eta > 0} G(v, \eta)$, we impose the constraint $\eta_i \ge \nu$ instead to avoid division by small numbers. 
The following alternating minimization leads to \Cref{algo:rfa:weiszfeld}:
\begin{align*}
	\eta\pow{r} = \argmin_{\eta \ge \nu} G(v\pow{r}, \eta) \,,
	\,  \text{and}, \,
	v\pow{r+1} = \argmin_{v \in \reals^d} G(v, \eta\pow{r}) \,. 
\end{align*}

Numerically, we find in \Cref{fig:expt:gm_algos_main}
that~\Cref{algo:rfa:weiszfeld} is rapidly convergent, 
giving a \textbf{high quality solution in 3 iterations}.
This ensures that the approximate GM as an iterative secure aggregate provides robustness at a modest 3$\times$ increase in communication cost over regular mean aggregation in \fedavg.

\begin{algorithm}[t]
	\caption{The Smoothed Weiszfeld Algorithm}
	\label{algo:rfa:weiszfeld}
\begin{algorithmic}[1]
		\Require $w_1, \cdots, w_m \in \reals^d$ with $w_i$ on device $i$, $\alpha_1, \cdots, \alpha_m > 0$, $\nu > 0$, budget $R$,
		$v\pow{0} \in \reals^d$,
	   		secure average oracle $\mathcal{A}$
	    \For{$r=0,1, \cdots, R-1$}
	    	\State Server broadcasts $v\pow{r}$ to devices $1, \cdots, m$
	    	\State Device $i$ computes $\beta\pow{r}_i = \alpha_i / (\nu \lor \norm{v\pow{r} - w_i})$ 
	   		\State $v\pow{r+1} \gets \left( {\sum_{i=1}^m \beta_i\pow{r} w_i} \right) / {\sum_{i=1}^m \beta_i\pow{r}}$
	   			using $\mathcal{A}$
	   			\label{line:weiszfeld:secure-average-oracle}
	    \EndFor
	    \Return $v\pow{R}$
\end{algorithmic}
\end{algorithm}

\myparagraph{Privacy Preservation}
While we can compute the geometric median as an iterate secure aggregate, privacy preservation also requires that the effective weights $\beta_i\pow{r} / \sum_{j} \beta_{j}\pow{r}$ are bounded away from 1 for each $i$. We show this holds for $m$ large.

\begin{proposition} \label{prop:rfa:privacy-preservation}
    Consider $\beta\pow{r}, v\pow{r}$ produced by \Cref{algo:rfa:weiszfeld} when
    given $w_1, \cdots w_m \in \reals^d$ with weights $\alpha_i = 1/m$ for each $i$ as inputs. Denote $B = \max_{i, j} \norm{w_i - w_{j}}$ and $\bar \nu$ as in \Cref{prop:weiszfeld:main}.
    Then, we have for all $i \in [m]$ and $r \in [R]$ that
    \[
        \frac{\beta_i\pow{r}}{\sum_{j=1}^m \beta_{j}\pow{r}} \le \frac{B}{B + (m-1)\bar\nu} \,.
    \]
\end{proposition}
\begin{proof}
    Since $v\pow{r} \in \conv\{w_1, \cdots, w_m\}$, 
    we have
    $\bar \nu \le \norm{v\pow{r} - w_i} \le B$. Hence, 
    $\alpha_i / B \le \beta_i\pow{r} \le \alpha_i / \bar \nu$ for each $i$ and $r$ and the proof follows.
\end{proof}

\begin{figure*}[tb!]
    \centering
    \includegraphics[width=\textwidth]{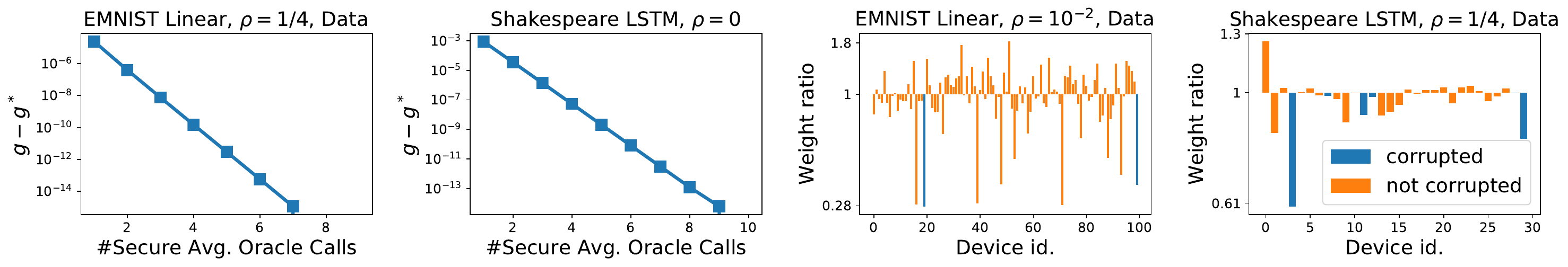}

     \caption{\small{\textbf{Left two}: Convergence of the smoothed Weiszfeld algorithm. \textbf{Right two}: Visualization of the re-weighting $\beta_i / \alpha_i$, 
     		where $\beta_i$ is the weight of $w_i$ in
			$\mathrm{GM}((w_i), (\alpha_i)) = \sum_i \beta_i w_i$.
	        See Appendix~\ref{sec:a:expt:gm_algos} for details.
     	}}
     	\label{fig:expt:gm_algos_main}
\end{figure*}

\subsection{Convergence Analysis of \rbfedavg}
We now present a convergence analysis of \rbfedavg
under two simplifying assumptions. 
First, we focus on least-squares fitting of additive models, as it allows us to leverage sharp analyses of SGD~\citep{bach2013non,jain2017markov,jain2017parallelizing}
and focus on the effect of the aggregation.
Second, we assume w.l.o.g. that each device is weighted by $\alpha_i = 1/n$ to avoid technicalities of random sums $\sum_{i \in S_t} \alpha_i$.
This assumption can be lifted with standard reductions; see~\Cref{remark:rfa:convergence}. 

\myparagraph{Setup}
We are interested in the supervised learning setting where $z_i \equiv (x_i, y_i) \sim D_i$ is an input-output pair. 
We assume that the output $y_i$ satisfies $\expect[y_i] = 0$ 
and $\expect[y_i^2] < \infty$.
Denote the marginal distribution of input $x_i$ as $D_{X, i}$.
The goal is to estimate the regression function $\overline{x} \mapsto \expect[y_i | x_i=\overline{x}]$
from a training sequence of independent copies of $(x_i, y_i) \sim D_i$ in each device. 
The corresponding objective is the square loss minimization
\begin{align} \label{eq:sgd:sq-loss}
F(w) &= \frac{1}{n} \sum_{i=1}^n F_i(w) \;,
	\quad \text{where} \quad
	F_i(w) = \frac{1}{2}\,\expect_{(x,y) \sim \new{D_i}} \left(y - w\T\phi(x)\right)^2\; \text{for all } i\in[n]\,.
\end{align}
Here,
$\phi(x) = (\phi_1(x), \dots, \phi_d(x)) \in \reals^d$
where $\phi_1, \dots, \phi_d$ 
are a fixed basis of measurable, centered functions. 
The basis functions may be nonlinear, thus encompassing
random feature approximations of kernel feature maps and 
pre-trained deep network feature representations.

We state our results under the following assumptions:
(a) the feature maps are bounded as $\norm{\phi(x)} \le R$ with probability one under $D_{X, i}$
for each device $i$; 
(b) each $F_i$ is $\mu$-strongly convex; 
(c) the additive model is well-specified on each device: for each device $i$,
there exists $w_i^\star \in \reals^d$ such that $y_i = \phi(x_i)\T w_i^\star + \zeta_i$  where $\zeta_i \sim \mathcal{N}(0, \sigma^2)$.
The second assumption is equivalent to 
requiring that $H_i = \grad^2 F_i(w) = \expect_{x \sim D_{X, i}}[\phi(x)\phi(x)\T]$, the covariance of $x$ on device $i$ has eigenvalues no smaller than $\mu$.

\myparagraph{Quantifying Heterogeneity}
We quantify the heterogeneity in the data distributions $D_i$ across devices in terms of the heterogeneity of marginals $D_{X, i}$ and of the conditional expectation $\expect[y_i | x_i = x] = \phi(x)\T w_i^\star$.
Let $H = \grad^2 F(w) = (1/n) \sum_{i=1}^n H_i$ be the covariance of $x$ under the mixture distribution across devices, where $H_i$ is the covariance of $x_i$ in device $i$. We measure the dissimilarities $\niidterm_X, \niidterm_{Y|X}$ of the marginal and the conditionals respectively as
\begin{align}
	\label{eq:rfa:niid_xy}
	\niidterm_X &= \max_{i \in [n]} \lambda_{\max}(H^{-1/2} H_i H^{-1/2})\,, \quad \text{and}, \quad
	\niidterm_{Y|X} = \max_{i, j \in [n]} \norm{w_i^\star - w_{j}^\star} \,,
\end{align}
where $\lambda_{\max}(\cdot)$ denotes the largest eigenvalue.
Note that $\niidterm_X \ge 1$ and it is equal to $1$ iff each $H_i = H$. It measures the 
spectral misalignment between each $H_i$ and $H$. 
The second condition is related to the Wasserstein-2 distance~\cite{panaretos2020invitation} between 
the conditionals $D_{Y|X, i}$ as
$W_2(D_{Y|X, i}, D_{Y|X, j}) \le R \niidterm_{Y|X}$.
We define the degree of heterogeneity 
between the various $D_i = D_{X, i} \otimes D_{Y|X, i}$ as 
$\mathrm{width}(\mathcal{D}) = \niidterm_{X} \niidterm_{Y|X} =: \niidterm$. 
That is, if the conditionals are the same ($\niidterm_{Y|X} = 0$), 
we can tolerate arbitrary heterogeneity in the marginals $D_{X, i}$.

\myparagraph{Convergence}
We now analyze \rbfedavg where the local SGD updates are equipped with 
``tail-averaging''~\citep{jain2017parallelizing}
so that $w_i\pow{t+1} = (2/\tau)\sum_{k=\tau/2}^\tau w_{i, k}\pow{t}$
is averaged over the latter half of the trajectory of iterates instead of line~\ref{line:algo:rfa:local-update-final} of \Cref{algo:rfa:main}.
We show that this variant of \rbfedavg converges up to the dissimilarity level $\niidterm = \niidterm_X \niidterm_{Y|X}$ when the corruption level $\rho < 1/2$.

\begin{theorem}\label{thm:rfa:convergence}
	Consider $F$ defined in \eqref{eq:sgd:sq-loss} and suppose the corruption level satisfies $\rho < 1/2$.
	Consider~\Cref{algo:rfa:main} run for $T$ outer iterations 
	with a learning rate $\gamma = 1/(2R^2)$, and the local updates are
	run for $\tau_t$ steps in outer iteration $t$ with tail averaging. 
	Fix $\delta > 0$ and $\theta \in (\rho, 1/2)$, and set the number of devices per iteration, $m$ as
	\begin{align} \label{eq:rfa:mainthm:ndev-per-round}
		m \ge \frac{\log(T/\delta)}{2(\theta - \rho)^2} \,.
	\end{align}
	Define $C_\theta := (1-2\theta)^{-2}$, $w^\star = \argmin F$,
	$F^\star = F(w^*)$,
	$\kappa := R^2/\mu$ and
	$\Delta_0 := \normsq{w\pow{0} - w^\star}$.
	Let $\tau \ge 4\kappa \log\left(128 C_\theta \kappa \right)$.
	We have that the event $\mathcal{E} = \bigcap_{t=0}^{T-1}\{ |S_t \cap \mathcal{C}| \le \theta m\}$
	holds with probability at least $1-\delta$.
	Further, if %
	$\tau_t = 2^t \tau$ for each iteration $t$, 
	then the output $w\pow{T}$ of 
	\Cref{algo:rfa:main} satisfies, 
	\[
		\expect\left[\normsq{w\pow{T}) - w^\star} \,\middle\vert\, \mathcal{E}\right] \le 
		\frac{\Delta_0}{2^{T}} 
			+ C C_\theta \left(\frac{d \sigma^2 T}{ \mu \tau 2^{T}}
			+ \frac{\eps^2}{m^2} + \niidterm^2 \right)
	\]
	where $C$ is a universal constant.
	If $\tau_t = \tau$ instead, then, 
	the noise term above reads $d\sigma^2 / {\mu \tau}$.
\end{theorem}
\Cref{thm:rfa:convergence} shows near-linear convergence $O(T / 2^{T})$ up to two error terms in the case that $\rho$ is bounded away from 
$1/2$ (so that $\theta$ and $C_\theta$ can be taken to be constants).
The increasing local computation $\tau_t = 2^t \tau$ required by this rate is feasible since local computation is assumed to be cheaper than communication.

The first error term is $\eps^2/m^2$ due to approximation $\eps$ in the GM, which can be made arbitrarily small by increasing the number $m$ of devices sampled per round.
The second error term $\niidterm^2$ is due to heterogeneity. 
Indeed, exact convergence as $T\to \infty$ is not possible in the presence of corruption: lower bounds for robust mean estimation~\cite[e.g.][Theorem 2.2]{chen2018robust} imply that
$\normsq{w\pow{T} - w^\star} \ge C \rho^2 \niidterm_{Y|X}^2$
w.p. at least $1/2$.
Consistent with our theory, we find in real heterogeneous datasets in \Cref{sec:expt} that \rbfedavg can lead to \emph{marginally} worse performance than \fedavg in the corruption-free regime ($\rho = 0$).
Finally, while we focus on the setting of least squares, our results can be extended to the general convex case. 

\begin{remark} \label{remark:rfa:convergence}
		For unequal weights, we can perform the reduction 
			$\tilde F_i(w) = n\alpha_i F_i(w)$, 
			so the theory applies with the substitution
			$(R^2, \sigma^2, \mu, \niidterm_X) \mapsto (c_1 \, R^2$, 
			$c_1 \, \sigma^2$, $c_2 \, \mu, (c_1/c_2) \niidterm_X)$,
			where $c_1 = n \max_i \alpha_i$ and $c_2 = n \min_i \alpha_i$.
\end{remark}
\noindent We use the following convergence result of SGD~\citep[Theorem~1]{jain2017markov},~\citep[Corollary~2]{jain2017parallelizing}.
\begin{theorem}[\citep{jain2017parallelizing,jain2017markov}]
\label{thm:sgd:central}
	Consider a $F_k$ from~\eqref{eq:sgd:sq-loss}. 
	Then, defining $\kappa := R^2/\mu$, the output $\overline v_{\tau}$ of $\tau$ steps of tail-averaged SGD starting from $v_0 \in \reals^d$ using learning rate $(2R^2)^{-1}$ satisfies
	\[
		\expect\normsq{\overline v_{\tau} - w^\star} \le 
			2\kappa \exp\left( -\frac{\tau}{4\kappa}  \right) \normsq{ v_0 - w^\star} 
			+ \frac{8d\sigma^2}{\mu \tau} \,.
	\]
\end{theorem}
\begin{proof}[Proof of \Cref{thm:rfa:convergence}]
    Define the event $\mathcal{E}_t = \{ |S_t \cap \mathcal{C}| \le \theta m\}$
    so that $\mathcal{E} = \bigcap_{t=0}^{T-1} \mathcal{E}_t$.
    Hoeffding's inequality gives $\prob(\overline{\mathcal{E}_t}) \le \delta/T$ for each $t$ so that 
    $\prob(\overline {\mathcal{E}}) \le \delta$ using the union bound.
    Below, let $\mathcal{F}_t$ denote the sigma algebra generated by $w\pow{t}$.
    
    Consider the local updates on an uncorrupted device $i \in S_t \setminus \mathcal{C}$, 
    starting from $w\pow{t}$. Theorem~\ref{thm:sgd:central} gives, upon
    using $\tau_t \ge \tau \ge 4\kappa \log(128 C_\theta \kappa)$,
    \[
    	\expect\left[ \normsq{w\pow{t+1}_i - w^\star_i} \, \middle| \, \mathcal{E}, \mathcal{F}_t \right]
    		\le \frac{1}{64 C_\theta} \normsq{w\pow{t} - w^\star_i} 
    		+ \frac{8 d \sigma^2}{\mu \tau_t} \,.
    \]
    Note that $w^\star = (1/n) \sum_{j=1}^n H\inv H_{j} w_{j}^\star$, so that 
    \[
    	\norm{w^\star - w_i^\star} \le \frac{1}{n}\sum_{j=1}^n \norm{H\inv H_{j}(w_{j}^\star - w_i^\star)} \le \niidterm \,.
    \] 
    Using $\normsq{a+b} \le 2\normsq{a} + 2\normsq{b}$, we get, 
    \begin{align*}
    	\expect\Big[ \normsq{w\pow{t+1}_i - w^\star} \, \big| & \, \mathcal{E}, \mathcal{F}_t \Big]
    	\le 2\expect\left[ \normsq{w\pow{t+1}_i - w^\star_i} \, \middle| \, \mathcal{E}, \mathcal{F}_t \right] + 2\niidterm^2 \\
    	& \le \frac{1}{32C_\theta} \normsq{w\pow{t} - w^\star_i} 
    		+ \frac{16 d \sigma^2}{\mu \tau_t}  + 2\niidterm^2 \\ 
    	&\le \frac{q}{16C_\theta} \normsq{w\pow{t} - w^\star} 
    		+ \frac{16 d \sigma^2}{\mu \tau_t}  + 4\niidterm^2 \,.
    \end{align*}
    We now apply 
    the robustness property of the GM~(\citep[Thm. 2.2]{lopuhaa1991breakdown} or ~\citep[Lem.~3]{wu2020federated})
    to get, 
    \[
    	\expect\left[ \normsq{w\pow{t+1} - w^\star} \, \middle| \, \mathcal{E}, \mathcal{F}_t \right]
    		\le \frac{1}{2} \normsq{w\pow{t} - w^\star} 
    			+ \frac{128 C_\theta d \sigma^2}{\mu \tau_t}  + \Gamma \,,
    \]
    where $\Gamma = 2C_\theta (\eps^2/m^2 + 16\niidterm^2)$.
    Taking an expectation conditioned on $\mathcal{E}$ and unrolling this inequality gives
    \[
    	\expect\left[ \normsq{w\pow{T} - w^\star} \, \middle| \, \mathcal{E} \right]
    	\le 
    	\frac{\Delta_0}{2^T} + \frac{128C_\theta d\sigma^2}{\mu} \sum_{t=1}^T \frac{1}{2^{T-t} \tau_t} + 2 \Gamma \,.
    \]
    When $\tau_t = 2^t \tau$, 
    the series sums to $2^{-(T-1)} T/\tau$, while for $\tau_t = \tau$, the series is upper bounded by $2/\tau$.
\end{proof}

We now consider \rbfedavg in connection with the three factors mentioned in \Cref{sec:setup:fl}.
\begin{enumerate}[label=(\roman*),nolistsep,leftmargin=\widthof{ (a) }]
    \item \textbf{Communication Efficiency}: Similar to \fedavg, \rbfedavg performs multiple local updates for each aggregation round, to save on the total communication. However, owing to the trade-off between communication, privacy and robustness, \rbfedavg requires a modest 3$\times$ more communication for robustness per aggregation. In the next section, we present a heuristic to reduce this communication cost to one secure average oracle call per aggregation.
    
    \item \textbf{Privacy Preservation}: \Cref{algo:rfa:weiszfeld} computes the aggregation as an iterative secure aggregate.
    This means that the server only learns the intermediate parameters after being averaged over all the devices, with effective weights bounded away from $1$ (\Cref{prop:rfa:privacy-preservation}). The noisy parameter vectors sent by individual devices are uniformly uninformative in information theoretic sense with the use of secure multi-party computation.
    
    \item \textbf{Robustness}: The geometric median has a breakdown point of 1/2~\cite[Theorem~2.2]{lopuhaa1991breakdown}, which is the highest possible~\cite[Theorem~2.1]{lopuhaa1991breakdown}. In the federated learning context, this means that convergence is still guaranteed by \Cref{thm:rfa:convergence} when up to half the points in terms of total weight are corrupted. \rbfedavg is resistant to both data or update poisoning, while being privacy preserving. On the other hand, \fedavg has a breakdown point of 0, where a single corruption in each round can cause the model to become arbitrarily bad.
\end{enumerate}

\begin{algorithm}[t!]
	\caption{One-step Smoothed Weiszfeld Algorithm}
	\label{algo:rfa:weiszfeld-one-shot}
\begin{algorithmic}[1]
        \Require Same as \Cref{algo:rfa:weiszfeld}
        \State Device $i$ sets $\beta_i = \alpha_i / (\nu \lor \norm{w_i})$
        \State \Return $\left( {\sum_{i=1}^m \beta_i w_i} \right) / {\sum_{i=1}^m \beta_i}$ using $\mathcal{A}$
\end{algorithmic}
\end{algorithm}

\begin{algorithm}[t!]
	\caption{\rbfedavg with Personalization}
	\label{algo:rfa:res}
\begin{algorithmic}[1]
        \Statex Replace lines~\ref{line:rfa:local-update-start} to~\ref{line:algo:rfa:local-update-final} of \Cref{algo:rfa:main} with the following:
        \State Set $u_{i,0}\pow{t}=u_i\pow{t}$ and $w_{i,0}\pow{t} = w\pow{t}$
        \For{$k=0, \cdots, \tau-1$}
            \State $u_{i, k+1}\pow{t} = u_{i, k}\pow{t} - \gamma \grad f(w\pow{t} + u_{i, k}\pow{t} ; z_{i, k}\pow{t})$ with 
            $z_{i, k}\pow{t} \sim D_i$
        \EndFor
        \For{$k=0, \cdots, \tau-1$}
            \State $w_{i, k+1}\pow{t} = w_{i, k}\pow{t} - \gamma \grad f(w_{i, k}\pow{t} + u_{i, \tau}\pow{t} ; \tilde z_{i, k}\pow{t})$ with 
            $\tilde z_{i, k}\pow{t} \sim D_i$
        \EndFor
        \State Set $w_i\pow{t+1} = w_{i, \tau}\pow{t}$ and $u_i\pow{t+1} = u_{i, \tau}\pow{t}$
\end{algorithmic}
\end{algorithm}

\subsection{Extensions to \rbfedavg}
We now discuss two extensions to \rbfedavg to reduce the communication cost (without sacrificing privacy) and better accommodate statistical heterogeneity in the data with model personalization. 

\myparagraph{One-step \rbfedavg: Reducing the Communication Cost}
Recall that \rbfedavg results in a 3-5$\times$ increase in the communication cost over \fedavg. Here, we give a heuristic variant of \rbfedavg in an extremely communication-constrained setting, where
it is infeasible to run multiple iterations of \Cref{algo:rfa:weiszfeld}. We simply run \Cref{algo:rfa:weiszfeld} with $v\pow{0} = 0$ and a communication budget of $R=1$; see \Cref{algo:rfa:weiszfeld-one-shot} for details.
We find in \Cref{sec:expt:extensions} that one-step \rbfedavg retains most of the robustness of \rbfedavg. 

\myparagraph{Personalized \rbfedavg: Offsetting Heterogeneity}
We now show \rbfedavg can be extended to better handle heterogeneity 
in the devices with the use of personalization.
The key idea is that predictions are made on device $i$ by summing 
the shared parameters $w$ maintained by the server
with personalized parameters $U = \{u_1, \cdots, u_n\}$ 
maintained individually on-device.
In particular, the optimization problem we are interested in solving is 
\begin{align*}
	 \min_{w, U} \left[ F(w, U) := 
	 \sum_{i=1}^n \alpha_k \, \expect_{z \sim D_i} 
	 \left[ f(w + u_i ; z) \right]
	 \right] \,.
\end{align*}

We outline the algorithm in \Cref{algo:rfa:res}.
We train the shared and personalized parameters on each other's residuals,
following the residual learning scheme of~\cite{agarwal2020federated}.
Each selected device first updates its personalized parameters $u_i$ 
while keeping the shared parameters $w$ fixed. 
Next, the updates to the shared parameter are computed on the residual of the personalized parameters.
The updates to the shared parameter are aggregated with the geometric median, identical to \rbfedavg. Experiments in \Cref{sec:expt:extensions} show that personalization is effective in combating heterogeneity. 

\begin{table*}[b!]
\caption{Dataset description and statistics.}
\label{table:expt:dataset:descr}
\begin{center}
\begin{adjustbox}{max width=0.9\linewidth}
\begin{tabular}{lcccccccc}
\toprule
Dataset & Task & \#Classes & \#Train & \#Test & \#Devices & \multicolumn{3}{c}{\#Train per Device} \\
 & & & & & & Median & Max & Min  \\
\midrule

EMNIST & \begin{tabular}{c} Image Classification \end{tabular} & 
    62 & $204K$ & $23K$ & 1000 & 160 & 418 & 92 \\

Shakespeare & \begin{tabular}{c} Character-level Language Modeling \end{tabular} & 
    53 & $2.2 M$ & $0.25 M$ &  628 & 1170 & 70600 & 90 \\

\new{Sent140} & \begin{tabular}{c} \new{Sentiment Analysis} \end{tabular} & 
    \new{2} & \new{$57 K$} & \new{$15 K$} &  \new{877} & \new{55} & \new{479} & \new{40} \\

\bottomrule
\end{tabular}
\end{adjustbox}
\end{center}
\vskip -0.1in
\end{table*}

\begin{figure*}[t!]
    \centering
        \includegraphics[width=0.99\textwidth,trim={0 34pt 0 0},clip=true]{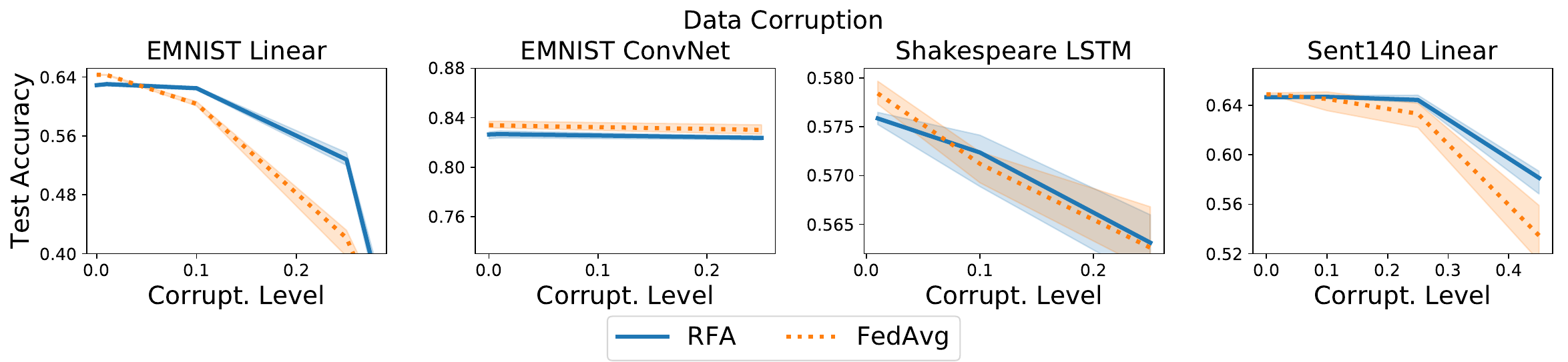}

        \includegraphics[width=0.99\textwidth]{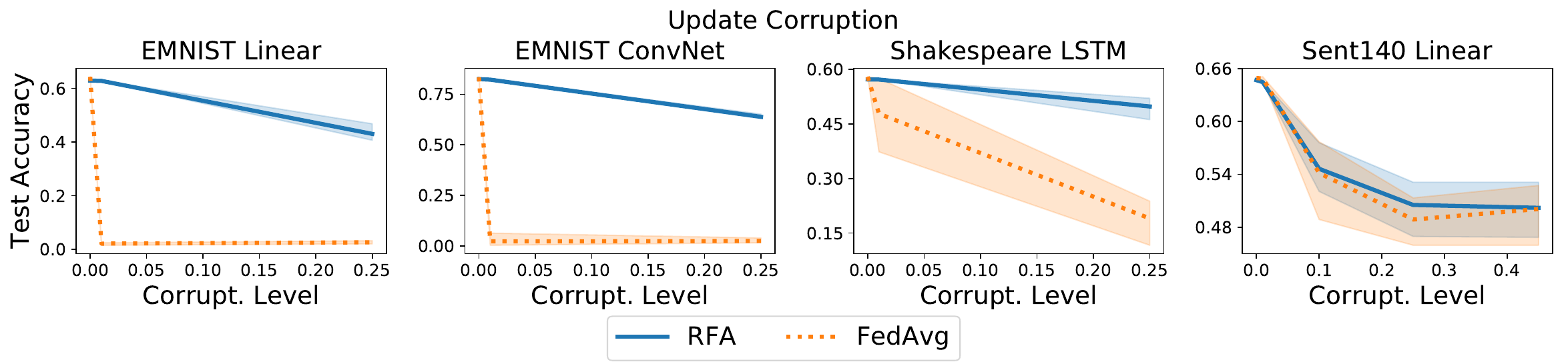}
     \caption{\new{Comparison of robustness of \rbfedavg{} and \fedavg{} under data corruption (\textbf{top}) and update corruption (\textbf{bottom}).
     The left three plots for update corruption show omniscient corruption while the rightmost one shows Gaussian corruption. The shaded area denotes minimum and maximum over 5 random seeds.}} 
     \label{fig:expt:robustness_main}
\end{figure*}

\section{Numerical Simulations} \label{sec:expt}
We now conduct simulations to compare \rbfedavg with other federated learning algorithms. The simulations were run using TensorFlow and the data was preprocessed using LEAF~\citep{caldas2018leaf}. 
We first describe the experimental setup in \Cref{sec:expt:setup}, 
then study the robustness and convergence of \rbfedavg in \Cref{sec:expt:robustnesss}.
We study the effect of the extensions of \rbfedavg in \Cref{sec:expt:extensions}.
The full details from this section and more simulation results are given in Appendix~\ref{sec:a:expt}. 
The code and scripts to reproduce these experiments can be found online~\cite{rfa_repo}.

\subsection{Setup} \label{sec:expt:setup}

We consider three machine learning tasks. 
The datasets are described in Table~\ref{table:expt:dataset:descr}.
As described in \Cref{sec:setup:fl}, we take the weight $\alpha_i$ of device $i$ to be proportional to the number of datapoints $N_i$ on the device. 

\begin{enumerate}[label=(\alph*),nolistsep,leftmargin=\widthof{ (a) }]
\item{\textit{Character Recognition}:}
We use the EMNIST dataset~\citep{cohen2017emnist}, 
where the input $x$ is a $28\times 28$ grayscale image of a handwritten character
and the output $y$ is its identification (0-9, a-z, A-Z). 
Each device is a writer of the handwritten character $x$.
We use two models --- a linear model $\varphi(x;w) = w\T x$ and 
a convolutional neural network (ConvNet).
We use as objective $f(w; (x, y)) = \ell(y, \varphi(x;w))$, where 
$\ell$ is the multinomial logistic loss $\ell$. 
We evaluate performance using the classification accuracy. 

\item{\textit{Character-Level Language Modeling}:}
We learn a character-level language model over the 
Complete Works of Shakespeare~\cite{shakespeare}. 
We formulate it as a multiclass classification problem, where 
the input $x$ is a window of 20 characters, the output $y$ is the next (i.e., 21st) character.
Each device is a role from a play (e.g., Brutus from The Tragedy of Julius Caesar).
We use a long-short term memory model (LSTM)~\citep{hochreiter1997long} 
together with the multinomial logistic loss. The performance is evaluated with the classification accuracy of next-character prediction.

\item{\textit{Sentiment Analysis}:}
We use the Sent140
dataset~\citep{go2009twitter}
where the input $x$ is a tweet
and the output $y=\pm1$ is its sentiment.
Each device is a distinct Twitter user.
We use a linear model using average of the GloVe embeddings~\cite{pennington2014glove} of the words of the tweet. It is trained with the binary logistic loss and evaluated with the classification accuracy.
\end{enumerate}

\myparagraph{Corruption Models}
We consider the following corruption models for
corrupted devices $\mathcal{C}$, cf. \Cref{sec:setup:tradeoffs}:
\begin{enumerate}[label=(\alph*),nolistsep,leftmargin=\widthof{ (a) }]
\item{\textit{Data Poisoning}}: The distribution $D_i$
    on a device $k \in \mathcal{C}$ is replaced by some fixed $\tilde D_i$.
	For EMNIST, we take the negative of an image so that
    $\tilde D_i(x, y) = D_i(1-x, y)$.
	For the Shakespeare dataset, we reverse the text so that 
    $\tilde D_i(c_1,\cdots c_{20}, c_{21}) = D_i(c_{21},\cdots c_2, c_1)$.
    In both these cases, the labels are unchanged.
    For the Sent140 dataset, we flip the label while keeping $x$ unchanged.

\item \textit{Update poisoning with Gaussian corruption}:
    Each corrupted device $i \in \mathcal{C}$ 
    returns $w_i\pow{t+1} = w_{i,\tau}\pow{t} + \zeta_i\pow{t}$, where $\zeta_{i}\pow{t} \sim \mathcal{N}(0, \sigma^2 I)$, where $\sigma^2$ is the variance across the components of $w_{i,\tau}\pow{t}-w\pow{t}$.\footnote{
    Model updates $w_i\pow{t}-w\pow{t}$ are aggregated, not the models $w_i\pow{t}$ directly~\cite{kairouz2019flsurvey}.}

\item \textit{Update poisoning with omniscient corruption}:
    The parameters $w_i\pow{t+1}$ returned by devices $i \in \mathcal{C}$ 
	are modified so that the weighted arithmetic mean $\sum_{i \in S_t} \alpha_i w_i\pow{t+1}$ 
    over the selected devices $S_t$ is set to $-\sum_{i \in S_t} \alpha_i w_{i, \tau}\pow{t}$, the negative of 
    what it would to have been without the corruption.
    This is designed to hurt the weighted arithmetic mean aggregation.
\end{enumerate}

\myparagraph{Hyperparameters}
The hyperparameters %
are chosen similar to the defaults of~\citep{mcmahan2017communication}. 
A learning rate schedule was tuned on a validation set for \fedavg with no corruption.
The same schedule was used for \rbfedavg.
The aggregation in \rbfedavg is implemented using the smoothed Weiszfeld algorithm 
with a budget of $R=3$ calls to the secure average oracle, 
thanks to its rapid empirical convergence (cf. \Cref{fig:expt:gm_algos_main}),
and $\nu = 10^{-6}$ for numerical stability.
Each simulation was repeated 5 times and the shaded area denotes the minimum and maximum over these runs.
Appendix~\ref{sec:a:expt} gives details on hyperparameter, and
a sensitivity analysis of the Weiszfeld communication budget.

\subsection{Robustness and Convergence of \rbfedavg} \label{sec:expt:robustnesss}

First, we compare the robustness of \rbfedavg as opposed to vanilla \fedavg to different types of corruption across different datasets in
\Cref{fig:expt:robustness_main}. We make the following observations.

\myparagraphsmall{\rbfedavg gives improved robustness to linear models with data corruption}
For instance, consider the EMNIST linear model at $\rho=1/4$. \rbfedavg achieves 52.8\% accuracy, over 10\% better than \fedavg at 41.2\%.

\myparagraphsmall{\rbfedavg performs similarly to \fedavg in deep nets with data corruption}
\rbfedavg and \fedavg are within one standard deviations of each other for the Shakespeare LSTM model, and nearly equal for the EMNIST ConvNet model. 
We note that the behavior of the training of a neural network when the data is corrupted is not well-understood in general~\citep[e.g.,][]{zhang2017understanding}.

\myparagraphsmall{\rbfedavg gives improved robustness to omniscient corruptions for all models}
For the omniscient corruption, the test accuracy of the \fedavg is close to 0\% for the EMNIST linear model and ConvNet, while \rbfedavg still achieves over 40\% at $\rho=1/4$ for the former and well over 60\% for the latter. A similar trend holds for the Shakespeare LSTM model.

\begin{figure}[t]
\centering
    \includegraphics[width=0.5\linewidth]{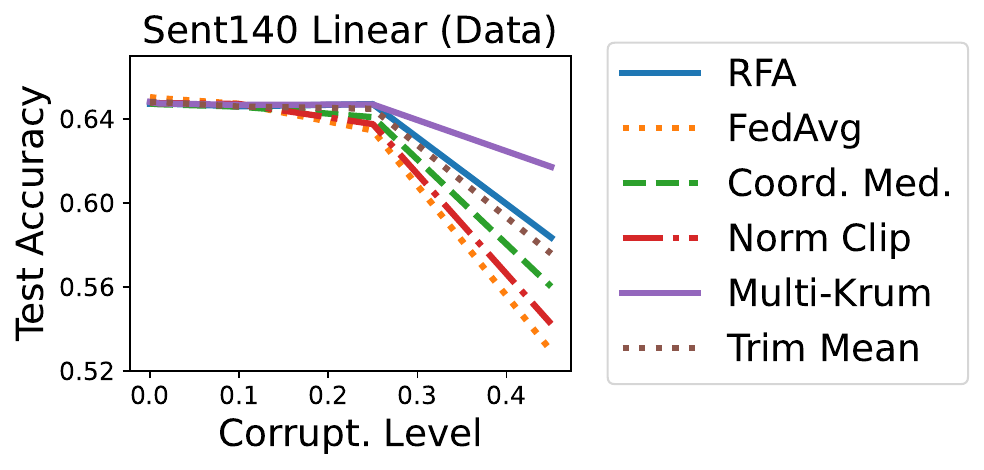}
    \caption{Comparison of \rbfedavg with other robust aggregation algorithms on Sent140 with data corruption.}
    \label{fig:rfa:robustness-other-agg}
\end{figure}

\begin{figure*}[t!]
    \centering %
        \adjincludegraphics[width=0.91\linewidth, trim={0 34pt 0 0},clip=true]{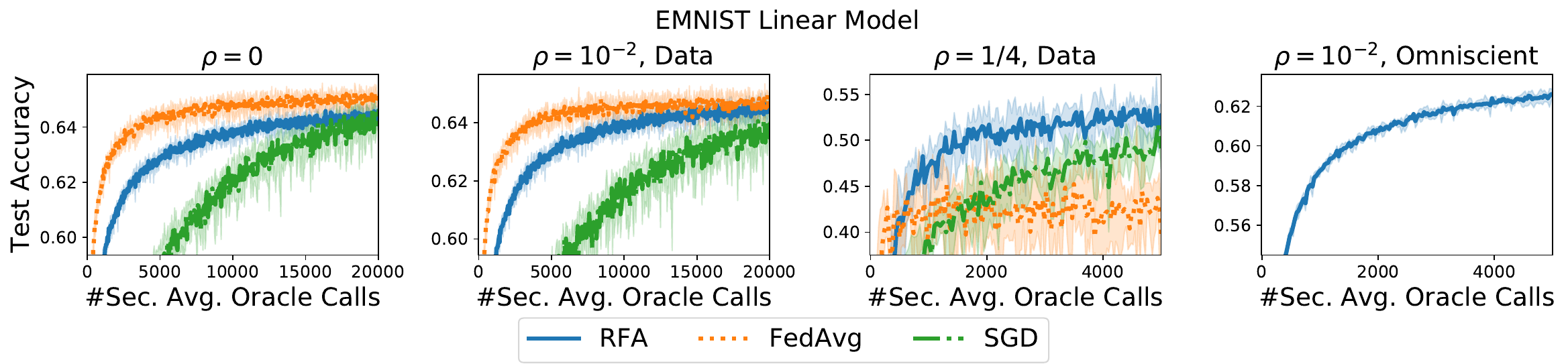}

        \adjincludegraphics[width=0.91\linewidth, trim={0 0 0 0},clip=true]{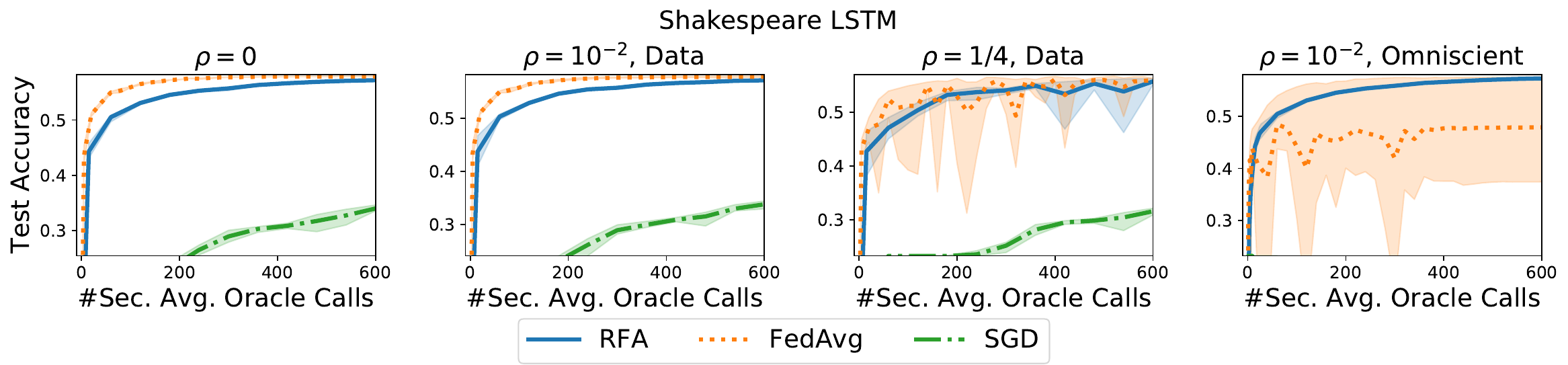}

     \caption{Comparison of methods plotted against number of calls to the secure average oracle for different corruption settings. For the case of omniscient corruption, \fedavg and SGD are not shown in the plot if they diverge. The shaded area denotes the maximum and minimum over 5 random seeds.}
     \label{fig:expt:main-plot:main}
\end{figure*}

\myparagraphsmall{\rbfedavg almost matches \fedavg in the absence of corruption}
Recall from \Cref{sec:setup:tradeoffs} that robustness comes at the cost of heterogeneity; 
this is also reflected in the theory of  \Cref{sec:aggr}. 
Empirically, we find that the performance hit of \rbfedavg due to heterogeneity is quite small: 1.4\% for the EMNIST linear model (64.3\% vs. 62.9\%), under 0.4\% for the Shakespeare LSTM, and 
0.3\% for Sent140 (65.0\% vs. 64.7\%). Further, we demonstrate in Appendix~\ref{sec:a:additional-results} that, consistent with the theory, this gap completely vanishes in the i.i.d. case. 

\begin{figure}[t!]
    \centering %
        \adjincludegraphics[width=0.6\linewidth, trim={0 0 0 0},clip=true]{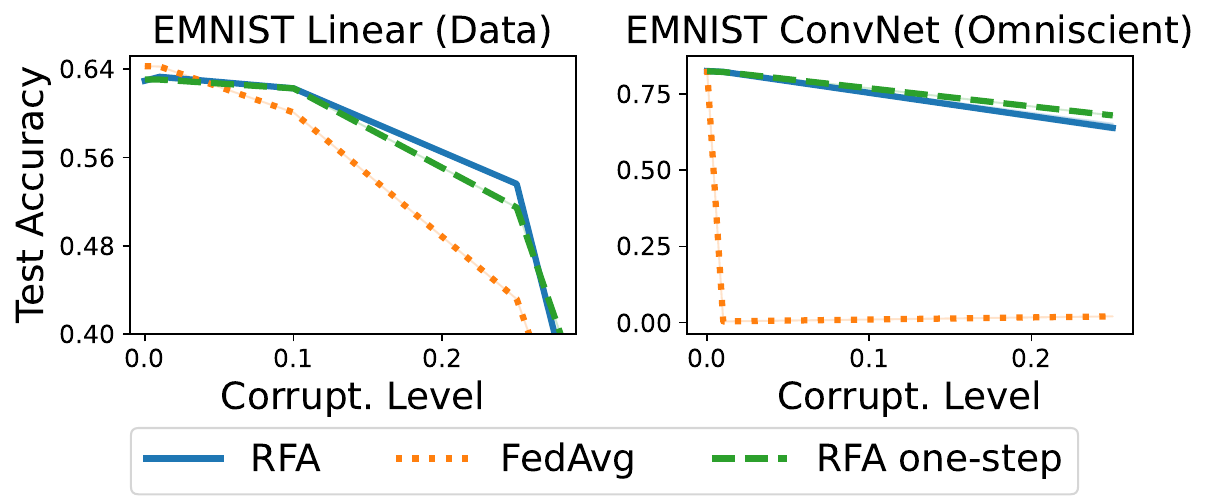}

     \caption{Robustness of one-step \rbfedavg.
     }
     \label{fig:expt:main-plot:one-shot}
\end{figure}

\myparagraphsmall{\rbfedavg is competitive with other robust aggregation schemes while being privacy-preserving}
We now compare \rbfedavg with: 
(a) coordinate-wise median~\cite{yin2018byzantine} and $\ell_2$ norm clipping~\cite{sun2019backdoor} which are agnostic to the actual corruption level $\rho$ like \rbfedavg, and, 
(b) trimmed mean~\cite{yin2018byzantine} and multi-Krum~\cite{blanchard2017machine}, that require exact knowledge of the level of corruption $\rho$ in the problem. We find that \rbfedavg is more robust than the two agnostic algorithms coordinate-wise median and norm clipping. Perhaps surprisingly, \rbfedavg is also more robust than the trimmed mean which uses perfect knowledge of the corruption level $\rho$. 
We note that multi-Krum is more robust than \rbfedavg. 
That being said, \rbfedavg has the advantage that it is fully agnostic to the actual corruption level $\rho$ and is privacy-preserving, while the other robust approaches are not. 

\myparagraphsmall{Summary: robustness of \rbfedavg}
Overall, we find that \rbfedavg is no worse than \fedavg in the presence of corruption and is often better, while being almost as good in the absence of corruption. Furthermore, 
\rbfedavg degrades more gracefully as the corruption level increases.

\myparagraphsmall{\rbfedavg requires only $3\times$ the communication of \fedavg}
Next, we plot in~\Cref{fig:expt:main-plot:main} 
the performance versus the number of rounds of
communication as measured by the number of calls to the secure average oracle.
We note that in the low corruption regime of 
 $\rho = 0$ or $\rho = 10^{-2}$ under data corruption, 
 \rbfedavg requires $3\times$ the number of calls to the secure average oracle to reach the same performance. 
 However, it matches the performance of \fedavg when measured in terms of the number of outer iterations, with the additional communication cost coming from multiple Weiszfeld iterations for computation of the average.

\myparagraphsmall{\rbfedavg exhibits more stable convergence under corruption}
We also see from~\Cref{fig:expt:main-plot:main} ($\rho=1/4$, Data) that the variability of accuracy across random runs, denoted here by the shaded region, is much smaller for \rbfedavg. Indeed, by being robust to the corrupted updates sent by random sampling of corrupted clients, \rbfedavg exhibits a more stable convergence across iterations.

\begin{figure}[t!]
    \centering %
        \adjincludegraphics[width=0.6\linewidth, trim={0 0 0 25pt},clip=true]{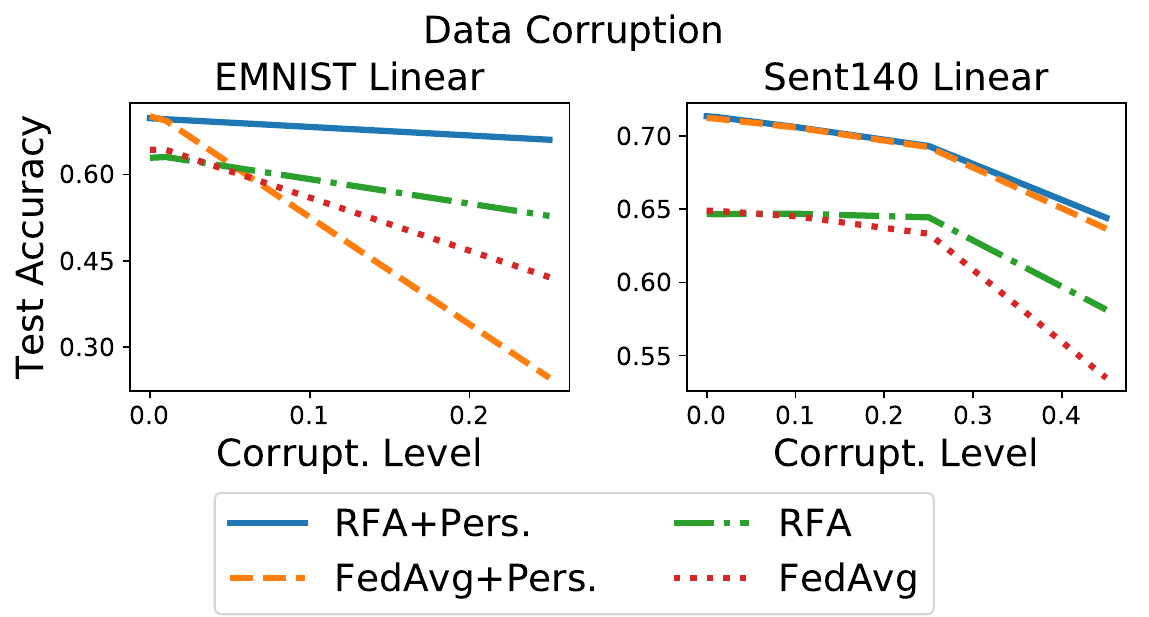}

     \caption{Effect of personalization on the robustness of \rbfedavg and \fedavg under data corruption.
     }
     \label{fig:expt:main-plot:personalization}
\end{figure}

\subsection{Extensions of \rbfedavg} \label{sec:expt:extensions}
We now study the proposed extensions: one-step \rbfedavg and personalization. 

\myparagraphsmall{One-step \rbfedavg gives most of the robustness with no extra communication} 
From~\Cref{fig:expt:main-plot:one-shot},
we observe that for one-step \rbfedavg
is quite close in performance to \rbfedavg 
across different levels of corruption for both
data corruption on an EMNIST linear model and omniscient corruption on an EMNIST ConvNet. 
For instance, in the former, one-step \rbfedavg gets 51.4\% in accuracy, which is 10\% better than \fedavg while being almost as good as full \rbfedavg (52.8\%) at $\rho=0.25$.
Moreover, for the latter, we find that one-step \rbfedavg (67.9\%) actually achieves higher test accuracy than full \rbfedavg (63.0\%) at $\rho=0.25$.

\myparagraphsmall{Personalization helps \rbfedavg offset effects of heterogeneity}
\Cref{fig:expt:main-plot:personalization} plots the effect of \rbfedavg with personalization. First, we observe that personalization leads to an improvement with no corruption for both \fedavg and \rbfedavg. For the EMNIST linear model, we get 70.1\% and 69.9\% respectively from 64.3\% and 62.9\%. 
Second, we observe that \rbfedavg exhibits greater robustness to corruption with personalization. At $\rho=1/4$ with the EMNIST linear model, \rbfedavg with personalization gives 66.4\% (a reduction of 3.4\%) while no personalization gives 52.8\% (a reduction of 10.1\%).
The results for Sent140 are similar, with the exception that \fedavg with personalization is nearly identical to \rbfedavg with personalization. 
 
\section{Conclusion}

We presented a robust aggregation approach, based on the geometric median and the smoothed Weiszfeld algorithm to efficiently compute it, to make federated learning more robust to settings where a fraction of the devices may be sending corrupted updates to the orchestrating server. 
The robust aggregation oracle preserves the privacy of participating devices, operating with calls to secure multi-party computation primitives enjoying privacy preservation theoretical guarantees.
\rbfedavg is available in several variants, including a fast one with a single step of robust aggregation and a one adjusting to heterogeneity with on-device personalization. All variants are readily scalable while preserving privacy, building off secure multi-party computation primitives already used at planetary scale.
The theoretical analysis of \rbfedavg  with personalization is an interesting venue for future work. The further analysis of robustness under heterogeneity is also an interesting venue for future work. 
\vspace*{0.2em}
\subsubsection*{Acknowledgments}{}
{ \small
The authors would like to thank Zachary Garrett, Peter Kairouz, Jakub Kone{\v{c}}n{\'{y}}, Brendan McMahan, 
Krzysztof Ostrowski and Keith Rush
 for fruitful discussions, as well as help with the implementation of RFA on
 Tensorflow Federated. 
This work was first presented at the Workshop on Federated Learning and Analytics in June 2019. 
This work was supported by NSF CCF-1740551, NSF CCF-1703574, NSF DMS-1839371, 
the Washington Research Foundation for innovation in Data-intensive Discovery, 
the program ``Learning in Machines and Brains'', faculty research awards,
and a JP Morgan PhD Fellowship.
}

\bibliography{bib/federated}
\bibliographystyle{abbrvnat}

\clearpage

\begin{titlepage}
   \vspace*{\stretch{1.0}}
   \begin{center}
      \Large\textbf{Supplementary Material: \\ 
      	Robust Aggregation for Federated Learning}
   \end{center}
   \vspace*{\stretch{2.0}}

   \section*{Table of Contents}
   \startcontents[sections]
   \printcontents[sections]{l}{1}{\setcounter{tocdepth}{2}}

\end{titlepage}
\appendix %

\begingroup
\let\clearpage\relax 
\onecolumn 
\endgroup

\section{Table of Notation}

We summarize the notation used throughout the paper in Table~\ref{tab:rfa:notation}. 

\begin{table}[h!]
\caption{Summary of notation.}
\label{tab:rfa:notation}
\begin{center}
\begin{adjustbox}{totalheight=\textheight-8\baselineskip}
\begin{tabular}{p{3cm}p{2cm}p{10cm}}

\textbf{Context} & \textbf{Symbol} & \textbf{Meaning} \\
\toprule

\multirow{10}{*}{Setup} &
$n$ & Total number of devices \\
&$\alpha_i$ & The weight of device $i$ \\
&$D_i$ & Data distribution of device $i$ \\
&$\mathcal{D}$ & Family of probability distributions such that $D_i \in \mathcal{D}$ for each non-corrupted device \\
&$\mathrm{width}(\mathcal{D})$ & Degree of heterogeneity in $\mathcal{D}$ \\
&$z$ & Random variable denoting the data $z \sim D_i$. 
		For example, $z=(x,y)$ is an input-output pair for supervised learning \\
&$w$ & Model parameters in $\reals^d$ \\
&$f(w;z)$ & Loss of model $w$ on example $z$ \\
&$F(w)$ & Average objective across all devices; defined in Eq.~\eqref{eq:fl:main} \\
&$w^\star$ & Optimal model parameters $w^\star = \argmin_{w \in \reals^d} F(w)$  \\
\midrule

\multirow{8}{*}{\begin{tabular}{l}FL \\algorithms \end{tabular}} &
$m$ & Number of devices chosen per round for federated learning \\
&$t$ &  index of outer iteration of \rbfedavg or \fedavg \\
&$S_t$ & Random subset of $m$ devices chosen from $\{1, \cdots, n\}$ in round $t$ \\
&$w\pow{t}$ & Global model in round $t$ \\
&$w_{i, k}\pow{t}$ & Local updates on device $i$ in round $t$ for $k=1,\cdots,\tau$ \\
&$w\pow{t+1}_i$ & Updated parameter returned by a selected client $i \in S_t$ in round $t$ \\
\midrule

\multirow{4}{*}{\begin{tabular}{l}Corruption \\ models\end{tabular}} &
$\mathcal{C}$ & Subset of clients which send corrupted updates; $\mathcal{C} \subseteq [n]$ \\
&$\rho$ & Corruption level, defined as the fraction $\sum_{i \in \mathcal{C}} \alpha_i / \sum_{i \in [n]} \alpha_i$ \\
&$\tilde D_i$ & Distribution on device $i$ due to static data poisoning, different from the original $D_i$ \\
\midrule

\multirow{6}{*}{\begin{tabular}{l}Geometric \\ median \\ definition \end{tabular}} &
$v$ & Vector in $\reals^d$; used to denote the current estimates of the geometric median \\
&$g$ & Geometric median (GM) objective, whose minimizer is the GM \\
&$\eps$ & Approximation tolerance of the GM \\
&$\nu$ & Smoothing parameter for the geometric median objective \\
&$\eta$ & Auxiliary variables used to define a surrogate $G$ \\
&$G$ & Surrogate to the geometric median objective using auxiliary variables $\eta$ \\
\midrule

\multirow{22}{*}{\begin{tabular}{l} Convergence \\ of RFA \end{tabular}} 
&$x$ & Input component of $z$ to make a prediction; input on device $i$ is denoted $x_i$ \\
&$y$ & Output component of $z$ which is the target prediction; output on device $i$ is denoted $y_i$ \\ 
&$\phi(x)$ & $d$-dimensional feature map (i.e., basis) used for linear model $\phi(x)\T w$ \\
&$\ell$ & Loss function, so that $f(w;\xi) = \ell(y, \phi(x)\T w)$, where $\xi = (x, y)$ is the data. 
		We take $\ell$ to the least-squares loss for the theory \\
&$F_i(w)$ & Local objective on device $i$ \\
&$w_i^\star$ & Local optimum of device $i$, i.e., $w_i^\star = \argmin_w F_i(w)$ \\
&$D_{X, i}$ & Marginal distribution of $D_i$ over the $x$-component of the data \\
&$D_{Y|X, i}$ & Conditional distribution of $y$ given $x$ on device $i$; $D_i = D_{X, i} \otimes D_{Y|X, i}$ \\
&$R$ & Bound on the norm of the feature map $R \ge \norm{\phi(x)}$ \\
&$L$ & Smoothness of each local objective $F_i$ \\
&$\mu$ & Strong convexity of each local objective $F_i$ \\
&$\kappa$ & Condition number $\kappa = R^2/\mu$ \\
&$\sigma^2$ & Noise variance in the linear model \\
&$H_i$ & Hessian $\grad^2 F_i(w)$ on each device $i$; note that it is constant for all $w$ \\
&$H$ & Hessian $\grad^2 F(w)$ of the average global objective $F$; we have, $H_k = (1/n)\sum_{i=1}^K \grad^2 F_i(w)$ \\
&$\niidterm_X$ & Degree of heterogeneity in the marginal distributions $D_{X, k}$ over $x$; cf. Eq.~\eqref{eq:rfa:niid_xy} \\
&$\niidterm_{Y|X}$ & Degree of heterogeneity in the condition distribution $D_{Y|X, k}$ of $y|x$; cf. Eq.~\eqref{eq:rfa:niid_xy} \\
&$\niidterm$ & Shorthand of $\mathrm{width}(\mathcal{D}_k)$, which denotes the degree of heterogeneity in $D_k$;
		defined as $\niidterm = \niidterm_X \niidterm_{Y|X}$ \\
&$\gamma$ & Learning rate of SGD \\
&$\tau_t$ & Number of local steps of SGD on each device in outer FL round $t$ \\
&$\delta$ & Confidence parameter in $(0, 1)$ \\
&$T$ & Number of rounds of federated learning \\
\midrule

Personalization &
$u_i$ & Personalization parameter of device $i$; it is a vector in $\reals^d$  \\

\bottomrule

\end{tabular}
\end{adjustbox}
\end{center}
\end{table}

\clearpage 
\section{Template Implementation of RFA in TensorFlow Federated} \label{sec:a:software}

\new{
We provide here a template implementation of \rbfedavg{} 
in Tensorflow Federated. The open source software is publicly available~\cite{rfa_tff}.}

\begin{lstlisting}[caption=Template implementation of \rbfedavg{} in Tensorflow Federated]
# Code for dataset setup, model setup, etc. comes here
federated_train_data = ...
model_fn = ... # See e.g., TFF tutorials

# Running FedAvg in TFF
import tensorflow_federated as tff
iterative_process = tff.learning.build_federated_averaging_process(model_fn)
state = iterative_process.initialize()
for round_num in range(1, num_rounds):
    state, metrics = iterative_process.next(state, federated_train_data)


# Running RFA
from federated_research.robust_aggregation import build_robust_federated_aggregation_process
iterative_process = build_robust_federated_aggregation_process(model_fn)
# Rest of the code remains unchanged
state = iterative_process.initialize()
for round_num in range(1, num_rounds):
    state, metrics = iterative_process.next(state, federated_train_data)

\end{lstlisting} 

\section{The Smoothed Weiszfeld Algorithm: Convergence Analysis} \label{sec:a:weiszfeld}

In this section, we prove the rate of the smoothed Weiszfeld algorithm in \Cref{prop:weiszfeld:main}. 
We start by a setup, prove a number of interesting properties, and finally prove \Cref{prop:weiszfeld:main} in \Cref{sec:a:weisfeld:rate}.

\subsection{Setup}
We are given distinct points $w_1, \cdots, w_m \in \reals^d$ 
and scalars $\alpha_1, \cdots, \alpha_m > 0$ such that $\sum_{i=1}^m \alpha_i = 1$. 
We make the following non-degenerateness assumption, 
which is assumed to hold throughout this work.
It is reasonable in the federated learning setting we consider.
\begin{assumption} \label{asmp:weiszfeld:noncollinear}
	The points $w_1, \cdots, w_i$ are not collinear.
\end{assumption}

\noindent
The geometric median is defined as any minimizer of
\begin{align} \label{eq:weiszfeld:fn}
	g(z) := \sum_{i=1}^m \alpha_i \norm{z - w_i} \,.
\end{align}
Under Assumption~\ref{asmp:weiszfeld:noncollinear}, $g$ is known to have a unique minimizer - we denote it by $z^\star$.

Given a smoothing parameter $\nu > 0$, its smoothed variant $g_\nu$ is 
\begin{align} \label{eq:weiszfeld:fn:smooth}
	g_\nu(z) := \sum_{i=1}^m \alpha_i \norma{(\nu)}{z - w_i} \,,
\end{align}
where
\begin{align} \label{eq:weiszfeld:norm:smooth}
	\norma{{(\nu)}}{z} := \max_{u\T u \le 1} \left\{ u\T z - \tfrac{\nu}{2}u\T u \right\} + \tfrac{\nu}{2}
	 	= \begin{cases}
	 		\frac{1}{2\nu} \normsq{z} + \frac{\nu}{2} \,, & \norm{z} \le \nu \\
	 		\norm{z} \,, & \norm{z} > \nu 
	 	\end{cases} \,.
\end{align}
In case $\nu = 0$, we define $g_0 \equiv g$.
It is known~\cite{beck2012smoothing} that $\norma{(\nu)}{\cdot}$ is $(1/\nu)$-smooth 
and that 
\begin{align} \label{eq:weiszfeld:norm:smooth:bound}
	0 \le \norma{(\nu)}{{\cdot}} - \norm{\cdot} \le \nu /2 
\end{align}
Under Assumption~\ref{asmp:weiszfeld:noncollinear}, $g_\nu$ has a unique minimizer
as well, denoted by $v_\nu^\star$.
We call $v_\nu^\star$ as the $\nu$-smoothed geometric median.

We let $B$ denote the diameter of the convex hull of $\{w_1, \cdots, w_m\}$, i.e., 
\begin{align}
	B := \diam(\conv\{w_1, \cdots, w_m\}) = \max_{z, z' \in \conv\{w_1, \cdots, w_m\}} \norm{z - z'} \,.
\end{align}
We also assume that $\nu < B$, since for all $\nu \ge B$, the function $g_\nu$ is simply a quadratic for all 
$z \in \conv\{w_1, \cdots, w_m\}$.

\subsection{Weiszfeld's Algorithm: Review}
The Weiszfeld algorithm~\cite{weiszfeld1937point} performs the iterations
\begin{align} \label{eq:weiszfeld:ns}
	v\pow{r+1} = 
	\begin{cases}
		\left( {\sum_{i=1}^m \beta_i\pow{r} w_i } \right) / \left({ \sum_{i=1}^m  \beta_i\pow{r}} \right)
				\,, & \text{ if } v\pow{r} \notin \{w_1, \cdots, w_i\} \,, \\
		w_i\,, & \text{ if } v\pow{r} = w_i \text{ for some } k \,,
	\end{cases}
\end{align}
where $\beta_i\pow{r} = {\alpha_i}/{\norm{v\pow{r} - w_i }}$.
It was shown in~\cite[Thm. 3.4]{kuhn1973note} that the sequence $\left(v\pow{r}\right)_{t=0}^\infty$ converges to the minimizer of 
$g$ from \eqref{eq:weiszfeld:fn}, provided no iterate coincides with one of the $w_i$'s. 
We modify Weiszfeld's algorithm to find the smoothed geometric median by considering
\begin{align} \label{eq:weiszfeld:smooth}
	v\pow{r+1} = \frac{ \sum_{i=1}^m \beta_i\pow{r} w_i }{ \sum_{i=1}^m  \beta_i\pow{r}} \,, \quad \text{where,} \quad
	\beta_i\pow{r} = \frac{\alpha_i}{\max\left\{\nu, \norm{v\pow{r} - w_i} \right\} } \,.
\end{align}
This is also stated in \Cref{algo:weiszfeld:smooth}.
Since each iteration of Weiszfeld's algorithm or its smoothed variant consists in taking a weighted average of the 
$w_i$'s, the time complexity is $\bigO(md)$ floating point operations per iteration.

\begin{algorithm}[tb]
	\caption{The Smoothed Weiszfeld Algorithm} 
	\label{algo:weiszfeld:smooth}
\begin{algorithmic}[1]
		\Require %
			$w_1, \cdots, w_m \in \reals^d$, 
			$\alpha_1, \cdots, \alpha_m > 0$ with $\sum_{i=1}^m \alpha_i = 1$,
			$\nu > 0$,
			number of iterations $R$,
			$v\pow{0} \in \conv\{w_1, \cdots, w_m\}$.
		\For{$r=0,1, \cdots, R-1$}
			\State Set $\eta_i\pow{r} = \max\left\{\nu, \norm{v\pow{r} - w_i} \right\}$ 
				and
				$\beta_i\pow{r} = {\alpha_i} / {\eta_i\pow{r}}$
					for $i = 1, \cdots, m$.
			\State Set 
				$
				v\pow{r+1} = \left( { \sum_{i=1}^m \beta_i\pow{r} w_i }\right) / \left({ \sum_{i=1}^m  \beta_i\pow{r}} \right)
				$.
		\EndFor
		\Ensure $v\pow{r}$.
	   	
\end{algorithmic}
\end{algorithm}

\subsection{Derivation}
We now derive Weiszfeld's algorithm with smoothing as 
as an alternating minimization algorithm 
or as an iterative minimization of a majorizing objective.

\myparagraph{Surrogate Definition}
Consider $\eta = (\eta_1, \cdots, \eta_m) \in \reals^m$ and define
$G: \reals^d \times \reals^m_{++} \to \reals$ as 
\begin{align} \label{eq:weiszfeld:joint-fn}
	G(z, \eta) = \frac{1}{2} \sum_{i=1}^m \alpha_i \left( \frac{\normsq{z - w_i}}{\eta_i} + \eta_i \right) \,.
\end{align}
Note firstly that $G$ is jointly convex in $z, \eta$ over its domain.

The first claim shows how to recover $g$ and $g_\nu$ from $G$.
\begin{claim} \label{claim:weiszfeld:surrogate}
	Consider $g, g_\nu$ and $G$ defined in Equations \eqref{eq:weiszfeld:fn}, \eqref{eq:weiszfeld:fn:smooth} and
	\eqref{eq:weiszfeld:joint-fn}, and fix $\nu > 0$. Then we have the following:
	\begin{align}
		\label{eq:weiszfeld:surrogate:nonsmooth}
		g(z) &= \inf_{\eta_1,\cdots, \eta_i > 0} G(z, \eta)\,, 
		\quad \text{and, } \\
		\label{eq:weiszfeld:surrogate:smooth}
		g_\nu(z) &= \min_{\eta_1,\cdots, \eta_i \ge \nu} G(z, \eta) \,.
	\end{align}
\end{claim}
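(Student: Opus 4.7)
The plan is to exploit the key structural observation that $G(z,\eta)$ is separable across the index $k$: writing $a_k := \norm{z - w_k}$, we have
\[
  G(z,\eta) = \sum_{k=1}^m \frac{\alpha_k}{2} \left( \frac{a_k^2}{\eta_k} + \eta_k \right),
\]
so the infimum/minimum over $\eta$ can be carried out coordinate by coordinate. Thus both identities reduce to a one-dimensional scalar minimization: for each $k$, minimize $h_k(\eta_k) := a_k^2/\eta_k + \eta_k$ over the relevant feasible set, then sum.

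For \eqref{eq:weiszfeld:surrogate:nonsmooth}, I would minimize $h_k$ over $\eta_k > 0$. When $a_k > 0$, AM--GM gives $h_k(\eta_k) \ge 2 a_k$ with equality iff $\eta_k = a_k$; when $a_k = 0$, the infimum of $\eta_k$ over $\eta_k > 0$ is $0$ (attained only in the limit, hence the use of $\inf$ rather than $\min$). In either case the infimum equals $2 a_k = 2\norm{z - w_k}$, and multiplying by $\alpha_k/2$ and summing recovers $g(z)$.

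For \eqref{eq:weiszfeld:surrogate:smooth}, I would minimize $h_k$ over the closed set $\{\eta_k \ge \nu\}$; since the feasible set is closed and $h_k \to \infty$ at the boundary, the minimum is attained (so $\min$ is justified). Here a short case split is the only subtle step:
\begin{itemize}
\item If $a_k \ge \nu$, then the unconstrained minimizer $\eta_k = a_k$ lies in the feasible set, giving minimum $2 a_k$.
\item If $a_k < \nu$, then $h_k$ is strictly increasing on $[\nu,\infty)$ (its derivative $1 - a_k^2/\eta_k^2$ is nonnegative there), so the minimum occurs at $\eta_k = \nu$ with value $a_k^2/\nu + \nu$.
\end{itemize}
Multiplying by $\alpha_k/2$ gives, respectively, $\alpha_k a_k$ and $\alpha_k(a_k^2/(2\nu) + \nu/2)$, which matches $\alpha_k \norma{(\nu)}{z-w_k}$ by the definition of $\norma{(\nu)}{\cdot}$ in \eqref{eq:weiszfeld:norm:smooth}. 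Summing over $k$ yields $g_\nu(z)$.

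There is no real obstacle: the claim reduces entirely to the univariate observation that the perspective-type function $h_k(\eta) = a_k^2/\eta + \eta$ is the Legendre-type dual representation of $2|a_k|$ (or its Moreau envelope when restricted to $\eta \ge \nu$). The only point requiring care is the distinction between $\inf$ and $\min$ in \eqref{eq:weiszfeld:surrogate:nonsmooth} at points $z = w_k$, which is why the unsmoothed identity is stated with an infimum.
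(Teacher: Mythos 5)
Your proposal is correct and follows essentially the same route as the paper's proof: decompose $G$ coordinate-wise, apply AM--GM (handling $\norm{z-w_k}=0$ via the limit $\eta_k\to 0$) for the unconstrained infimum, and do the same two-way case split on whether $\norm{z-w_k}\ge\nu$ for the constrained minimum, matching the two branches in the definition of $\norma{(\nu)}{\cdot}$.
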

\begin{proof}
	Define $G_i: \reals^d \times \reals_{++} \to \reals$ by
	\[
		G_i(z, \eta_i) := \frac{1}{2}  \left( \frac{\normsq{z - w_i}}{\eta_i} + \eta_i \right)\,,
	\]
	so that $G(z, \eta) = \sum_{i=1}^m \alpha_i G_i(z, \eta_i)$.

	Since $\eta_i > 0$, the arithmetic-geometric mean inequality implies that 
	$G_i(z, \eta_i) \ge \norm{z - w_i}$
	for each $i$. When $\norm{z - w_i} > 0$, the inequality above holds with equality
	when $\normsq{z - w_i}/{\eta_i} =  \eta_i$, or equivalently, $\eta_i =  \norm{z - w_i}$.
	On the other hand, when $\norm{z- w_i}  = 0$, let $\eta_i \to 0$ to conclude that 
	\[
		\inf_{\eta_i > 0} G_i(z, \eta_i) = \norm{z - w_i} \,.
	\]

	For the second part, we note that if $\norm{z-w_i} \ge \nu$, then $\eta_i =  \norm{z - w_i} \ge \nu$
	minimizes $G_i(z, \eta_i)$, so that $\min_{\eta_i \ge \nu} G_i(z, \eta_i) = \norm{z-w_i}$. 
	On the other hand, when $\norm{z-w_i} < \nu$, we note that $G_i(z, \cdot)$ is minimized
	over $[\nu, \infty)$ at $\eta_i = \nu$,
	in which case we get $G_i(z, \eta)= \normsq{z - w_i}/(2\nu) + \nu / 2$. 
	From \eqref{eq:weiszfeld:norm:smooth}, we conclude that 
	\[
		\min_{\eta_i \ge \nu} G_i(z, \eta_i) = \norma{(\nu)}{z - w_i} \,.
	\]
	The proof is complete since $G(z, \eta) = \sum_{i=1}^m \alpha_i G_i(z, \eta_i)$.
\end{proof}

\Cref{claim:weiszfeld:surrogate} now allows us to consider 
the following problem in lieu of minimizing $g_\nu$ from \eqref{eq:weiszfeld:fn:smooth}.

\begin{align} \label{eq:weiszfeld:smooth:surrogate_prob}
	\min_{
	\scriptsize{\begin{matrix} z \in \reals^d, \\ \eta_1, \cdots, \eta_m \ge \nu \end{matrix} }
	} G(z, \eta) \,.
\end{align}

\myparagraph{Alternating Minimization}
Next, we consider an alternating minimization algorithm to minimize $G$ in $z, \eta$. 
The classical technique of alternating minimization method, known also as the block-coordinate or block-decomposition method 
\citep[see, e.g.,][]{bertsekas1999nonlinear},
minimizes a function $f: X \times Y \to \reals$ using the updates
\[
	x\pow{r+1} = \argmin_{x \in X} f(x, y\pow{r}) \quad \text{ and, } y\pow{r+1} = \argmin_{y \in Y} f(x\pow{r+1}, y) \,.
\]
Application of this method to Problem~\eqref{eq:weiszfeld:smooth:surrogate_prob} yields the updates
\begin{align}
\label{eq:weiszfeld:smooth:alternating}
\begin{aligned}
	\eta\pow{r} &= \argmin_{\eta_1, \cdots, \eta_m \ge \nu}  G(v\pow{r}, \eta)
		= \left( \argmin_{\eta_i \ge \nu}  \left\{ \frac{\normsq{v\pow{r} - w_i}}{\eta_i} + \eta_i \right\} \right)_{i=1}^m \,, \\
	v\pow{r+1} &= \argmin_{z \in \reals^d} G(z, \eta\pow{r})
		= \argmin_{z \in \reals^d} \sum_{i=1}^m \frac{\alpha_i}{\eta_i\pow{r}} \normsq{z - w_i} \,.
\end{aligned}
\end{align}
These updates can be written in closed form as 
\begin{align} \label{eq:weiszfeld:smooth:closed_form}
\begin{aligned}
	\eta_i\pow{r} &= \max\{ \nu, \norm{v\pow{r} - w_i}\}\,,  \\
	v\pow{r+1} &= \left( \sum_{i=1}^m \frac{\alpha_i}{\eta_i\pow{r}} w_i \right) / 
		\left( \sum_{i=1}^m \frac{\alpha_i}{\eta_i\pow{r}} \right)  \,.
\end{aligned}
\end{align}
This gives the smoothed Weiszfeld algorithm, as pointed out by the following claim.

\begin{claim} \label{eq:weiszfeld:as:alternating:min}
	For any fixed $\nu > 0$ and starting point $v\pow{0} \in \reals^d$, the sequences $\left(v\pow{r}\right)$ produced by 
	\eqref{eq:weiszfeld:smooth} and \eqref{eq:weiszfeld:smooth:closed_form}, and hence, \eqref{eq:weiszfeld:smooth:alternating} are identical.
\end{claim}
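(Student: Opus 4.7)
The plan is to verify the two equalities separately: first, that the alternating minimization updates in \eqref{eq:weiszfeld:smooth:alternating} admit the closed-form solution \eqref{eq:weiszfeld:smooth:closed_form}, and second, that this closed-form is identical to the smoothed Weiszfeld iteration \eqref{eq:weiszfeld:smooth}. This is essentially a routine verification since both $\eta$ and $z$ minimizations separate into tractable subproblems.

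For the $\eta$-update, I would observe that the optimization over $\eta$ decouples across coordinates, reducing to $m$ independent one-dimensional problems of minimizing $\psi_k(\eta_k) := \normsq{z^{(t)} - w_k}/\eta_k + \eta_k$ over $\eta_k \ge \nu$. The proof of Claim~\ref{claim:weiszfeld:surrogate} already established via the AM-GM argument that the unconstrained minimizer over $\eta_k > 0$ is $\eta_k = \norm{z^{(t)} - w_k}$. When $\norm{z^{(t)} - w_k} \ge \nu$ this is feasible for the constrained problem and hence optimal; when $\norm{z^{(t)} - w_k} < \nu$, the function $\psi_k$ is increasing on $[\nu, \infty)$ (as its unconstrained minimum lies below $\nu$ and $\psi_k$ is convex), so the minimizer on $[\nu, \infty)$ is $\eta_k = \nu$. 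Combining the two cases gives $\eta_k^{(t)} = \max\{\nu, \norm{z^{(t)} - w_k}\}$, matching the first line of \eqref{eq:weiszfeld:smooth:closed_form}.

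For the $z$-update, the objective $z \mapsto \sum_{k=1}^m (\alpha_k/\eta_k^{(t)}) \normsq{z - w_k}$ is a strictly convex quadratic (since $\alpha_k/\eta_k^{(t)} > 0$), so its unique minimizer is obtained by setting the gradient to zero: $\sum_{k=1}^m (\alpha_k/\eta_k^{(t)})(z - w_k) = 0$, yielding the weighted-average formula on the second line of \eqref{eq:weiszfeld:smooth:closed_form}. Together with the $\eta$-update, this shows \eqref{eq:weiszfeld:smooth:alternating} and \eqref{eq:weiszfeld:smooth:closed_form} produce identical sequences.

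Finally, to equate \eqref{eq:weiszfeld:smooth:closed_form} with \eqref{eq:weiszfeld:smooth}, I simply note the definition $\beta_k^{(t)} := \alpha_k/\max\{\nu, \norm{z^{(t)} - w_k}\}$ used in \eqref{eq:weiszfeld:smooth} coincides with $\alpha_k/\eta_k^{(t)}$ from \eqref{eq:weiszfeld:smooth:closed_form}; substituting into the weighted-average formula produces exactly the update in \eqref{eq:weiszfeld:smooth}. An induction on $t$ starting from the common $z^{(0)}$ then gives equality of the entire sequences. There is no real obstacle here: the only subtlety worth stating explicitly is the case-split in the $\eta$-update at the boundary $\norm{z^{(t)} - w_k} = \nu$, which is harmless since both cases return the value $\nu$.
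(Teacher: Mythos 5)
Your proof is correct and follows essentially the same route as the paper: the paper's own proof is just the one-line observation that substituting $\eta_k\pow{t} = \max\{\nu, \norm{z\pow{t} - w_k}\}$ into the $z$-update recovers \eqref{eq:weiszfeld:smooth}, with the closed forms for the alternating steps asserted in the surrounding text (the $\eta$-step relying on the AM--GM argument already given in Claim~\ref{claim:weiszfeld:surrogate}). You have simply filled in the routine verifications in more detail, which is fine.
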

\begin{proof}
	Follows from plugging in the expression from $\eta_i\pow{r}$ in the update for $v\pow{r+1}$ in \eqref{eq:weiszfeld:smooth:closed_form}.
\end{proof}

\myparagraph{Majorization-Minimization}
We now instantiate the smoothed Weiszfeld algorithm as a majorization-minimization scheme. 
In particular, it is the iterative minimization of a first-order surrogate in the sense of
\cite{mairal2013optimization,mairal2015incremental}.

Define $g_\nu\pow{r} : \reals^d \to \reals$ as 
\begin{align} \label{eq:weiszfeld:majorizer}
	g_\nu\pow{r}(z) := G(z, \eta\pow{r}) \,,
\end{align}
where $\eta\pow{r}$ is as defined in \eqref{eq:weiszfeld:smooth:alternating}.
The $z$-step of \eqref{eq:weiszfeld:smooth:alternating} simply sets $v\pow{r+1}$ to be the minimizer of 
$g_\nu\pow{r}$.

We note the following properties of $g_\nu\pow{r}$.
\begin{claim} \label{claim:weiszfeld:maj-min}
	For $g_\nu\pow{r}$ defined in \eqref{eq:weiszfeld:majorizer}, the following properties hold:
	\begin{align}
		\label{eq:weiszfeld:surrogate:majorizing}
		g_\nu\pow{r}(z) &\ge g_\nu(z)\,, \quad \text{for all} z \in \reals^d \,, \\
		\label{eq:weiszfeld:surrogate:0th-order}
		g_\nu\pow{r}(v\pow{r}) &= g_\nu(v\pow{r})\,, \quad \text{and,} \\
		\label{eq:weiszfeld:surrogate:1st-order}
		\grad g_\nu\pow{r}(v\pow{r}) &= \grad g_\nu(v\pow{r}) \,.
	\end{align}
	Moreover $g\pow{r}$ can also be written as 
	\begin{align} \label{eq:weisfeld:surrogate:taylor}
		g_\nu\pow{r}(z) = g_\nu(v\pow{r}) + \grad g_\nu(v\pow{r})\T \left(z - v\pow{r}\right) + \frac{L\pow{r}}{2} \normsq{z - v\pow{r}}\,,
	\end{align}
	where 
	\begin{align} \label{eq:weiszfed:l}
		L\pow{r} := \sum_{i=1}^m \frac{\alpha_i}{\eta_i\pow{r}} \,.
	\end{align}
\end{claim}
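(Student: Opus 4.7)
The plan is to prove the four properties in the order (\ref{eq:weiszfeld:surrogate:majorizing}), (\ref{eq:weiszfeld:surrogate:0th-order}), (\ref{eq:weiszfeld:surrogate:1st-order}), (\ref{eq:weisfeld:surrogate:taylor}), using Claim~\ref{claim:weiszfeld:surrogate} and direct differentiation of the smoothed norm. The first two properties are nearly immediate. For (\ref{eq:weiszfeld:surrogate:majorizing}), I would invoke \eqref{eq:weiszfeld:surrogate:smooth} to write $g_\nu(z) = \min_{\eta' \geq \nu} G(z, \eta') \le G(z, \eta^{(t)}) = g_\nu^{(t)}(z)$, where the minimum is taken componentwise over $\eta'_k \ge \nu$. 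For (\ref{eq:weiszfeld:surrogate:0th-order}), I would note that the update \eqref{eq:weiszfeld:smooth:alternating} is precisely the minimizer attaining the infimum in \eqref{eq:weiszfeld:surrogate:smooth} at $z = z^{(t)}$, so $G(z^{(t)}, \eta^{(t)}) = g_\nu(z^{(t)})$.

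The main content is (\ref{eq:weiszfeld:surrogate:1st-order}), and it is the step I expect to require the most care. I would compute $\grad g_\nu$ explicitly by differentiating \eqref{eq:weiszfeld:norm:smooth} piecewise: for $u = z - w_k$, one has $\grad \norma{(\nu)}{u} = u/\nu$ when $\norm{u} \le \nu$ and $\grad \norma{(\nu)}{u} = u/\norm{u}$ when $\norm{u} > \nu$. The key observation is that both cases collapse into the single formula $\grad \norma{(\nu)}{u} = u / \max\{\nu, \norm{u}\}$, with agreement of the two expressions on the boundary $\norm{u} = \nu$ ensuring differentiability there. Summing yields
\begin{equation*}
\grad g_\nu(z) = \sum_{k=1}^m \alpha_k \frac{z - w_k}{\max\{\nu, \norm{z - w_k}\}}.
\end{equation*}
Evaluating at $z = z^{(t)}$ and recognizing $\eta_k^{(t)} = \max\{\nu, \norm{z^{(t)} - w_k}\}$ from \eqref{eq:weiszfeld:smooth:closed_form} gives $\grad g_\nu(z^{(t)}) = \sum_k (\alpha_k / \eta_k^{(t)})(z^{(t)} - w_k)$. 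On the other hand, since $g_\nu^{(t)}(z) = G(z, \eta^{(t)})$ is a quadratic in $z$ with $\eta^{(t)}$ frozen, direct differentiation of \eqref{eq:weiszfeld:joint-fn} gives the same expression at $z^{(t)}$.

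Finally, for (\ref{eq:weisfeld:surrogate:taylor}), I would observe that $g_\nu^{(t)}(z) = G(z, \eta^{(t)})$ is a quadratic in $z$ whose Hessian is the constant $L^{(t)} I$ with $L^{(t)} = \sum_k \alpha_k / \eta_k^{(t)}$ as in \eqref{eq:weiszfed:l}. Hence its second-order Taylor expansion about $z^{(t)}$ is exact:
\begin{equation*}
g_\nu^{(t)}(z) = g_\nu^{(t)}(z^{(t)}) + \grad g_\nu^{(t)}(z^{(t)})^\top (z - z^{(t)}) + \frac{L^{(t)}}{2} \normsq{z - z^{(t)}}.
\end{equation*}
Substituting the identities (\ref{eq:weiszfeld:surrogate:0th-order}) and (\ref{eq:weiszfeld:surrogate:1st-order}) just established yields the claimed form \eqref{eq:weisfeld:surrogate:taylor}, completing the proof.
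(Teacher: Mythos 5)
Your proposal is correct and follows essentially the same route as the paper's proof: the majorization and tangency follow from Claim~\ref{claim:weiszfeld:surrogate} and the definition of $\eta\pow{t}$, the gradient identity follows from the piecewise differentiation of $\norma{(\nu)}{\cdot}$ collapsing to division by $\max\{\nu, \norm{z - w_k}\}$, and the quadratic expansion is the exact Taylor expansion of $G(\cdot, \eta\pow{t})$. Your extra remark that the two gradient branches agree on the boundary $\norm{u} = \nu$ is a welcome detail the paper leaves implicit.
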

\begin{proof}
	The first part follows because
	\[
		g_\nu(z) = \min_{\eta_1, \cdots, \eta_m} G(z, \eta) \le G(z, \eta\pow{r}) = g_\nu\pow{r}(z) \,.
	\]
	For Eq.~\eqref{eq:weiszfeld:surrogate:0th-order}, note that the inequality above is an equality at $v\pow{r}$
	by the definition of $\eta\pow{r}$ from \eqref{eq:weiszfeld:smooth:alternating}.
	To see \eqref{eq:weiszfeld:surrogate:1st-order}, note that
	\[
		\grad g_\nu(z) = \sum_{i=1}^m \alpha_m \frac{z - w_i}{\max\{\nu, \norm{z - w_i}\}} \,.
	\]
	Then, by the definition of $\eta\pow{r}$ from \eqref{eq:weiszfeld:smooth:closed_form}, we get that 
	\[
		\grad g_\nu(v\pow{r}) = \sum_{i=1}^m \frac{\alpha_m}{\eta_i\pow{r}} (v\pow{r} - w_i) = \grad g_\nu\pow{r}(v\pow{r}) \,.
	\]
	The obtain the expansion \eqref{eq:weisfeld:surrogate:taylor}, we write out the Taylor expansion of 
	the quadratic $g\pow{r}(z)$ around $v\pow{r}$ to get 
	\[
		g_\nu\pow{r}(z) = g_\nu\pow{r}(v\pow{r}) + \grad g_\nu\pow{r}(v\pow{r})\T \left(z - v\pow{r}\right) 
				+ \frac{L\pow{r}}{2} \normsq{z - v\pow{r}}\,,
	\]
	and complete the proof by plugging in \eqref{eq:weiszfeld:surrogate:0th-order} and \eqref{eq:weiszfeld:surrogate:1st-order}.
\end{proof}

\myparagraph{Gradient Descent}
The next claim rewrites the smoothed Weiszfeld algorithm as gradient descent on $g_\nu$.

\begin{claim} \label{claim:weiszfeld:gradient-descent}
	Equation~\eqref{eq:weiszfeld:smooth} can also be written as 
	\begin{align} \label{eq:weiszfeld:gradient-descent}
		v\pow{r+1} = v\pow{r} - \frac{1}{L\pow{r}} \grad g_\nu(v\pow{r})\,,
	\end{align}
	where $L\pow{r}$ is as defined in \eqref{eq:weiszfed:l}.
\end{claim}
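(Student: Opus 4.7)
The plan is to proceed by a direct algebraic manipulation, leveraging the gradient expression for $g_\nu$ already derived in the proof of Claim~\ref{claim:weiszfeld:maj-min}. First I would start from the closed-form update in \eqref{eq:weiszfeld:smooth:closed_form}, which can be rewritten using $L\pow{t} = \sum_{k=1}^m \alpha_k/\eta_k\pow{t}$ as
\[
  z\pow{t+1} = \frac{1}{L\pow{t}} \sum_{k=1}^m \frac{\alpha_k}{\eta_k\pow{t}} w_k.
\]

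Next, I would subtract $z\pow{t}$ from both sides and use the identity $z\pow{t} = \frac{1}{L\pow{t}} \sum_{k=1}^m (\alpha_k/\eta_k\pow{t}) z\pow{t}$ to combine the terms:
\[
  z\pow{t+1} - z\pow{t} = -\frac{1}{L\pow{t}} \sum_{k=1}^m \frac{\alpha_k}{\eta_k\pow{t}} \bigl( z\pow{t} - w_k \bigr).
\]
Then I would invoke the formula $\grad g_\nu(z\pow{t}) = \sum_{k=1}^m (\alpha_k/\eta_k\pow{t})(z\pow{t} - w_k)$, which was established in the proof of Claim~\ref{claim:weiszfeld:maj-min} (using that $\eta_k\pow{t} = \max\{\nu, \norm{z\pow{t} - w_k}\}$ matches the denominator in $\grad \norma{(\nu)}{\cdot}$). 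Substituting yields the desired identity $z\pow{t+1} = z\pow{t} - (1/L\pow{t}) \grad g_\nu(z\pow{t})$, completing the proof.

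There is no real obstacle here; the claim is a one-line consequence of rearranging the barycentric update and recognizing the gradient. The only subtlety worth flagging is making sure the gradient formula from Claim~\ref{claim:weiszfeld:maj-min} is cited (rather than re-derived), since that is where the matching between the Weiszfeld weights and the smoothed-norm gradient actually happens.
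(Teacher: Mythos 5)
Your proof is correct and rests on the same key identity as the paper's, namely the formula $\grad g_\nu(z\pow{t}) = \sum_{k=1}^m (\alpha_k/\eta_k\pow{t})(z\pow{t} - w_k)$ established in Claim~\ref{claim:weiszfeld:maj-min}; the paper merely packages the final step differently, obtaining the update as the minimizer of the quadratic surrogate in \eqref{eq:weisfeld:surrogate:taylor} rather than by rearranging the barycentric combination directly. Both are one-line arguments with the same content, so no further comment is needed.
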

\begin{proof}
	Use $v\pow{r+1} = \argmin_{z \in \reals^d} g_\nu\pow{r}(z)$, 
	where $g_\nu\pow{r}$ is written using \eqref{eq:weisfeld:surrogate:taylor}.
\end{proof}

\subsection{Properties of Iterates}
The first claim reasons about the iterates $v\pow{r}, \eta\pow{r}$.

\begin{claim} \label{eq:weiszfeld_alg:prop}
	Starting from any $v\pow{0} \in \conv\{w_1, \cdots, w_m\}$, the
	sequences $(\eta\pow{r})$ and $(v\pow{r})$ produced by %
	\Cref{algo:weiszfeld:smooth}
	satisfy 
	\begin{itemize}
		\item $v\pow{r} \in \conv\{w_1, \cdots, w_m\}$ for all $t \ge 0$, and, 
		\item $\nu \le \eta_i\pow{r} \le B\,$ for all $i=1, \cdots, m$, and $t \ge 1$, 
	\end{itemize}
	where $B = \diam(\conv\{w_1, \cdots, w_m\})$. Furthermore, $L\pow{r}$ 
	defined in \eqref{eq:weiszfed:l} satisfies $1/B \le L\pow{r} \le 1/\nu$ for all $t\ge 0$.
\end{claim}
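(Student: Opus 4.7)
The proof is essentially by induction on $t$, exploiting the fact that the closed-form update \eqref{eq:weiszfeld:smooth:closed_form} is itself a convex combination of the $w_k$'s.

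First, I would establish the convex hull invariance $z\pow{t} \in \conv\{w_1,\dots,w_m\}$ by induction on $t$. The base case is the hypothesis on $z\pow{0}$. For the inductive step, I read off the update from \eqref{eq:weiszfeld:smooth:closed_form} as $z\pow{t+1} = \sum_{k=1}^m \lambda_k w_k$ with $\lambda_k = \beta_k\pow{t} / \sum_{j=1}^m \beta_j\pow{t}$. Since $\eta_k\pow{t} = \max\{\nu, \norm{z\pow{t}-w_k}\} \ge \nu > 0$, each $\beta_k\pow{t} = \alpha_k/\eta_k\pow{t}$ is strictly positive, so the $\lambda_k$'s form a bona fide convex combination. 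Hence $z\pow{t+1}$ lies in the convex hull.

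Next, the two-sided bounds on $\eta_k\pow{t}$ follow immediately. The lower bound $\eta_k\pow{t} \ge \nu$ is built into the definition via the max. For the upper bound, the convex-hull invariance established in Step~1 yields $z\pow{t} \in \conv\{w_1,\dots,w_m\}$, and since $w_k$ also lies in this hull, we get $\norm{z\pow{t}-w_k} \le \diam(\conv\{w_1,\dots,w_m\}) = R$; combined with the standing assumption $\nu \le R$ (imposed earlier in the setup), this gives $\eta_k\pow{t} \le R$. Finally, the bound on $L\pow{t} = \sum_{k=1}^m \alpha_k/\eta_k\pow{t}$ follows by applying the bounds on $\eta_k\pow{t}$ term-by-term and using $\sum_k \alpha_k = 1$, which yields $1/R \le L\pow{t} \le 1/\nu$.

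The main obstacle is essentially nil: the claim is a structural sanity check that ensures the smoothed Weiszfeld iterates stay in a compact region and that the effective smoothness $L\pow{t}$ is uniformly controlled, which will be used in the convergence analysis (e.g.\ in Lemma~\ref{lemma:weiszfeld:descent:main} and Prop.~\ref{prop:weiszfeld:main}). The only minor bookkeeping point is the index range: the bounds on $\eta_k\pow{t}$ are stated for $t \ge 1$ while those on $L\pow{t}$ are for $t \ge 0$, but since $z\pow{0} \in \conv\{w_1,\dots,w_m\}$ by hypothesis, the same argument applies at $t=0$ as well, so no separate base case is needed for the $L\pow{0}$ bound.
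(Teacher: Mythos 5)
Your proposal is correct and follows essentially the same route as the paper's (much terser) proof: the convex-hull invariance comes from reading the update as a convex combination with strictly positive weights, the $\eta_k\pow{t}$ bounds then follow from the closed form together with the diameter bound and the standing assumption $\nu < R$, and the $L\pow{t}$ bound follows term-by-term using $\sum_k \alpha_k = 1$. Your additional remark resolving the $t=0$ case for $L\pow{0}$ via the hypothesis on $z\pow{0}$ is a correct and welcome piece of bookkeeping that the paper leaves implicit.
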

\begin{proof}
	The first part follows for $t \ge 1$ from the update \eqref{eq:weiszfeld:smooth}, where
	Claim~\ref{eq:weiszfeld:as:alternating:min} shows the equivalence of 
	\eqref{eq:weiszfeld:smooth} and \eqref{eq:weiszfeld:smooth:alternating}. 
	Then case of $t=0$ is assumed.
	The second part follows from \eqref{eq:weiszfeld:smooth:closed_form}
	and the first part. The bound on $L\pow{r}$ follows from the second part
	since $\sum_{i=1}^m \alpha_i = 1$.
\end{proof}

The next result shows that it is a descent algorithm. Note that the non-increasing nature of the  
sequence $\left( g_\nu(v\pow{r}) \right)$ also follows from the majorization-minimization 
viewpoint~\citep{mairal2015incremental}.
Here, we show that this sequence is strictly decreasing.
Recall that $v_\nu^\star$ 
is the unique minimizer of $g_\nu$. 
\begin{lemma} \label{lemma:weiszfeld:monotonicity}
	The sequence $(v\pow{r})$ produced by \Cref{algo:weiszfeld:smooth} satisfies
	$g_\nu(v\pow{r+1}) < g_\nu(v\pow{r})$ unless $v\pow{r} = v_\nu^\star$.
\end{lemma}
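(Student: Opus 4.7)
The plan is to combine the descent inequality from Lemma~\ref{lemma:weiszfeld:descent:main}\ref{part:lemma:weiszfeld:descent:main:a} with the uniqueness of the minimizer of $g_\nu$ guaranteed by the non-collinearity Assumption~\ref{asmp:weiszfeld:noncollinear}. Recall that that lemma gives
\[
g_\nu(z\pow{t+1}) \le g_\nu(z\pow{t}) - \frac{1}{2 L\pow{t}} \normsq{\grad g_\nu(z\pow{t})}\,,
\]
where $L\pow{t} = \sum_{k=1}^m \alpha_k / \eta_k\pow{t}$. So strict descent reduces to showing two facts: (i) $L\pow{t}$ is strictly positive and finite, and (ii) $\grad g_\nu(z\pow{t}) \neq 0$ whenever $z\pow{t} \neq z_\nu^\star$.

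First, I would quote Claim~\ref{eq:weiszfeld_alg:prop}, which gives the uniform bounds $1/R \le L\pow{t} \le 1/\nu$ for every $t \ge 0$ (this uses $z\pow{0} \in \conv\{w_1, \ldots, w_m\}$ together with the fact that the alternating-minimization update keeps all subsequent iterates in this convex hull, ensuring $\eta_k\pow{t}$ is sandwiched between $\nu$ and $R$). In particular, $L\pow{t} \in (0, \infty)$, so the quadratic penalty $-\normsq{\grad g_\nu(z\pow{t})}/(2 L\pow{t})$ is strictly negative as soon as the gradient is nonzero.

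Second, I would argue that $z_\nu^\star$ is the unique critical point of $g_\nu$. The function $g_\nu$ is convex and continuously differentiable as a sum of Huber-smoothed norms $\norma{(\nu)}{z - w_k}$. Under Assumption~\ref{asmp:weiszfeld:noncollinear}, $g_\nu$ admits a unique minimizer $z_\nu^\star$ (as noted in the setup), hence by convexity its only stationary point is $z_\nu^\star$. Consequently, if $z\pow{t} \neq z_\nu^\star$ then $\grad g_\nu(z\pow{t}) \neq 0$, and plugging this into the descent inequality with $L\pow{t} \le 1/\nu$ yields
\[
g_\nu(z\pow{t+1}) \le g_\nu(z\pow{t}) - \frac{\nu}{2} \normsq{\grad g_\nu(z\pow{t})} < g_\nu(z\pow{t})\,,
\]
which is the desired strict inequality. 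The main subtlety is simply ensuring $L\pow{t}$ is bounded (both away from zero and from infinity) uniformly in $t$, which Claim~\ref{eq:weiszfeld_alg:prop} handles cleanly; everything else is a direct appeal to strict convexity via the uniqueness of $z_\nu^\star$.
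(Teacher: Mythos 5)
Your proof is correct, but it takes a different route from the paper's. The paper argues directly from the alternating-minimization structure: it writes the chain $g_\nu(z\pow{t+1}) \le G(z\pow{t+1},\eta\pow{t}) = \min_z G(z,\eta\pow{t}) \le G(z\pow{t},\eta\pow{t}) = g_\nu(z\pow{t})$, and then analyzes the equality case --- if no strict decrease occurs, the $L\pow{t}$-strong convexity of $G(\cdot,\eta\pow{t})$ (with $L\pow{t}\ge 1/R$ from Claim~\ref{eq:weiszfeld_alg:prop}) forces $z\pow{t}=\argmin_z G(z,\eta\pow{t})$, and stitching together the first-order optimality conditions in $z$ and in $\eta$ shows that the pair $(z\pow{t},\eta\pow{t})$ is a global minimizer of the jointly convex $G$ over $\reals^d\times\mathcal{E}_\nu$, hence $z\pow{t}=z_\nu^\star$. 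You instead pass through the gradient-descent reformulation: the descent inequality with explicit decrement $\normsq{\grad g_\nu(z\pow{t})}/(2L\pow{t})$, the bound $0 < L\pow{t}\le 1/\nu$, and the observation that a convex differentiable function with a unique minimizer has no other stationary points, so $\grad g_\nu(z\pow{t})\neq 0$ whenever $z\pow{t}\neq z_\nu^\star$. Both arguments are sound and rest on the same ingredients (the surrogate identities and the bounds on $L\pow{t}$); yours is shorter and quantitative, giving a per-step decrease of at least $(\nu/2)\normsq{\grad g_\nu(z\pow{t})}$ rather than bare strictness, while the paper's equality-case analysis stays entirely within the alternating-minimization view and does not need $g_\nu$ to be differentiable or the gradient-descent identity of Claim~\ref{claim:weiszfeld:gradient-descent}, so it would survive in settings where that reformulation is unavailable. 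One presentational caveat: in the appendix's ordering the descent inequality you invoke is only written out later (inside the proof of the convergence theorem), so you should derive it on the spot from Claims~\ref{claim:weiszfeld:maj-min} and~\ref{claim:weiszfeld:gradient-descent}, which do precede the lemma; there is no circularity, just a reordering.
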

\begin{proof}
	Let $\mathcal{E}_\nu = \{\eta \in \reals^m \,:\, \eta_i \ge \nu \text{ for } k=1,\cdots, m \}$.
	Starting with \eqref{eq:weiszfeld:surrogate:smooth}, we successively deduce, 
	\begin{align*}
		g_\nu(v\pow{r+1}) 
		&= \min_{\eta_1, \cdots, \eta_m \ge \nu} G(v\pow{r+1}, \nu) \\
		&\le G(v\pow{r+1}, \eta\pow{r}) \\
		&= \min_{z \in \reals^d} G(z, \eta\pow{r}) \\
		&\le G(v\pow{r}, \eta\pow{r}) \\
		&= \min_{\eta_1, \cdots, \eta_m \ge \nu} G(v\pow{r}, \eta) \\
		&= g_\nu(v\pow{r}) \,.
	\end{align*}
	Here, we used the fact that $v\pow{r+1}$ minimizes $G(\cdot, \eta\pow{r})$ over $\reals^d$ and that
	$\eta\pow{r}$ minimizes $G(v\pow{r}, \cdot)$ over $\mathcal{E}_\nu$.

	Suppose now that $g_\nu(v\pow{r+1}) = g_\nu(v\pow{r})$.
	In this case, both the inequalities above hold with equality.
	Since $G(\cdot, \eta\pow{r})$ is $L\pow{r}$-strongly convex 
	where $L\pow{r} \ge 1/B$ (cf. Claim~\ref{eq:weiszfeld_alg:prop}),
	this implies that $v\pow{r} = \argmin_{z \in \reals^d} G(z, \eta\pow{r})$.
	By definition then, $\eta\pow{r+1} = \eta\pow{r}$ is the unique minimizer of $G(v\pow{r}, \cdot)$
	over $S$, since $G(v\pow{r}, \cdot)$ is strictly convex. 
	The associated first-order optimality conditions are the following:
	\[
		\grad_z G(v\pow{r}, \eta\pow{r}) = 0\,, 
		\quad \text{and,} \quad
		\grad_\eta G(v\pow{r}, \eta\pow{r})\T(\eta - \eta\pow{r}) \ge 0 \quad \forall \eta \in \mathcal{E}_\nu \,.
	\]
	Putting these together, we find that 
	the pair $(v\pow{r}, \eta\pow{r})$ satisfies the first-order optimality conditions for 
	$G$ over the domain $\reals^d \times \mathcal{E}_\nu$. Hence, $v\pow{r} = v_\nu^\star$.
\end{proof}

The next lemma shows that $\norm{v\pow{r} - z^\star}$ is non-increasing.
This property was shown in \cite[Corollary 5.1]{beck15weiszfeld} for the case of Weiszfeld algorithm without smoothing.

\begin{lemma} \label{lemma:weiszfeld:contraction}
	The sequence $(v\pow{r})$ produced by \Cref{algo:weiszfeld:smooth} satisfies for all $t \ge 0$,
	\[
		\norm{v\pow{r+1} - v_\nu^\star} \le \norm{v\pow{r} - v_\nu^\star} \,.
	\]
	Furthermore, if $g_\nu(v\pow{r+1}) \ge g_\nu(z^\star)$, then
	it holds that 
	\[
		\norm{v\pow{r+1} - z^\star} \le \norm{v\pow{r} - z^\star} \,.
	\]
\end{lemma}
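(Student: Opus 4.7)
The strategy is to extract from the proof of Lemma~\ref{lemma:weiszfeld:descent:main}\ref{part:lemma:weiszfeld:descent:main:b} a single reference-point inequality and then obtain both halves of the lemma by instantiating it at two different points. Concretely, for any reference $u \in \reals^d$ I would establish
\begin{align*}
    g_\nu(z\pow{t+1}) \le g_\nu(u) + \frac{L\pow{t}}{2}\bigl(\normsq{z\pow{t} - u} - \normsq{z\pow{t+1} - u}\bigr) \,,
\end{align*}
which, since $L\pow{t} > 0$ by Claim~\ref{eq:weiszfeld_alg:prop}, rearranges to $\norm{z\pow{t+1} - u} \le \norm{z\pow{t} - u}$ whenever $g_\nu(z\pow{t+1}) \ge g_\nu(u)$.

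To derive the displayed inequality, I would first combine $g_\nu(z\pow{t+1}) \le g_\nu\pow{t}(z\pow{t+1})$ with the quadratic expansion \eqref{eq:weisfeld:surrogate:taylor} from Claim~\ref{claim:weiszfeld:maj-min} to obtain $g_\nu(z\pow{t+1}) \le g_\nu(z\pow{t}) + \grad g_\nu(z\pow{t})\T(z\pow{t+1} - z\pow{t}) + (L\pow{t}/2)\normsq{z\pow{t+1} - z\pow{t}}$. Next I would apply convexity of $g_\nu$ against the reference as $g_\nu(z\pow{t}) \le g_\nu(u) - \grad g_\nu(z\pow{t})\T(u - z\pow{t})$, substitute the identity $\grad g_\nu(z\pow{t}) = L\pow{t}(z\pow{t} - z\pow{t+1})$ from Claim~\ref{claim:weiszfeld:gradient-descent}, and collapse the cross terms via the cosine law — precisely the computation carried out in the main-text proof of Lemma~\ref{lemma:weiszfeld:descent:main}\ref{part:lemma:weiszfeld:descent:main:b}.

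For the first claim I would set $u = z_\nu^\star$; the hypothesis $g_\nu(z\pow{t+1}) \ge g_\nu(z_\nu^\star)$ is automatic, since $z_\nu^\star$ is the minimizer of $g_\nu$ over $\reals^d$ (and is unique by Assumption~\ref{asmp:weiszfeld:noncollinear}). For the second claim I would set $u = z^\star$, and the required hypothesis $g_\nu(z\pow{t+1}) \ge g_\nu(z^\star)$ is exactly the standing assumption.

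I do not anticipate a real obstacle: the essential calculation is already in place in the main text, and the present lemma is obtained by the observation that the argument there never uses any property of the reference point beyond the one-sided inequality $g_\nu(z\pow{t+1}) \ge g_\nu(u)$. The only bookkeeping is verifying $0 < L\pow{t} < \infty$ so that the final sign flip in the rearrangement is legitimate, and this is guaranteed by the bounds $1/R \le L\pow{t} \le 1/\nu$ in Claim~\ref{eq:weiszfeld_alg:prop} once $z\pow{t} \in \conv\{w_1,\ldots,w_m\}$ — which holds for $t \ge 0$ by the same claim.
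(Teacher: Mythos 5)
Your proposal is correct and follows essentially the same route as the paper's proof: both derive the reference-point inequality $g_\nu(z\pow{t+1}) \le g_\nu(z) + \tfrac{L\pow{t}}{2}\bigl(\normsq{z\pow{t}-z} - \normsq{z\pow{t+1}-z}\bigr)$ for an arbitrary $z$ via the majorization, convexity, the gradient identity from Claim~\ref{claim:weiszfeld:gradient-descent}, and the Pythagorean/cosine identity, then specialize to $z = z_\nu^\star$ (where the hypothesis is automatic) and $z = z^\star$ (under the stated condition), using $L\pow{t} \ge 1/R > 0$ to conclude. No gaps.
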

\begin{proof}
	First note from Claim~\ref{claim:weiszfeld:gradient-descent} that 
	\begin{align} \label{eq:weiszfeld:pf:contraction:1}
		\grad g_\nu(v\pow{r}) = L\pow{r}(v\pow{r} - v\pow{r+1}) \,,
	\end{align}
	where $L\pow{r}$ is defined in \eqref{eq:weiszfed:l}.
	Starting from the results of Claim~\ref{claim:weiszfeld:maj-min}, we observe for any $z$ that,
	\begin{align*}
		g_\nu(v\pow{r+1})
		&\stackrel{\eqref{eq:weiszfeld:surrogate:majorizing}}{\le} g_\nu\pow{r}(v\pow{r+1}) \\
		&\stackrel{\eqref{eq:weisfeld:surrogate:taylor}}{=} 
			g_\nu(v\pow{r}) + \grad g_\nu(v\pow{r})\T \left(v\pow{r+1} - v\pow{r}\right) 
				+ \frac{L\pow{r}}{2} \normsq{v\pow{r+1} - v\pow{r}} \\
		&\stackrel{(*)}{\le} g_\nu(z) + \grad g_\nu(v\pow{r})\T \left(v\pow{r+1} - z\right) 
				+ \frac{L\pow{r}}{2} \normsq{v\pow{r+1} - v\pow{r}} \\
		&\stackrel{\eqref{eq:weiszfeld:pf:contraction:1}}{=} g_\nu(z) 
				+ L\pow{r} \left( v\pow{r} - v\pow{r+1} \right)\T \left(v\pow{r+1} - z\right) 
				+ \frac{L\pow{r}}{2} \normsq{v\pow{r+1} - v\pow{r}} \,,
	\end{align*}
	where $(*)$ following from the convexity of $g_\nu$ as 
	$g_\nu(z) \ge g_\nu(v\pow{r}) + \grad g_\nu(v\pow{r})\T(z - v\pow{r})$.
	Next, we use the Pythagorean identity: for any $a, b, c \in \reals^d$, it holds that
	\[
		\normsq{b-a} + 2 (b-a)\T (a - c) = \normsq{b- c} - \normsq{a-c} \,.
	\]
	With this, we get, 
	\begin{align*}
		g_\nu(v\pow{r+1}) \le g_\nu(z) + \frac{L\pow{r}}{2} \left( \normsq{v\pow{r} - z} - \normsq{v\pow{r+1} - z} \right) \,.
	\end{align*}
	Plugging in $z = v_\nu^\star$, the fact that $g_\nu(v\pow{r+1}) \ge g_\nu(z^\star)$ implies that 
	$\normsq{v\pow{r+1} - v_\nu^\star} \le \normsq{v\pow{r} - v_\nu^\star}$, since $L\pow{r} \ge 1/B$ is 
	strictly positive.
	Likewise, for $z = z^\star$, the claim holds under the condition that $g_\nu(v\pow{r+1}) \ge g_\nu(z^\star)$.
\end{proof}

\subsection{Rate of Convergence} \label{sec:a:weisfeld:rate}
We are now ready to prove the global sublinear rate of convergence of \Cref{algo:weiszfeld:smooth}.

\begin{theorem} \label{eq:weiszfeld:main}
	The iterate 
	$v\pow{R}$ produced by
	\Cref{algo:weiszfeld:smooth}
	with input $v\pow{0} \in \conv\{w_1, \cdots, w_m\}$ and $\nu > 0$ satisfies
	\[
		g_\nu(v\pow{R}) - g_\nu(z^\star_\nu)  \le \frac{2 \normsq{v\pow{0} - z^\star_\nu}}{\sum_{s=0}^{R-1} 1/ L\pow{s}}
			\le \frac{2 \normsq{v\pow{0} - z^\star_\nu}}{\widehat \nu R} \,,
	\]
	where $L\pow{s} = \sum_{i=1}^m {\alpha_i} / {\eta_i\pow{s}}$ is defined in \eqref{eq:weiszfed:l}, and 
	\begin{align} \label{eq:thm:weiszfeld:effective-nu}
		\widehat \nu = \adjustlimits \min_{s = 0, \cdots, R-1} \min_{i\in[m]} \max\{\nu, \norm{v\pow{s} - w_i}\} \ge \nu \,.
	\end{align}
	Furthermore, it holds that 
	\[
		g(v\pow{R}) - g(z^\star) 
			\le \frac{2 \normsq{v\pow{0} - z^\star}}{\sum_{s=0}^{R-1} 1/ L\pow{s}} + \frac{\nu}{2}
			\le \frac{2 \normsq{v\pow{0} - z^\star}}{\widehat \nu R} + \frac{\nu}{2} \,.
	\]
\end{theorem}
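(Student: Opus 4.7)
The plan is to run the classical $1/t$-rate argument for gradient descent on $g_\nu$, treating the smoothed Weiszfeld iteration as an adaptive-stepsize gradient step (cf.\ Claim~\ref{claim:weiszfeld:gradient-descent}), namely $z\pow{t+1} = z\pow{t} - (1/L\pow{t}) \grad g_\nu(z\pow{t})$. The two ingredients needed are already in place: the descent inequality of Lemma~\ref{lemma:weiszfeld:descent:main}(a), and the contraction of $\norm{z\pow{t} - z}$ towards the appropriate anchor $z$ (Lemma~\ref{lemma:weiszfeld:contraction} for $z_\nu^\star$, Lemma~\ref{lemma:weiszfeld:descent:main}(b) for $z^\star$). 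The bridge from the per-step stepsize $1/L\pow{t}$ to the $\widehat\nu$ in the theorem is $1/L\pow{t} \ge \min_k \eta_k\pow{t} = \min_k \max\{\nu, \norm{z\pow{t} - w_k}\} \ge \widehat\nu$, which follows from $\sum_k \alpha_k = 1$ and is essentially Claim~\ref{eq:weiszfeld_alg:prop}.

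For the $g_\nu$ bound, set $h_t := g_\nu(z\pow{t}) - g_\nu(z_\nu^\star) \ge 0$ and $D_\nu := \norm{z\pow{0} - z_\nu^\star}$. Convexity of $g_\nu$ and Cauchy--Schwarz give $h_t \le \norm{\grad g_\nu(z\pow{t})} \cdot \norm{z\pow{t} - z_\nu^\star}$, and Lemma~\ref{lemma:weiszfeld:contraction} gives $\norm{z\pow{t} - z_\nu^\star} \le D_\nu$. Substituting the resulting lower bound $\normsq{\grad g_\nu(z\pow{t})} \ge h_t^2/D_\nu^2$ into the descent inequality yields the recursion $h_{t+1} \le h_t - h_t^2/(2 L\pow{t} D_\nu^2)$. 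Dividing by $h_t h_{t+1}$ (with the convention that $h_{t_0} = 0$ propagates forward, in which case the bound is trivial from then on) and using $h_{t+1} \le h_t$ gives $1/h_{t+1} - 1/h_t \ge 1/(2 L\pow{t} D_\nu^2)$; telescoping and discarding the nonnegative $1/h_0$ term yields the first bound, and the second follows from $1/L\pow{\tau} \ge \widehat\nu$.

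For the $g$ bound, I would split on whether any iterate dips below $g_\nu(z^\star)$. If $g_\nu(z\pow{s}) \ge g_\nu(z^\star)$ holds for every $s = 1, \ldots, t$, then Lemma~\ref{lemma:weiszfeld:descent:main}(b) yields $\norm{z\pow{s} - z^\star} \le \norm{z\pow{0} - z^\star}$ throughout, so rerunning the argument above verbatim with $z^\star$ in place of $z_\nu^\star$ and $h_t^\star := g_\nu(z\pow{t}) - g_\nu(z^\star) \ge 0$ in place of $h_t$ gives the analogous bound on $h_t^\star$; the smoothing sandwich~\eqref{eq:weiszfeld:norm:smooth:bound} then gives $g(z\pow{t}) - g(z^\star) \le g_\nu(z\pow{t}) - g_\nu(z^\star) + \nu/2 \le h_t^\star + \nu/2$, which is exactly the claim. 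In the complementary case, some $s \le t$ has $g_\nu(z\pow{s}) < g_\nu(z^\star)$; monotonicity of $(g_\nu(z\pow{r}))_r$ from Lemma~\ref{lemma:weiszfeld:monotonicity} then forces $g_\nu(z\pow{t}) \le g_\nu(z\pow{s}) < g_\nu(z^\star)$, so $g(z\pow{t}) \le g_\nu(z\pow{t}) < g_\nu(z^\star) \le g(z^\star) + \nu/2$ and the bound holds trivially.

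The main obstacle I expect is precisely this case split: the contraction toward $z^\star$ in Lemma~\ref{lemma:weiszfeld:descent:main}(b) is conditional on not undershooting $g_\nu(z^\star)$, so one cannot simply invoke Part~1 with $z^\star$ in place of $z_\nu^\star$. The complementary ``undershoot'' regime has to be absorbed into the $\nu/2$ slack using the monotonicity of $g_\nu$ along the trajectory, which is the only place the proof departs from a textbook gradient-descent rate.
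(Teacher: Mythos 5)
Your proposal is correct and follows essentially the same route as the paper's proof: the gradient-descent view with stepsize $1/L\pow{t}$, the descent inequality, the contraction lemmas toward $z_\nu^\star$ and (conditionally) $z^\star$, the telescoping of $1/h_t$, and the case split that absorbs the undershoot regime into the $\nu/2$ smoothing slack. The only cosmetic difference is that the paper phrases the case split as $\Delta_t > 0$ versus $\Delta_t \le 0$ at the final iterate, which by monotonicity of $(g_\nu(z\pow{t}))$ is equivalent to your trajectory-wide condition.
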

\begin{proof}
	With the descent and contraction properties of \Cref{lemma:weiszfeld:monotonicity}
	and \Cref{lemma:weiszfeld:contraction} respectively, the proof now follows the classical 
	proof technique of gradient descent \citep[e.g.,][Theorem 2.1.13]{nesterov2013introductory}.
	Starting from the results of Claim~\ref{claim:weiszfeld:maj-min}, we observe for any $z$ that,
	\begin{align}
		\nonumber
		g_\nu(v\pow{r+1})
		&\stackrel{\eqref{eq:weiszfeld:surrogate:majorizing}}{\le} g_\nu\pow{r}(v\pow{r+1}) \\
		\nonumber
		&\stackrel{\eqref{eq:weisfeld:surrogate:taylor}}{=} 
			g_\nu(v\pow{r}) + \grad g_\nu(v\pow{r})\T \left(v\pow{r+1} - v\pow{r}\right) 
				+ \frac{L\pow{r}}{2} \normsq{v\pow{r+1} - v\pow{r}} \\
		\label{eq:weiszfeld:proof:1}
		&\stackrel{\eqref{eq:weiszfeld:gradient-descent}}{=} 
			g_\nu(v\pow{r}) - \frac{1}{2 L \pow{r}} \normsq{\grad g_\nu (v\pow{r})} \,.
	\end{align}
	\myparagraph{Convergence on $g_\nu$}
	For ease of notation, we let $\widetilde \Delta_r := g_\nu(v\pow{r}) - g_\nu(v_\nu^\star)$.
	We assume now that $\widetilde \Delta_{r+1}$ is nonzero, and hence, so is $\widetilde \Delta_r$ (Lemma~\ref{lemma:weiszfeld:monotonicity}). 
	If $\widetilde \Delta_{r+1}$ were zero, then the theorem would hold trivially at $t+1$.

	Now, from convexity of $g_\nu$ and the Cauchy-Schwartz inequality, we get that 
	\[
		\widetilde \Delta_r \le \grad g_\nu(v\pow{r})\T\left( v\pow{r} - z^\star_\nu \right) 
			\le \norm{\grad g_\nu(v\pow{r})} \norm{v\pow{r} - z^\star_\nu} \,.
	\]
	Plugging this in, we get, 
	\begin{align*}
		\widetilde \Delta_{r+1} - \widetilde \Delta_r &\le -\frac{1}{2L\pow{r}} \frac{\widetilde \Delta_r^2}{\normsq{v\pow{r} - z^\star_\nu}} \\
			&\le - \frac{1}{2 L\pow{r}} \frac{\widetilde \Delta_r^2}{\normsq{v\pow{0} - z^\star_\nu}} \,,
	\end{align*}
	where we invoked Lemma~\ref{lemma:weiszfeld:contraction}.

	Now, we divide by $\widetilde \Delta_r \widetilde \Delta_{r+1}$, which is nonzero by assumption,
	and use $\widetilde \Delta_r / \widetilde \Delta_{r+1} \ge 1$ (Lemma~\ref{lemma:weiszfeld:monotonicity}) to get
	\begin{align*}
		\frac{1}{\widetilde \Delta_r} - \frac{1}{\widetilde \Delta_{r+1}} 
			&\le - \frac{1}{2 L\pow{r}} \left( \frac{\widetilde \Delta_r}{\widetilde \Delta_{r+1}} \right) 
				\frac{1}{\normsq{v\pow{0} - z^\star_\nu}} \\
			&\le - \frac{1}{2L\pow{r} \normsq{v\pow{0} - z^\star_\nu}}  \,.
	\end{align*}
	Telescoping, we get, 
	\begin{align*}
		\frac{1}{\widetilde \Delta_{r}} \ge \frac{1}{\widetilde \Delta_{r}} - \frac{1}{\widetilde \Delta_0} 
			\ge \left( \sum_{s=0}^{r-1} \frac{1}{L\pow{s}} \right)  \frac{1}{2 \normsq{v\pow{0} - z^\star_\nu}} \,.
	\end{align*}
	This proves the first inequality to be proved. The second inequality follows 
	from the definition in Eq.~\eqref{eq:weiszfed:l} since $\sum_{i=1}^m \alpha_i = 1$.

	\myparagraph{Convergence on $g$}
	The proof follows along the same ideas as the previous proof. Define $\Delta_r := g_\nu(v\pow{r}) - g_\nu(z^\star)$.
	Suppose $\Delta_r > 0$. Then, we proceed as previously for any $s < t$ to note 
	by convexity and Cauchy-Schwartz inequality that 
	\[
		\Delta_s \le \norm{\grad g_\nu(v\pow{s})} \norm{v\pow{s} - z^\star} \,.
	\]
	Again, plugging this into \eqref{eq:weiszfeld:proof:1}, using that 
	$\Delta_s / \Delta_{s+1} \ge 1$ and invoking Lemma~\ref{lemma:weiszfeld:contraction} gives
	(since $\Delta_s > 0$)
	\[
		\frac{1}{\Delta_s} - \frac{1}{\Delta_{s+1}} \le - \frac{1}{2 L\pow{s} \normsq{v\pow{0} - z^\star}} \,.
	\]
	Telescoping and taking the reciprocal gives
	\[
		g_\nu(v\pow{R}) - g_\nu(z^\star) = \Delta_r \le \frac{2 \normsq{v\pow{0} - z^\star}}{\sum_{s=0}^{R-1} 1/ L\pow{s}} \,.
	\]
	Using~\eqref{eq:weiszfeld:norm:smooth:bound} completes the proof for the case that $\Delta_r > 0$. 
	Note that if $\Delta_r \le 0$, it holds that $\Delta_{t'} \le 0$ for all $t' > t$.
	In this case, $g_\nu(v\pow{r}) - g_\nu(z^\star) \le 0$. Again,~\eqref{eq:weiszfeld:norm:smooth:bound}
	implies that $g(v\pow{r}) - g(z^\star) \le \nu/2$, which is trivially upper bounded by the 
	quantity stated in the theorem statement. This completes the proof.
\end{proof}

\myparagraph{Faster Rate of Convergence}
We now make an additional assumption:

\begin{assumption} \label{asmp:weiszfeld:optimum}
	The geometric median $z^\star$ does not coincide with any of $w_1, \cdots, w_m$.
	In other words, 
	\begin{align} \label{eq:weiszfeld:asmp2}
		\widetilde \nu := \min_{i=1,\cdots, m} \norm{z^\star - w_i} > 0 \,.
	\end{align}
\end{assumption}

\begin{remark}
	\cite[Lemma 8.1]{beck15weiszfeld} show a lower bound on $\widetilde \nu$ in terms of 
	$\alpha_1, \cdots, \alpha_m$ and $w_1, \cdots, w_m$.
\end{remark}

Now, we analyze the condition under which the $z^\star = v_\nu^\star$.

\begin{lemma} \label{lemma:weiszfeld:no-smoothing}
	Under Assumption~\ref{asmp:weiszfeld:optimum}, 
	we have that $z^\star = v_\nu^\star$\,
	for all $\nu \le \widetilde \nu$,   
	where $\widetilde \nu$ is defined in \eqref{eq:weiszfeld:asmp2}.
\end{lemma}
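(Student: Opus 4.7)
The plan is to show that $z^\star$ satisfies the first-order optimality condition for $g_\nu$, and then invoke strict convexity to conclude $z^\star = z_\nu^\star$.

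First, I would observe that under Assumption~\ref{asmp:weiszfeld:optimum}, the function $g$ is differentiable at $z^\star$, because $z^\star \neq w_k$ for every $k$, and each summand $\|z - w_k\|$ is differentiable away from $w_k$. Since $z^\star$ minimizes the convex function $g$, this gives $\grad g(z^\star) = 0$.

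Next, I would unpack the smoothed norm to relate $g_\nu$ and $g$ near $z^\star$. By hypothesis $\nu \le \widetilde{\nu} \le \|z^\star - w_k\|$ for every $k$, so the definition in Eq.~\eqref{eq:weiszfeld:norm:smooth} yields $\norma{(\nu)}{z^\star - w_k} = \norm{z^\star - w_k}$ (the two branches agree on the boundary). Summing with weights $\alpha_k$ gives $g_\nu(z^\star) = g(z^\star)$, and the same branch-matching at the boundary shows that $\grad \norma{(\nu)}{{\cdot}}$ evaluated at $z^\star - w_k$ equals $(z^\star - w_k)/\|z^\star - w_k\|$, i.e., the subgradient of the ordinary norm. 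Therefore $\grad g_\nu(z^\star) = \grad g(z^\star) = 0$.

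Finally, I would note that under the non-collinearity Assumption~\ref{asmp:weiszfeld:noncollinear}, the function $g_\nu$ is strictly convex (it is the sum of the strictly convex quadratic contributions from those indices $k$ with $\|z - w_k\| \le \nu$ together with strictly convex contributions on the directions spanned by the non-collinear $w_k$'s; this is exactly the argument establishing uniqueness of $z_\nu^\star$ earlier in the section). Hence its unique minimizer is the unique stationary point, so $z^\star = z_\nu^\star$.

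The argument is essentially a matching-of-definitions; no step is a serious obstacle. The only subtlety worth flagging explicitly is that the inequality in the hypothesis is non-strict ($\nu \le \widetilde\nu$), so one must verify that the smoothed norm and its gradient agree with the unsmoothed ones on the boundary $\|z - w_k\| = \nu$ — which is immediate from Eq.~\eqref{eq:weiszfeld:norm:smooth}.
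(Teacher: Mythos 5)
Your proof is correct, but it takes a different route from the paper's. You argue via first-order optimality: $g$ is differentiable at $z^\star$ (since $z^\star \neq w_k$ for all $k$ by Assumption~\ref{asmp:weiszfeld:optimum}), so $\grad g(z^\star)=0$; the smoothed norm and its gradient agree with the plain norm whenever $\norm{z^\star - w_k}\ge \nu$ (including on the boundary, as you rightly flag), so $\grad g_\nu(z^\star)=0$; convexity and uniqueness of the minimizer of $g_\nu$ then finish. The paper instead avoids gradients entirely: it only notes that $\norma{(\nu)}{z^\star-w_k}=\norm{z^\star-w_k}$ for $\nu\le\widetilde\nu$, hence $g_\nu(z^\star)=g(z^\star)$, and combines this with the global sandwich $g_\nu(z)\ge g(z)$ from \eqref{eq:weiszfeld:norm:smooth:bound} to get $g_\nu(z)\ge g(z)\ge g(z^\star)=g_\nu(z^\star)$ for every $z$, so $z^\star$ minimizes $g_\nu$ and uniqueness gives $z^\star=z_\nu^\star$. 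The paper's argument is shorter and needs neither differentiability of $g$ at $z^\star$ nor any strict-convexity discussion (only the already-established uniqueness of $z_\nu^\star$); your version makes the gradient matching explicit, which is a nice sanity check but does more work than necessary. One small caution on your side: your parenthetical justification of strict convexity of $g_\nu$ is loose as stated — what you actually need is only that a stationary point of the convex function $g_\nu$ is a global minimizer together with the uniqueness of $z_\nu^\star$ already asserted in the setup, so you could drop the strict-convexity digression entirely.
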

\begin{proof}
	By the definition of the smooth norm in \eqref{eq:weiszfeld:norm:smooth}, 
	we observe that $\norma{(\nu)}{z^\star - w_i} = \norm{z^\star - w_i}$
	for all $\nu \le \widetilde \nu$, and hence, $g_\nu(z^\star) = g(z^\star)$.
	For any $z \in \reals^d$, we have,
	\begin{align*}
		g_\nu(z) 
		\stackrel{\eqref{eq:weiszfeld:norm:smooth:bound}}{\ge} g(z)
		\ge g(z^\star)
		= g_\nu(z^\star) \,,
	\end{align*}
	or that $z^\star = v_\nu^\star$.
\end{proof}

In this case, we get a better rate on the non-smooth objective $g$.

\begin{corollary} \label{cor:a:weiszfeld:better_rate}
	Consider the setting of Theorem~\ref{eq:weiszfeld:main}
	where Assumption~\ref{asmp:weiszfeld:optimum} holds and $\nu \le \widetilde \nu$.
	Then, the iterate $v\pow{R}$ produced by \Cref{algo:weiszfeld:smooth}
	satisfies,
	\[
		g(v\pow{r}) - g(z^\star)  \le \frac{2 \normsq{v\pow{0} - z^\star}}{\widehat \nu R} \,,
	\]
	where $\widehat \nu$ is defined in Eq.~\eqref{eq:thm:weiszfeld:effective-nu}.
\end{corollary}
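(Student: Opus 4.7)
The plan is to combine Lemma~\ref{lemma:weiszfeld:no-smoothing} with the $g_\nu$-bound of Theorem~\ref{eq:weiszfeld:main}, exploiting that the extra $\nu/2$ slack appearing in the non-smooth bound of that theorem arose only from the generic inequality $g_\nu \le g + \nu/2$ at the optimum, and this slack vanishes under the stronger hypothesis $\nu \le \widetilde\nu$.

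First, I would invoke Lemma~\ref{lemma:weiszfeld:no-smoothing} to conclude that, under Assumption~\ref{asmp:weiszfeld:optimum} with $\nu \le \widetilde\nu$, the smoothed geometric median coincides with the non-smooth one, namely $z^\star_\nu = z^\star$. The proof of that lemma also establishes the stronger pointwise equality $g_\nu(z^\star) = g(z^\star)$: since $\norm{z^\star - w_k} \ge \widetilde\nu \ge \nu$ for every $k$, the piecewise definition in \eqref{eq:weiszfeld:norm:smooth} gives $\norma{(\nu)}{z^\star - w_k} = \norm{z^\star - w_k}$ for each $k$, and summing against $\alpha_k$ yields the claim.

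Next, I would apply the first inequality of Theorem~\ref{eq:weiszfeld:main} and substitute $z^\star_\nu = z^\star$ to get
\[
    g_\nu(z\pow{t}) - g_\nu(z^\star) \le \frac{2\normsq{z\pow{0} - z^\star}}{\widehat\nu\, t}.
\]
To pass from $g_\nu$ to $g$, I would use \eqref{eq:weiszfeld:norm:smooth:bound}, which gives $g(z\pow{t}) \le g_\nu(z\pow{t})$ at the iterate, combined with the equality $g_\nu(z^\star) = g(z^\star)$ established above. Chaining these yields
\[
    g(z\pow{t}) - g(z^\star) \le g_\nu(z\pow{t}) - g_\nu(z^\star) \le \frac{2\normsq{z\pow{0} - z^\star}}{\widehat\nu\, t},
\]
which is the stated bound.

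There is no serious obstacle here; the only subtlety worth flagging is that the $\nu/2$ term in the second conclusion of Theorem~\ref{eq:weiszfeld:main} is not intrinsic to the algorithm's dynamics but simply reflects the worst-case gap $g_\nu(z^\star) - g(z^\star) \le \nu/2$ that holds without any assumption on $z^\star$. Assumption~\ref{asmp:weiszfeld:optimum} with $\nu \le \widetilde\nu$ forces that gap to be exactly zero, so the $g_\nu$-rate transfers verbatim to $g$. No new descent or contraction estimates beyond those already proved in Lemmas~\ref{lemma:weiszfeld:monotonicity} and~\ref{lemma:weiszfeld:contraction} are needed.
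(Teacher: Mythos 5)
Your proof is correct and follows essentially the same route as the paper: invoke Lemma~\ref{lemma:weiszfeld:no-smoothing} to get $z^\star_\nu = z^\star$ and $g_\nu(z^\star) = g(z^\star)$, apply the $g_\nu$-bound of Theorem~\ref{eq:weiszfeld:main}, and transfer it to $g$ via $g(z\pow{t}) \le g_\nu(z\pow{t})$. Your added remark explaining why the $\nu/2$ slack disappears is a nice clarification but does not change the argument.
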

\begin{proof} 
	This follows from \Cref{eq:weiszfeld:main}'s bound on $g_\nu(v\pow{R}) - g_\nu(v_\nu^\star)$ 
	with the observations that 
	$g(v\pow{R}) \stackrel{\eqref{eq:weiszfeld:norm:smooth:bound}}{\le} g_\nu(v\pow{R})$
	and $g(z^\star) = g_\nu(z^\star)$ (see the proof of \Cref{lemma:weiszfeld:no-smoothing}).
\end{proof}

The previous corollary obtains the same rate as \cite[Theorem 8.2]{beck15weiszfeld}, up to constants 
upon using the bound on $\widetilde \nu$ given by \cite[Lemma 8.1]{beck15weiszfeld}.

We also get as a corollary a bound on the performance of Weiszfeld's original algorithm without smoothing, 
although it could be numerically unstable in practice. This bound depends on the actual iterates, so it 
is not informative about the performance of the algorithm a priori.

\begin{corollary} \label{cor:a:weiszfeld:nonsmooth_rate}
	Consider the setting of \Cref{eq:weiszfeld:main}. 
	Under Assumption~\ref{asmp:weiszfeld:optimum}, 
	suppose the sequence $(v\pow{r})$ produced by Weiszfeld's algorithm in Eq.~\eqref{eq:weiszfeld:ns}
	satisfies $\norm{v\pow{r} - w_i} > 0$ for all $r$ and $i$, then it also
	satisfies 
	\[
		g(v\pow{R}) - g(z^\star)  \le \frac{2 \normsq{v\pow{0} - z^\star}}{\nu\pow{R} R} \,.
	\]
	where $\nu \pow{r}$ is given by
	\[
	\nu\pow{r} = \min\left\{\widetilde \nu, 
	\adjustlimits \min_{s = 0, \cdots, r} \min_{i\in[m]}  \norm{v\pow{s} - w_i}
	\right\} \,.
	\]
\end{corollary}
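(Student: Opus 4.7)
The plan is to reduce this to Corollary~\ref{cor:a:weiszfeld:better_rate} by observing that the unsmoothed Weiszfeld updates in~\eqref{eq:weiszfeld:ns} coincide with the smoothed updates of Algorithm~\ref{algo:weiszfeld:smooth} whenever the smoothing parameter is taken small enough. Concretely, I would pick a smoothing parameter $\nu_0$ satisfying $0 < \nu_0 \le \nu\pow{t}$, where $\nu\pow{t}$ is as defined in the statement. Since $\nu\pow{t} \le \widetilde\nu$ and by hypothesis $\norm{z\pow{\tau} - w_k} > 0$ for all $\tau, k$, such a $\nu_0$ exists.

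Next, I would argue by induction on $\tau = 0, 1, \ldots, t$ that the iterate $z\pow{\tau}$ produced by the unsmoothed Weiszfeld recursion~\eqref{eq:weiszfeld:ns} coincides exactly with the iterate produced by Algorithm~\ref{algo:weiszfeld:smooth} run with smoothing parameter $\nu_0$ and the same starting point. The base case is immediate. For the inductive step, the induction hypothesis plus the bound $\nu_0 \le \nu\pow{t} \le \norm{z\pow{\tau} - w_k}$ implies $\max\{\nu_0, \norm{z\pow{\tau} - w_k}\} = \norm{z\pow{\tau} - w_k}$, so that the weights $\beta_k\pow{\tau}$ in the two algorithms match and therefore so do the next iterates. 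In particular, the common iterate at step $t$ is the $z\pow{t}$ of the corollary statement and satisfies $z\pow{t} \in \conv\{w_1, \ldots, w_m\}$, placing us in the setting of Corollary~\ref{cor:a:weiszfeld:better_rate}.

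Applying Corollary~\ref{cor:a:weiszfeld:better_rate} with the smoothing parameter $\nu_0 \le \widetilde\nu$ gives
\[
    g(z\pow{t}) - g(z^\star) \le \frac{2\normsq{z\pow{0} - z^\star}}{\widehat\nu\, t}\,,
\]
where $\widehat\nu = \min_{\tau = 0, \ldots, t-1}\min_{k\in[m]} \max\{\nu_0, \norm{z\pow{\tau} - w_k}\}$. By the choice of $\nu_0$ this $\max$ simplifies to $\norm{z\pow{\tau} - w_k}$, and since the range $\tau = 0, \ldots, t-1$ is contained in $\tau = 0, \ldots, t$, we obtain
\[
    \widehat\nu = \min_{\tau = 0, \ldots, t-1}\min_{k\in[m]} \norm{z\pow{\tau} - w_k} \;\ge\; \nu\pow{t}\,,
\]
which yields the stated bound after substituting back.

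I do not expect any serious obstacle here: the whole argument is a reduction, and the only care needed is in the inductive matching of the two iterate sequences, together with noting that $\nu\pow{t}$ in the statement bundles $\widetilde\nu$ into the minimum, which is precisely what makes Corollary~\ref{cor:a:weiszfeld:better_rate} applicable. The hypothesis $\norm{z\pow{\tau} - w_k} > 0$ for all $\tau, k$ guarantees both the well-definedness of~\eqref{eq:weiszfeld:ns} and the positivity of $\nu\pow{t}$, so a valid choice of $\nu_0$ always exists.
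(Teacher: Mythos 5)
Your proposal is correct and follows essentially the same route as the paper: the paper's proof also reduces to Corollary~\ref{cor:a:weiszfeld:better_rate} by observing that the unsmoothed iterates coincide with those of the smoothed algorithm at level $\nu = \nu\pow{t}$ (you merely allow any $\nu_0 \le \nu\pow{t}$ and spell out the inductive matching that the paper asserts in one line).
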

\begin{proof}
	Under these conditions, note that the sequence $(v\pow{s})_{s=0}^t$ produced by the Weiszfeld algorithm without smoothing 
	coincides with the sequence $(v_{\nu\pow{r}}\pow{s})_{s = 0}^t$ produced by the smoothed Weiszfeld algorithm
	at level $\nu = \nu \pow{r}$. Now apply \Cref{cor:a:weiszfeld:better_rate}.
\end{proof}

\subsection{Comparison to Previous Work}

We compare the results proved in the preceding section to prior work on the subject.

\myparagraph{Comparison to \cite{beck15weiszfeld}}
The authors present multiple different variants of the Weiszfeld algorithm. 
For a particular choice of initialization, they can guarantee that a rate 
of the order of $1/\widetilde \nu R$. It is not clear how this choice of initialization can be implemented using a 
secure average oracle since, if at all. This is because it requires the computation of all pairwise distances $\norm{w_i - w_{i'}}$.
Moreover, a naive implementation of their algorithm could be numerically unstable since it would involve division
by small numbers. Guarding against division by small numbers would lead to the smoothed variant considered here.
Note that our algorithmic design choices are driven by the federated learning setting.

\myparagraph{Comparison to \cite{beck15convergence}}
The author studies general alternating minimization algorithms, including the Weiszfeld algorithm 
as a special case, with a different smoothing than the one considered here. 
While their algorithm does not suffer from numerical issues arising from division by small numbers, 
it always suffers a bias from smoothing. 
On the other hand, the smoothing considered here is more natural in that it 
reduces to Weiszfeld's original algorithm
when $\norm{v\pow{r} - w_i} > \nu$, i.e., when we are not at a risk of dividing by small numbers.
Furthermore, the bound in 
\Cref{eq:weiszfeld:main} exhibits a better dependence on the initialization $v\pow{0}$.

\section{Numerical Simulations: Full Details} \label{sec:a:expt}
The section contains a full description of the experimental setup as well as additional results.

We start with the dataset and task description in \Cref{sec:a:expt:dataset},
hyperparameter choices in \Cref{sec:a:expt:hyperparam:choice},
and evaluation methodology in \Cref{sec:a:expt:eval}.
We provide some extra numerical results in \Cref{sec:a:additional-results}.

\subsection{Datasets and Task Description} \label{sec:a:expt:dataset}
We experiment with three tasks, (1) handwritten-letter recognition, (2) character-level language modeling, and, (3) sentiment analysis.
As discussed in \Cref{sec:setup:fl}, we take the weight $\alpha_i \propto N_i$, which is the number of data points available on device $i$.

\subsubsection{Handwritten-Letter Recognition}
The first dataset is the EMNIST dataset \citep{cohen2017emnist} for handwritten letter recognition.

\myparagraph{Data}
Each inpt $x$ is a gray-scale image resized to $28\times 28$. Each output $y$ 
is categorical variable which takes 62 different values, one per class of letter (0-9, a-z, A-Z).

\myparagraph{Formulation}
The task of handwritten letter recognition is cast as a multi-class classification problem with 
62 classes.

\myparagraph{Distribution of Data}
The handwritten characters in the images are annotated by the writer of the character as well.
We use a non-i.i.d. split of the data grouped by a writer of a given image. We discard
devices with less than 100 total input-output pairs (both train and test), leaving a total of 3461 devices.
Of these, we sample 1000 devices to use for our simulations, corresponding to about 
$30\%$ of the data. This selection held constant throughout the simulations.
The number of training examples across these devices summarized in the following statistics:
median 160, mean 202, standard deviation 77, maximum 418 and minimum 92.
This preprocessing was performed using LEAF \citep{caldas2018leaf}.

\myparagraph{Models}
For the model $\varphi$, we consider two options: 
a linear model and a convolutional neural network.
\begin{itemize}[nolistsep,leftmargin=1em]
	\item Linear Model: The linear model maintains parameters $w_1, \cdots, w_{62} \in \reals^{28 \times 28}$.
		For a given image $x$, class $l$ is assigned score $\inp{w_l}{x}$, which is then converted to a probability 
		using a softmax operation as 
		$p_l = \exp(\inp{w_l}{x}) / \sum_{l'} \exp(\inp{w_{l'}}{x})$. For a new input image $x$, 
		the prediction is made as $\argmax_{l} \inp{w_l}{x}$.
	\item Convolutional Neural Network (ConvNet): The ConvNet \citep{lecun1998gradient} 
		we consider contains two 
		convolutional layers with max-pooling, followed by a fully connected hidden layer, 
		and another fully connected (F.C.) layer with 62 outputs. When given an input image $x$, 
		the output of this network is assigned as the scores of each of the classes. 
		Probabilities are assigned similar to the linear model with a softmax operation on the scores. The schema 
		of network is given below:
		\begin{align*}
			&\begin{matrix}
				\text{Input} \\
				28 \times 28
			\end{matrix}
			\longrightarrow 
			\begin{matrix}
				\text{Conv2D} \\
				\text{filters} = 32  \\
				\text{kernel} = 5 \times 5
			\end{matrix}
			\longrightarrow
			\begin{matrix}
				\text{ReLU}
			\end{matrix}
			\longrightarrow
			\begin{matrix}
				\text{Max Pool} \\
				\text{kernel } = 2\times 2\\
				\text{stride} = 2
			\end{matrix}
			\longrightarrow  
			\begin{matrix}
				\text{Conv2D} \\
				\text{filters} = 64  \\
				\text{kernel} = 5 \times 5
			\end{matrix}
			\longrightarrow \\ &
			\begin{matrix}
				\text{ReLU}
			\end{matrix}
			\longrightarrow 
			\begin{matrix}
				\text{Max Pool} \\
				\text{kernel } = 2\times 2\\
				\text{stride} = 2
			\end{matrix}
			\longrightarrow
			\begin{matrix}
				\text{F.C.} \\
				\text{units} = 2048 
			\end{matrix}
			\longrightarrow
			\begin{matrix}
				\text{ReLU}
			\end{matrix}
			\longrightarrow
			\begin{matrix}
				\text{F.C.} \\
				\text{units} = 62
			\end{matrix}
			\longrightarrow
			\text{ score }
			\end{align*}
\end{itemize}

\myparagraph{Loss Function}
We use the multinomial logistic loss $\ell(y, p) = -\log p_y$, 
for probabilities $p = (p_1, \cdots, p_{62})$
and $y \in \{1, \cdots, 62\}$.
In the linear model case, it is equivalent to the classical softmax regression.

\myparagraph{Evaluation Metric}
The model is evaluated based on the classification accuracy on the test set.

\subsubsection{Character-Level Language Modeling}
The second task is to learn a character-level language model over the 
Complete Works of Shakespeare~\cite{shakespeare}. The goal is to read a few characters and predict the next character which
appears.

\myparagraph{Data}
The dataset consists of text from the Complete Works of William Shakespeare as raw text.

\myparagraph{Formulation}
We formulate the task as a multi-class classification problem with 
53 classes (a-z, A-Z, other) as follows.
At each point, we consider the previous $H=20$ characters, 
and build $x \in \{0, 1\}^{H\times 53}$ as a one-hot encoding of these $H$ characters. 
The goal is then try to predict the next character, which can belong to 53 classes.
In this manner, a text with $l$ total characters gives $l$ input-output pairs.

\myparagraph{Distribution of Data}
We use a non-i.i.d. split of the data. Each role in a given play (e.g., Brutus from The Tragedy of Julius Caesar) 
is assigned as a separate device. All devices with less than 100 total examples are discarded, leaving 628 devices.
The training set is assigned a random 90\% of the input-output pairs, and the other rest are held out for testing.
This distribution of training examples is extremely skewed, with the following statistics:
median 1170, mean 3579, standard deviation 6367, maximum 70600 and minimum 90.
This preprocessing was performed using LEAF \citep{caldas2018leaf}.

\myparagraph{Models}
We use a long-short term memory model (LSTM) \citep{hochreiter1997long} 
with $128$ hidden units for this purpose. This is followed by a fully connected layer with 53 outputs,
the output of which is used as the score for each character. 
As previously, probabilities are obtained using the softmax operation.

\myparagraph{Loss Function}
We use the multinomial logistic loss.

\myparagraph{Evaluation Metric}
The model is evaluated based on the accuracy of next-character prediction on the test set.

\subsubsection{Sentiment Analysis}
The third task is analyze the sentiment of tweets as positive or negative. 

\myparagraph{Data}
Sent140~\citep{go2009twitter} is a text dataset of 1,600,498 tweets produced by 660,120 Twitter accounts. Each tweet is represented by a character string with emojis redacted. Each tweet is labeled with a binary sentiment reaction (i.e., positive or negative), which is inferred based on the emojis in the original tweet. 

\myparagraph{Formulation}
The task is a binary classification problem, with the output being a positive or negative sentiment, while the input is the raw text of the tweet. 

\myparagraph{Distribution of Data}
We use a non-i.i.d. split of the data. 
Each client device represents a Twitter user and contains tweets from this user. We discarded all clients containing less that 50 tweets, leaving only 877 clients.
The training set is assigned a random 80\% of the input-output pairs, and the other rest are held out for testing.
This distribution of training examples across client devices is skewed, with the following statistics:
median 55, mean 65.3, standard deviation 32.4, maximum 439 and minimum 40.
This preprocessing was performed using LEAF \citep{caldas2018leaf}.

\myparagraph{Models}
We use a linear model $\varphi(x; w) = w\T \phi(x)$, where the feature representation
$\phi(x) \in \reals^{50}$ of text $x$ is obtained as the average of the GloVe embeddings~\cite{pennington2014glove} $G(\cdot)$ of each word in the tweet, i.e.,
\[
	\phi(x)  = \frac{1}{|x|} \sum_{i=1}^{|x|} G(x_i) \,.
\]

\myparagraph{Loss Function}
We use the binary logistic loss.

\myparagraph{Evaluation Metric}
We use the binary classification accuracy.

\subsection{Methods, Hyperparameters and Variants} 
\label{sec:a:expt:hyperparam:choice}
We first describe the corruption model, followed by various methods tested.

\subsubsection{Corruption Model}
Since the goal of this work to test the robustness of federated learning models in the setting of high corruption, we artificially corrupt updates while controlling the level of corruption. We use the following corruption models.

\myparagraph{Data Corruption}
	This is an example of static data poisoning.
	The model training procedure is not modified, but the data fed into the model is modified. 
	In particular, we take a modification $\tilde {D}_i$ of 
	the local dataset $D_i$ of client $i$ and run the training algorithm on this different dataset.
	The exact nature of the modification depends on the dataset:
	\begin{itemize}[nolistsep,leftmargin=1em]
		\item EMNIST: We take the negative of the image $x$. Mathematically, 
			$\tilde D_i(x, y) = D_i(1-x, y)$, assuming the pixels of $x$
			are normalized to lie in $[0, 1]$. The labels are left unmodified.
		\item Shakespeare: We reverse the original text. 
			Mathematically, 
			$\tilde D_i(c_1\cdots c_{20}, c_{21}) = D_i(c_{21}\cdots c_2, c_1)$
			This is illustrated in Fig.~\ref{fig:a:expt:shake:noise}.
			The labels are left unmodified.
		\item Sent140: We flip the label, i.e., 
			$\tilde D_i(x, y) = D_i(x, -y)$. The text in the tweet remains unchanged.
	\end{itemize}

\begin{figure*}
\centering
	\adjincludegraphics[width=0.6\linewidth,trim={80pt 320pt 100pt 90pt},clip=true]{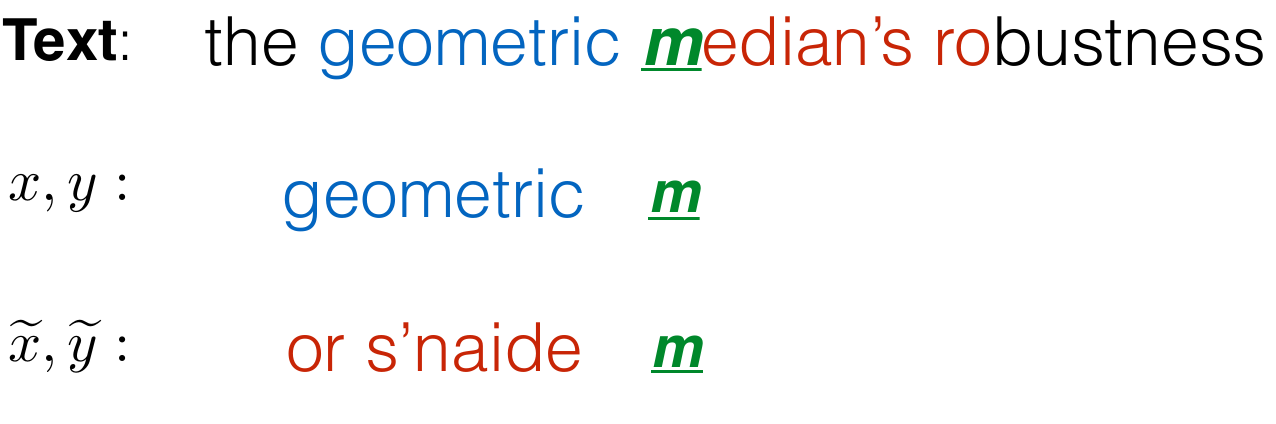}
	\caption{Illustration of the data corruption introduced in the Shakespeare dataset. The first line denotes the original 
	text. The second line shows the effective $x$ when predicting the ``m'' of the word ``median''. The second line 
	shows the corresponding $\widetilde x$ after the introduction of the corruption. Note that $\widetilde x$ is the string 
	``edian\textquotesingle s ro'' reversed.}
	\label{fig:a:expt:shake:noise}
\end{figure*}

\myparagraph{Gaussian corruption}
This is an example of update poisoning.
	The data is not modified here but the update of a client device is directly replaced by a Gaussian random variable, with standard deviation $\sigma$ equal to the standard deviation of the original update across its components. 
	Note that we corrupt the {\em update} to the model parameters 
	transmitted by the device,
	which is typically much smaller in norm than the model parameters themselves.

\myparagraph{Omniscient corruption}
	This is an example of update poisoning.
	The data is not modified here but the parameters of a device are directly modified.
	In particular, $w_i\pow{t+1}$ for $i \in \Ccal$ is set to be
	\begin{align*}
		w_i\pow{t+1} = 
		-\frac{1}{\sum_{j \in S_t \cap \mathcal{C}} \alpha_j } 
		\Big( & 2 \sum_{j \in S_t \setminus \mathcal{C}} \alpha_j w_{j,\tau}\pow{t}
		+ 
			\sum_{j \in S_t \cap \mathcal{C}} \alpha_j w_{j,\tau}\pow{t}
			\Big)\,,
	\end{align*}
	such that
	\[
		\sum_{i \in S_t} \alpha_i w_i\pow{t+1}
			=  -\sum_{i \in S_t} \alpha_i w_{i, \tau}\pow{t} \,.
	\]
	In other words, the weighted arithmetic mean of the model parameter is set to be the negative of what it 
	would have other been without the corruption. This corruption model requires
	full knowledge of the data and server state, and is adversarial in nature.

\myparagraph{Implementation details}
Given a corruption level $\rho$, the set of devices which return corrupted updates are selected as follows:
\begin{itemize}[nolistsep,leftmargin=1em]
	\item Start with $\mathcal{C} = \varnothing$.
	\item Sample device $i$ uniformly without replacement and add to $\mathcal{C}$. 
		Stop when $\sum_{i \in C} \alpha_i$ just exceeds $\rho$.
\end{itemize}

\subsubsection{Methods}
We compare the following algorithms:
\begin{itemize}[nolistsep,leftmargin=1em]
	\item the \fedavg algorithm~\citep{mcmahan2017communication},
	\item the \rbfedavg algorithm proposed here in \Cref{algo:rfa:main},
	\item the minibatch stochastic gradient descent (SGD) algorithm. 
\end{itemize}

\subsubsection{Hyperparameters} 
The hyperparameters for each of these algorithms are detailed below.

\myparagraph{\fedavg}
The \fedavg algorithm requires the following hyperparameters.
\begin{itemize}[nolistsep,leftmargin=1em]
	\item Devices per round $m$: We use $100$ for EMNIST and $50$ for both the Shakespeare and Sent140 datasets.
	\item Batch Size and Number of Local Epochs: Instead of running $\tau$ local updates, we run for $n_e$ local epochs following \cite{mcmahan2017communication} with a batch size of $b$. For the EMNIST dataset, we use 
		$b = 50, n_e = 5$, and for Shakespeare and Sent140, we use $b=10, n_e=1$.
	\item Learning Rate $(\gamma_t)$: We use a learning a learning rate scheme 
		$\gamma_t = \gamma_0 C^{\lfloor t / t_0 \rfloor}$, where $\gamma_0$ and $C$ were tuned 
		using grid search on validation set (20\% held out from the training set) for a fixed time horizon
		on the uncorrupted data. The values which gave the highest validation accuracy were used
		{\em for all settings} - both corrupted and uncorrupted.
		The time horizon used was 2000 iterations for the EMNIST linear model, 1000 iterations for the EMNIST ConvNet
		200 iterations for Shakespeare LSTM.
	\item Initial Iterate $w\pow{0}$: Each element of $w\pow{0}$ is initialized to a uniform random variable whose 
		range is determined according to TensorFlow's ``glorot\_uniform\_initializer''.
\end{itemize}

\myparagraph{\rbfedavg}
\rbfedavg's hyperparameters, in addition to those of \fedavg, are:
\begin{itemize}[nolistsep,leftmargin=1em]
	\item Algorithm: We use the smoothed Weiszfeld algorithm, as discussed in Sec.~\ref{sec:aggr}.
	\item Smoothing parameter $\nu$: Based on the interpretation that $\nu$
		guards against division by small numbers, %
		we simply use $\nu = 10^{-6}$ throughout.
	\item Robust Aggregation Stopping Criterion: The concerns the stopping criterion used to 
		terminate the smoothed Weiszfeld algorithm. We use two criteria:
		an iteration budget and a relative improvement condition - we terminate if a given iteration budget has been extinguished, 
		or if the relative improvement in objective value $|g_\nu(v\pow{r}) - g_\nu(v\pow{r+1})| / g_\nu(v\pow{r}) \le 10^{-6}$
		is small.
\end{itemize}

\subsection{Evaluation Methodology and Other Details} \label{sec:a:expt:eval}
We specify here the quantities appearing on the $x$ and $y$ axes on the plots, 
as well as other details.

\myparagraph{$x$ Axis}
As mentioned in Section~\ref{sec:setup}, the goal of federated learning is to learn the model 
with as few rounds of communication as possible. Therefore, we evaluate various methods 
against the number of rounds of communication, which we measure via the number of calls to a secure average oracle. 

Note that \fedavg and SGD require one call to the secure average oracle per outer iteration, 
while \rbfedavg could require several. Hence, we also evaluate performance against the number of 
outer iterations.

\myparagraph{$y$ Axis}
We are primarily interested in the test accuracy, which measures the performance on unseen data.
We also plot the function value $F$, which is the quantity 
our optimization algorithm aims to minimize. We call this the train loss.

\myparagraph{Evaluation with Data Corruption}
In simulations with data corruption, while the training is performed on corrupted data, 
we evaluate train and test progress using the corruption-free data.

\myparagraph{Software}
We use the package LEAF \citep{caldas2018leaf} to simulate the federated learning setting. 
The models used are implemented in TensorFlow.%

\myparagraph{Hardware}
Each simulation was run in a simulation as a single process. The EMNIST linear model simulations
were run on two workstations with 126GB of memory, with one equipped with Intel i9 processor running at 2.80GHz, 
and the other with Intel Xeon processors running at 2.40GHz.
Simulations involving neural networks were run either on a 1080Ti or a Titan Xp GPU.

\myparagraph{Random runs}
Each simulation is repeated 5 times with different random seeds, and the solid lines in the 
plots here represents the mean over these runs, while the shaded areas show the maximum 
and minimum values obtained in these runs.

\subsection{Simulation Results: Convergence of The Smoothed Weiszfeld Algorithm}
\label{sec:a:expt:gm_algos}

For each of these models, we freeze \fedavg at a certain iteration 
and experiment with different robust aggregation algorithms. 

We find that 
the smoothed Weiszfeld algorithm enjoys a fast convergence behavior, converging exactly to the 
smoothed geometric median in a few passes. In fact, the smoothed Weiszfeld algorithm
displays (local) linear convergence, as evidenced by the straight line in log scale.
Further, we also maintain a strict iteration budget of 3 iterations. 
This choice is also justified in hindsight by the results of \Cref{fig:expt:a:hyper-niter:feml}.

Next, we visualize the weights assigned by the geometric median to the corrupted updates.
Note that the smoothed geometric median $w_1, \cdots, w_m$ 
is some convex combination $\sum_{i=1}^m \beta_i w_i$. This weight $\beta_i$ of $w_i$ 
is a measure of the influence of $w_i$ on the aggregate. We plot in \Cref{fig:expt:a:plot:gm_viz}
the ratio $\beta_i / \alpha_i$ for each device $i$, where $\alpha_i$ is its weight in the arithmetic mean
and $\beta_i$ is obtained by running the smoothed Weiszfeld algorithm to convergence.
We expect this ratio to be smaller for worse corruptions 
and ideally zero for obvious corruptions. We find that 
the smoothed geometric median does indeed assign lower weights to the corruptions, 
while only accessing 
the points via a secure average oracle.

\begin{figure*}[t!]
    \centering
    \begin{subfigure}[b]{\linewidth}
    \centering
        \includegraphics[width=0.24\textwidth]{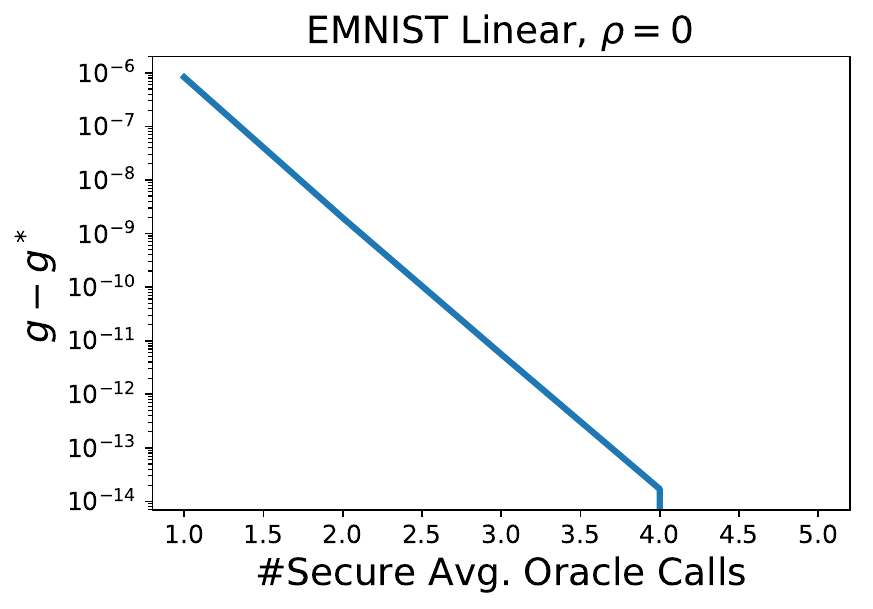}
        \includegraphics[width=0.24\textwidth]{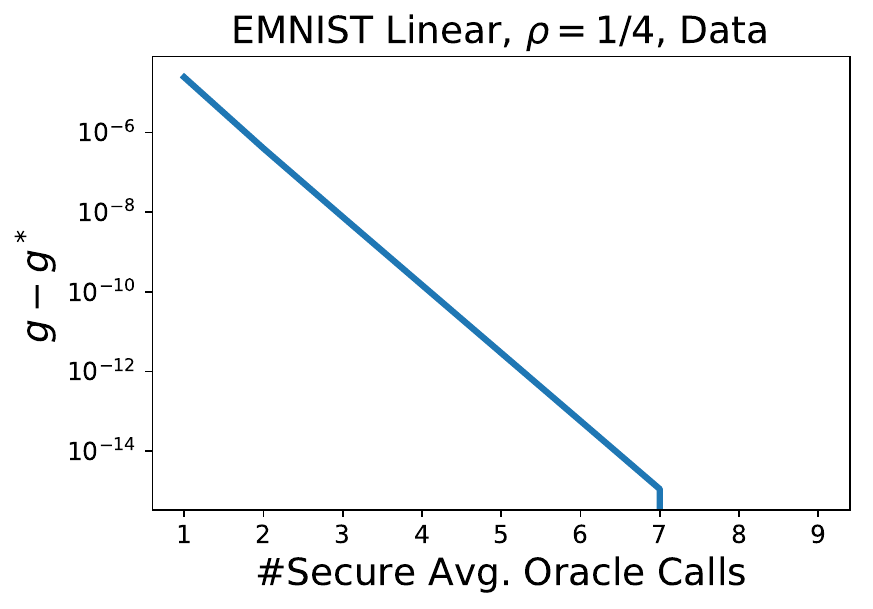}
        \includegraphics[width=0.24\textwidth]{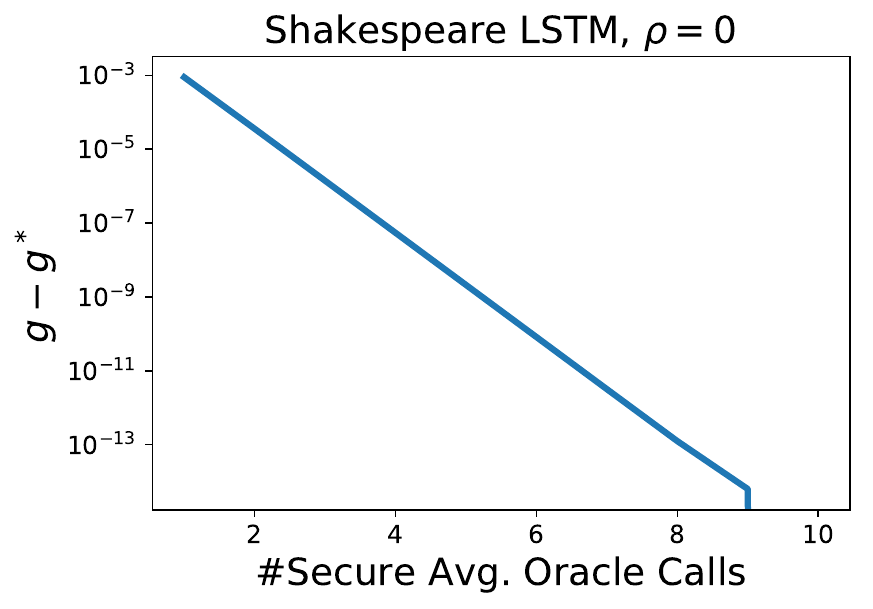}
        \includegraphics[width=0.24\textwidth]{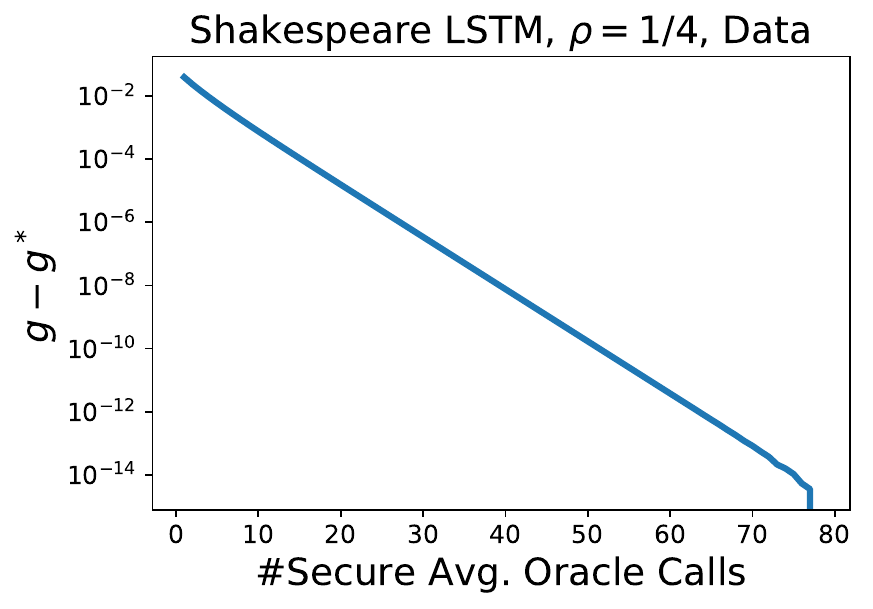}

     \caption{\small{Convergence of the smoothed Weiszfeld algorithm 
     	and for robust aggregation.}}
     	\label{fig:expt:a:plot:gm_algos}
     \end{subfigure} 

     \begin{subfigure}[b]{\linewidth}
    		\centering
        \includegraphics[width=0.19\linewidth]{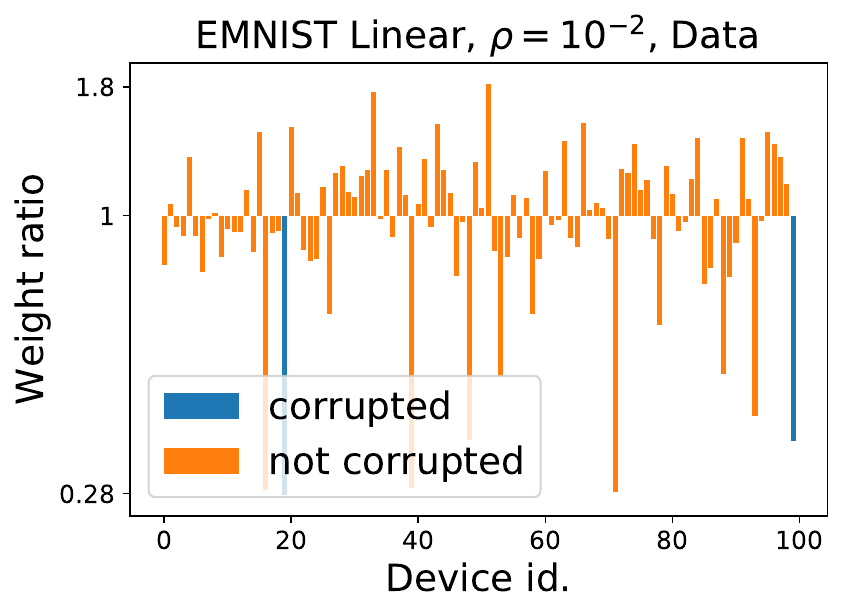}
        \includegraphics[width=0.19\linewidth]{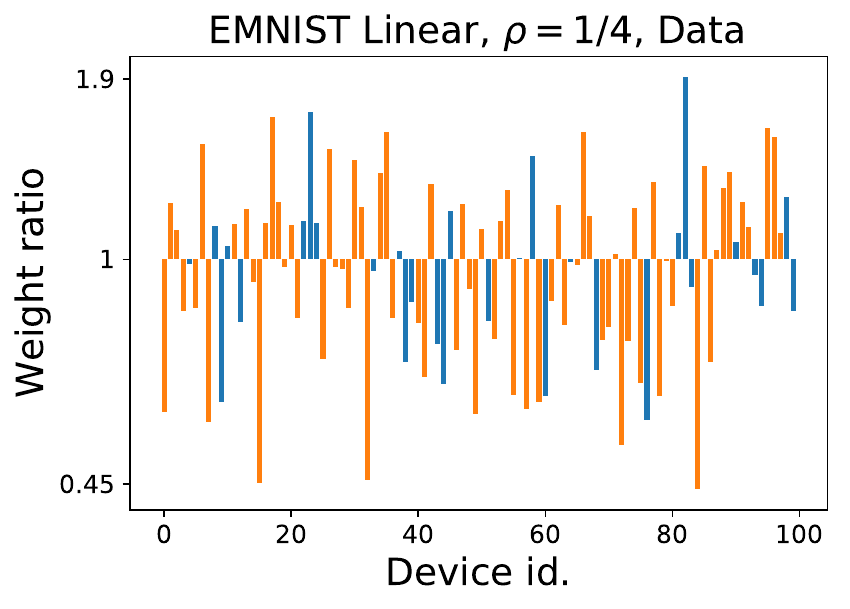}
        \includegraphics[width=0.19\linewidth]{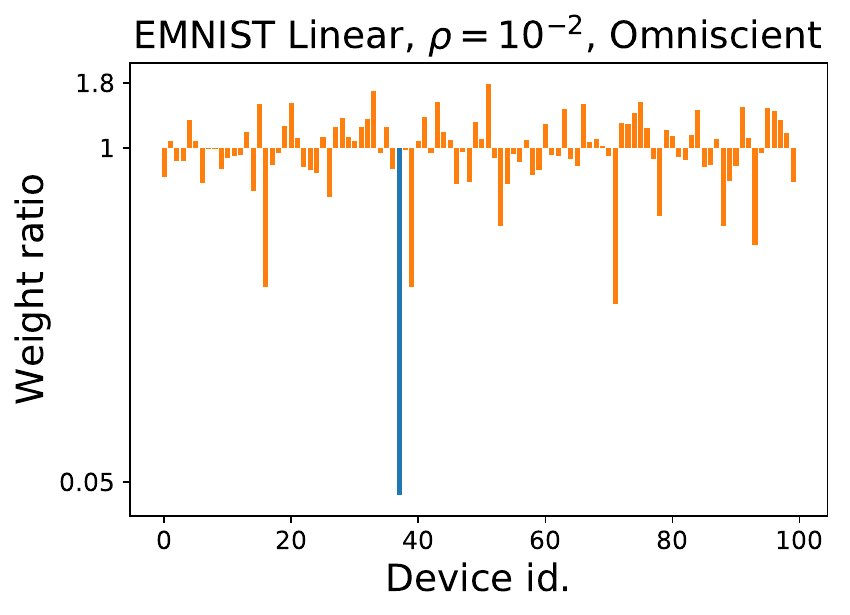}
        \includegraphics[width=0.19\linewidth]{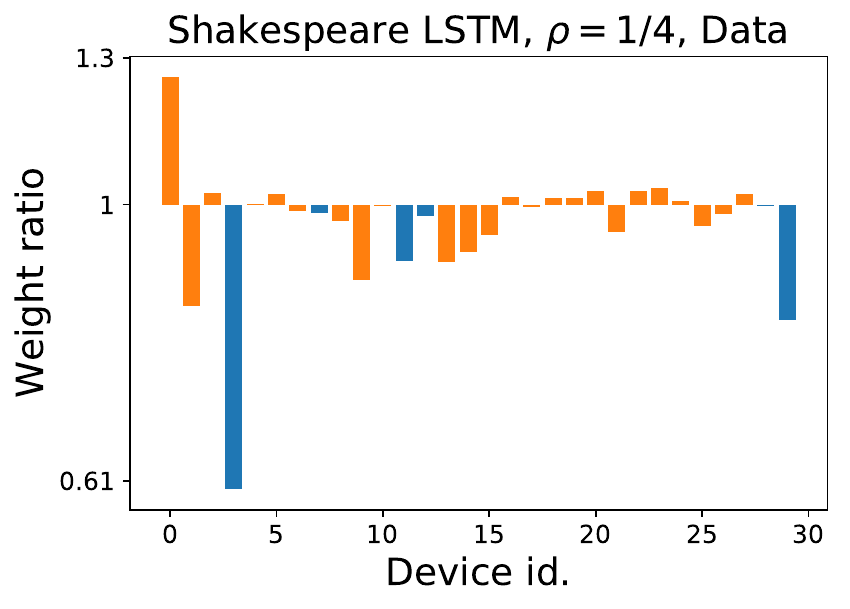}
        \includegraphics[width=0.19\linewidth]{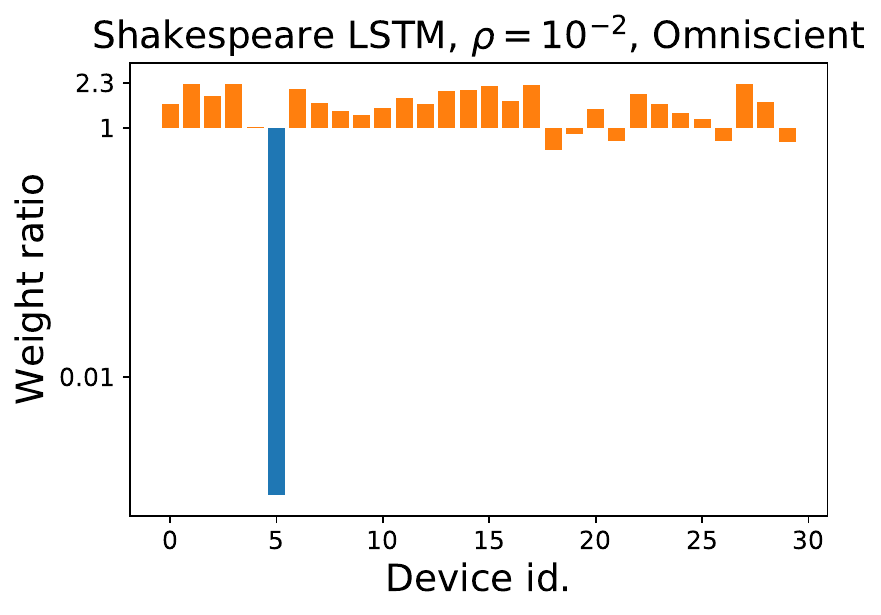}

     \caption{\small{Visualization of the re-weighting of points in the robust aggregate.}}\label{fig:expt:a:plot:gm_viz}
     \end{subfigure} 
     \caption{Performance of robust aggregation algorithms.} \label{fig:expt:a:gm_algos_main}
\end{figure*}

\subsection{Additional Simulation Results} \label{sec:a:additional-results}

\myparagraph{Effect of non-identical data distributions}
Here, we plot the analogue of \Cref{fig:expt:robustness_main} for the Sent140 dataset with data corruption in the setting where the dataset was split in an i.i.d. manner across devices. 
Recall that we had a small gap of 0.3\% between the performance of \rbfedavg and \fedavg in the setting of no corruption.
Consistent with the theory, this gap completely vanishes in the i.i.d. case, as shown in \Cref{fig:rfa:robustness-iid}.

\begin{figure}[t]
    \centering
    \includegraphics[width=0.5\linewidth]{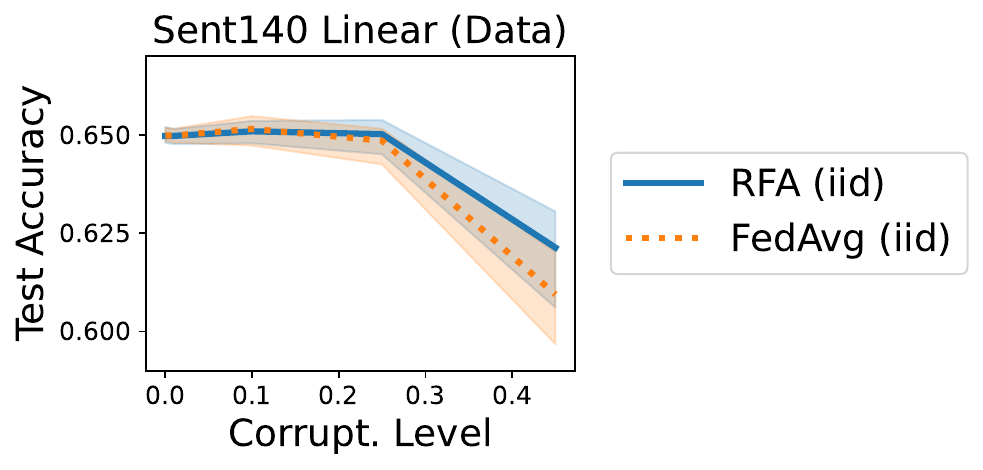}
    \caption{Robustness of \rbfedavg and \fedavg for an i.i.d. data split on Sent140 with data corruption.}
    \label{fig:rfa:robustness-iid}
\end{figure}

\myparagraph{Effect of iteration budget of smoothed Weiszfeld}
We study the effect of the iteration budget of the smoothed Weiszfeld algorithm in \rbfedavg. 
in \Cref{fig:expt:a:hyper-niter:feml}.
We observe that a low communication budget is faster in the regime of low corruption, while more iterations 
work better in the high corruption regime. We used a budget of 3 calls to the secure average oracle throughout to trade-off between
these two scenarios.

\begin{figure*}[t!]
    \centering %
        \adjincludegraphics[width=0.91\linewidth, trim={0 34pt 0 0},clip=true]{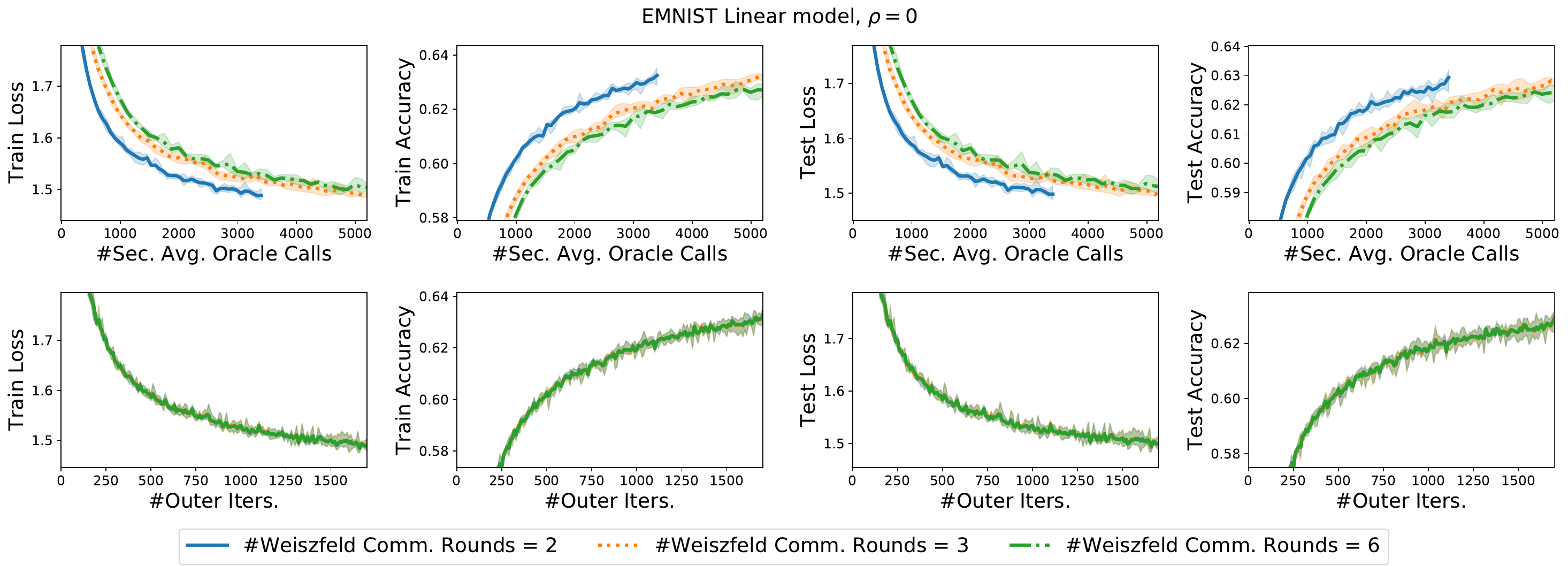}

        \adjincludegraphics[width=0.91\linewidth, trim={0 34pt 0 0},clip=true]{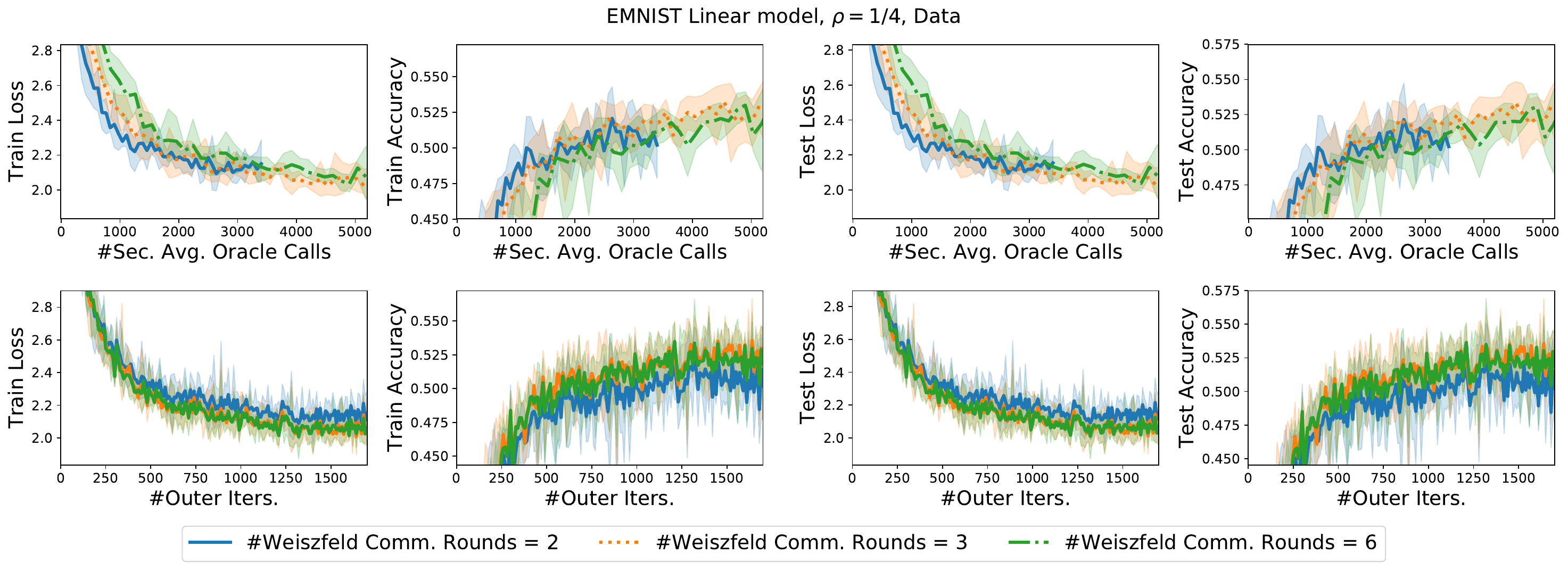}

        \adjincludegraphics[width=0.91\linewidth, trim={0 0 0 0},clip=true]{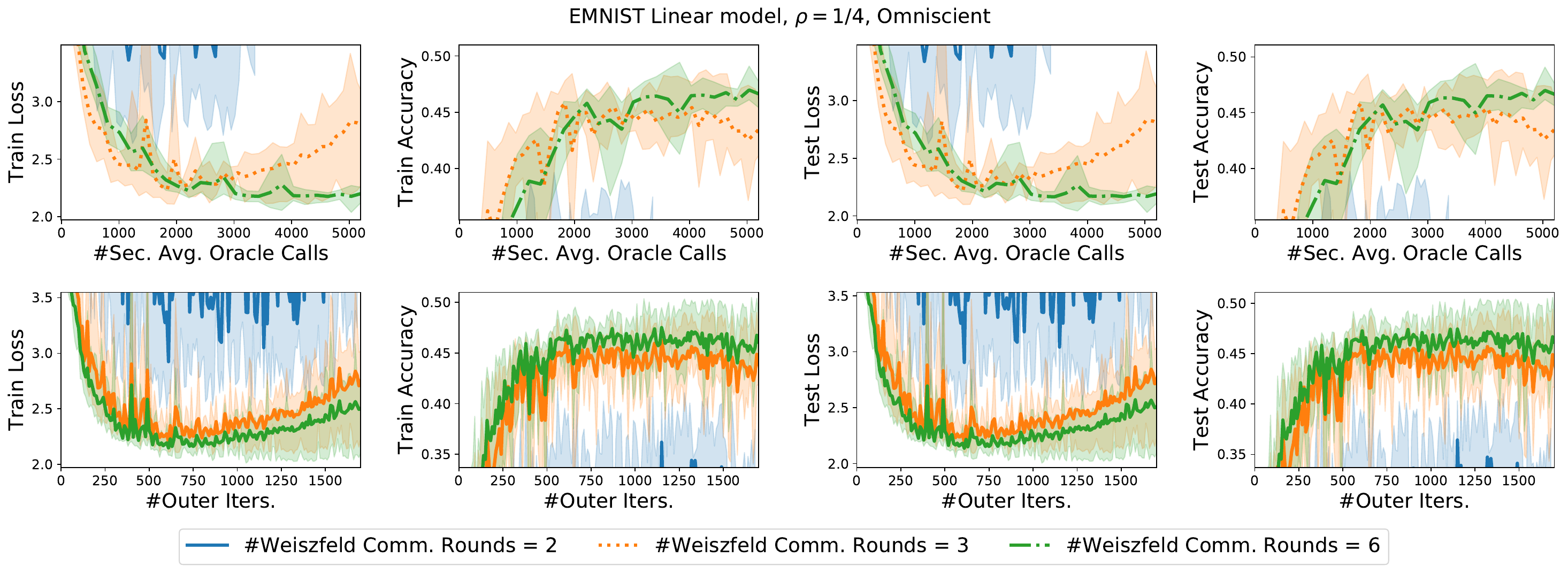}
     \caption{Hyperparameter study, effect of the maximum number of 
     	the communication budget on the smoothed Weiszfeld algorithm in \rbfedavg on the EMNIST dataset with a linear model.} 
     \label{fig:expt:a:hyper-niter:feml}
\end{figure*}

\myparagraph{Effect of number of devices per iteration round}
\Cref{fig:expt:a:hyper-ndev:feml} plots the performance of \rbfedavg against the number $m$ of devices
chosen per round. We observe the following: in the regime of low corruption, 
good performance is achieved by selecting 50 devices per round (5\%), where as 10 devices per round (1\%)
is not enough. On the other hand, in high corruption regimes, we see the benefit of choosing more devices per round, 
as a few runs with 10 or 50 devices per round with omniscient corruption at 25\% diverged.
	This is consistent with \Cref{thm:rfa:convergence}, which requires the number of devices per round to increase with the level of corruption (cf. Eq.~\eqref{eq:rfa:mainthm:ndev-per-round}). 

\begin{figure*}[t!]
    \centering %
        \adjincludegraphics[width=0.91\linewidth, trim={0 34pt 0 0},clip=true]{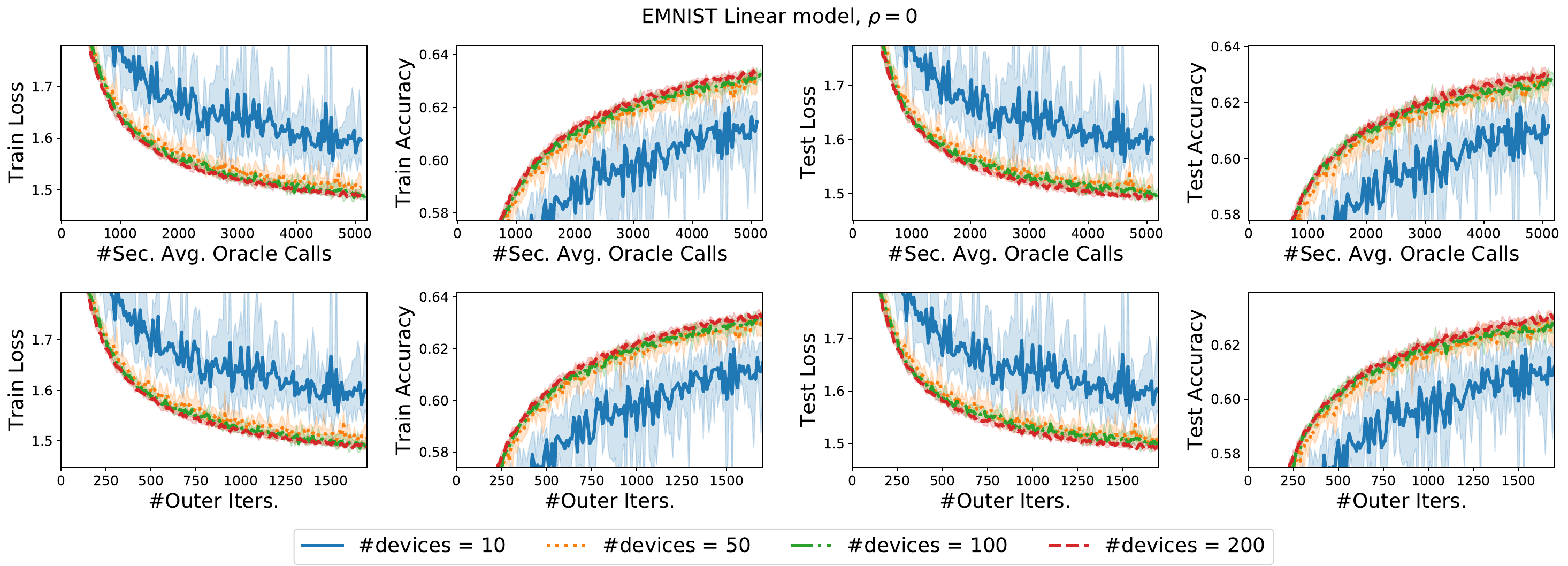}

        \adjincludegraphics[width=0.91\linewidth, trim={0 34pt 0 0},clip=true]{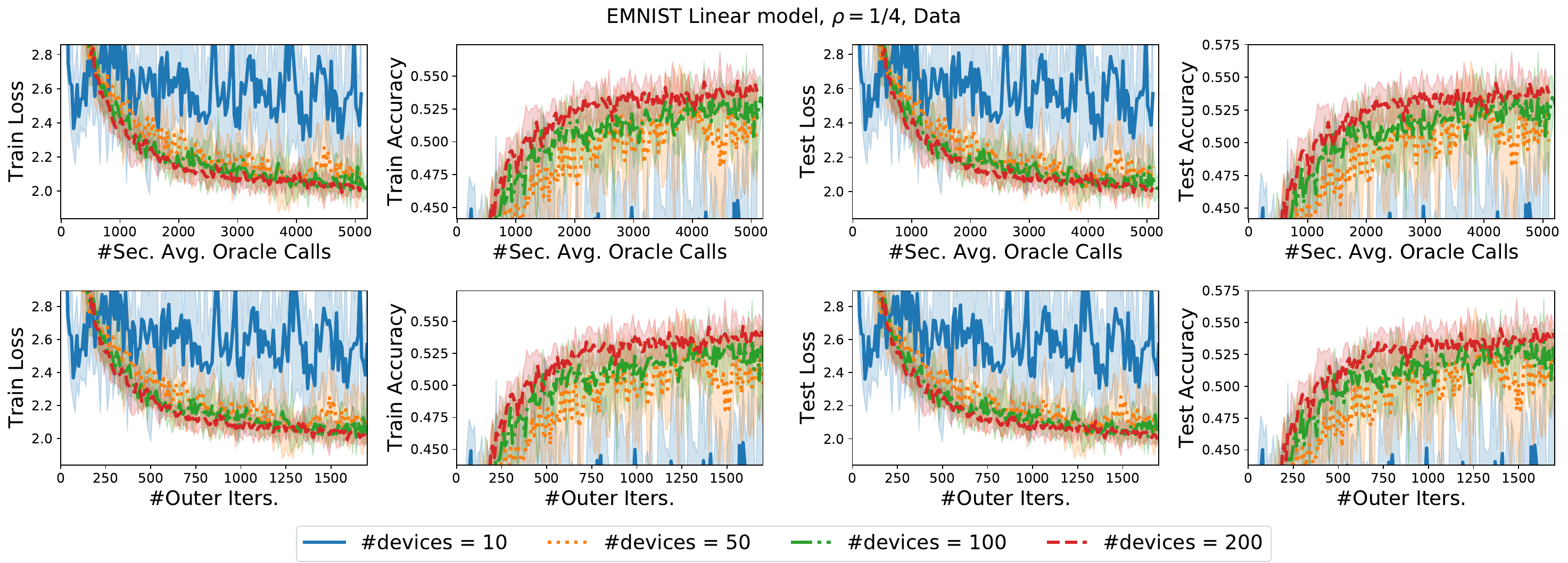}

        \adjincludegraphics[width=0.91\linewidth, trim={0 0 0 0},clip=true]{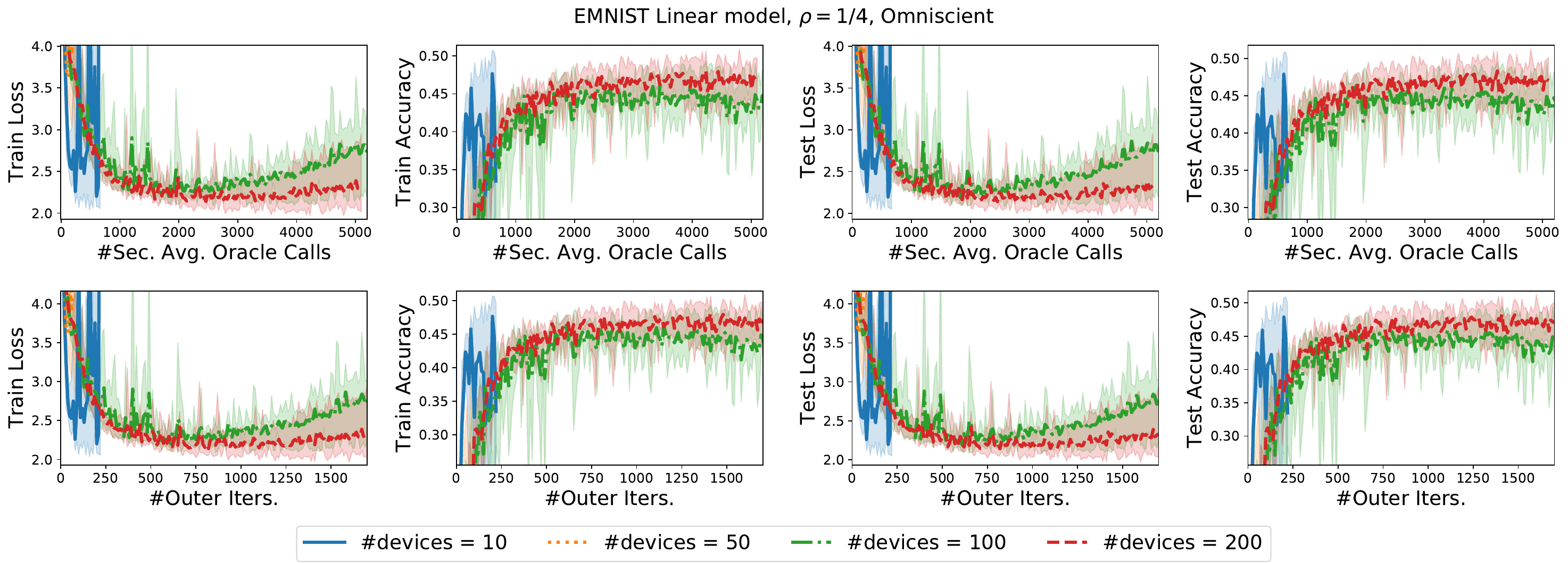}
     \caption{Hyperparameter study, effect of the number of selected client devices per round in \rbfedavg
     	 on the EMNIST dataset with a linear model.} 
     \label{fig:expt:a:hyper-ndev:feml}
\end{figure*}

\myparagraph{Effect of local computation}
\Cref{fig:expt:a:hyper-localepochs:sent} plots the performance of \fedavg and \rbfedavg versus the amount of local computation. We see that the performance is always within one standard deviation of each other irrespective of the amount of local computation. However, we also note that RFA with a single local epoch is obtains a slightly lower test accuracy in the no-corruption regime than using more local computation.

\begin{figure*}[t!]
    \centering %
     \begin{subfigure}[b]{\linewidth}
         \centering
        \adjincludegraphics[width=0.91\linewidth]{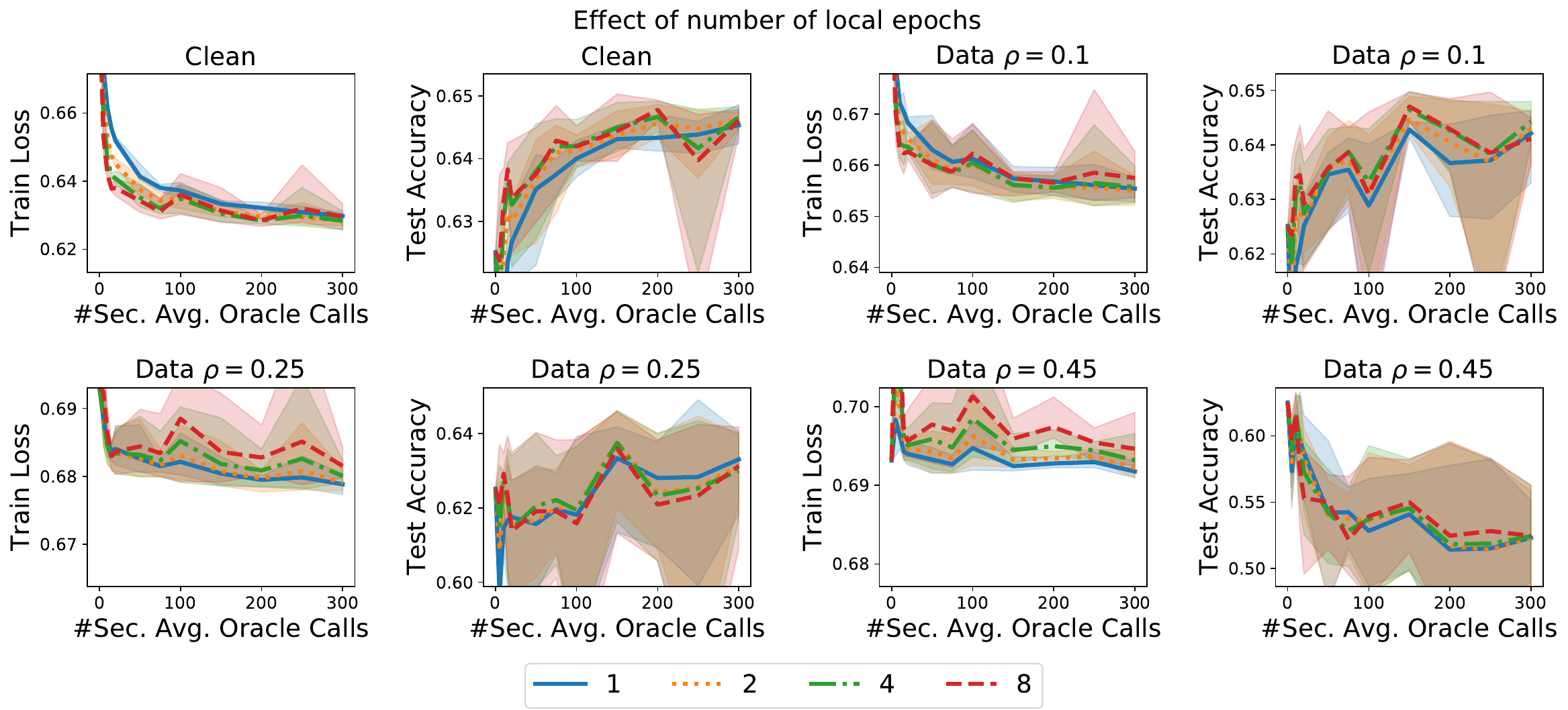}
     \caption{FedAvg.}
     \end{subfigure} 
        
      \begin{subfigure}[b]{\linewidth}
         \centering
        \adjincludegraphics[width=0.91\linewidth]{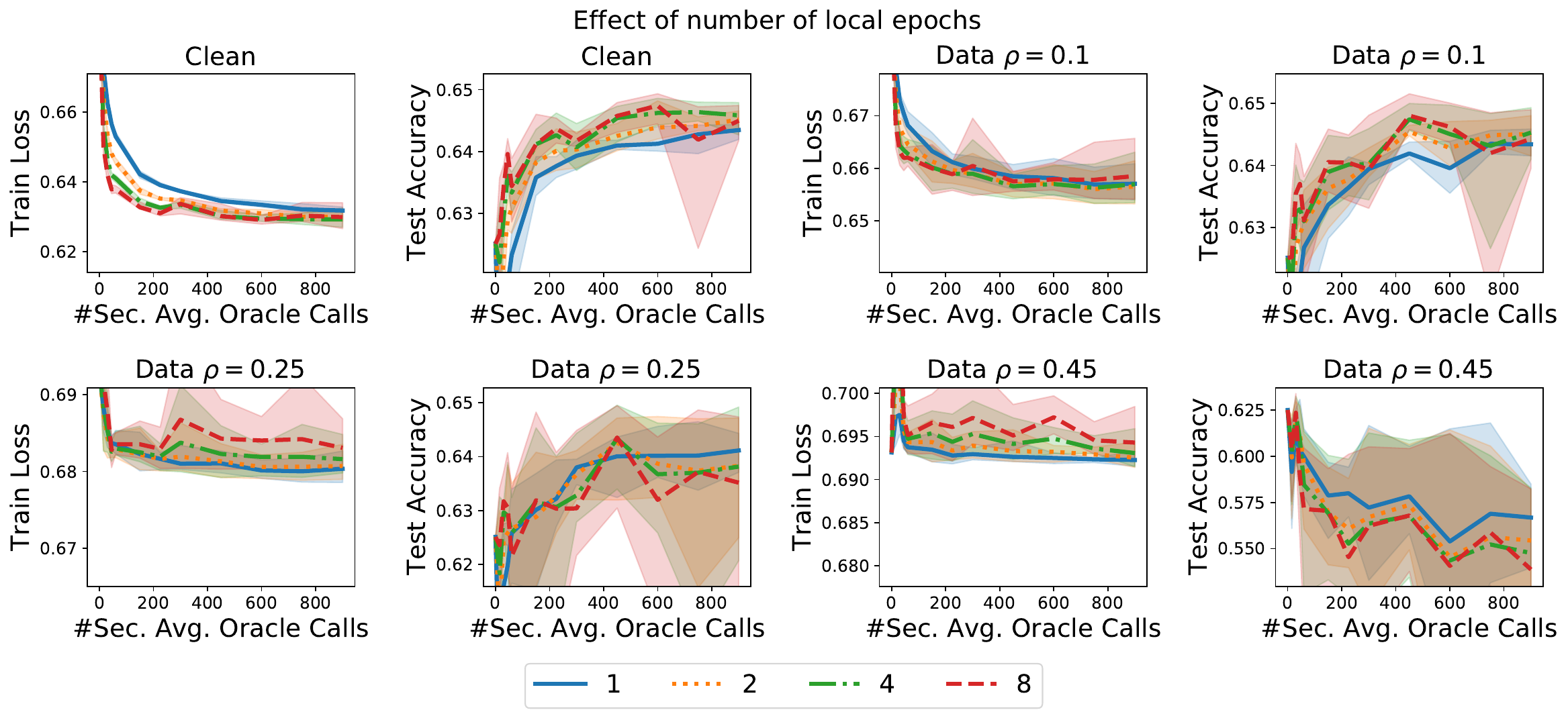}
     \caption{RFA.}
     \end{subfigure} 

     \caption{Effect of the number of epochs on FedAvg and RFA for the Sent140 dataset in the presence of data corruption.}
     \label{fig:expt:a:hyper-localepochs:sent}
\end{figure*}

\end{document}